\providecommand{\Appendix}{}
\renewcommand{\Appendix}[2][?]{%
	\refstepcounter{section}%
	\vspace{\parskip}%
	{\flushright\Large\bfseries\appendixname\ \thesection: #1}%
	\vspace{\baselineskip}%
}
\renewcommand{\appendix}{%
%	\newpage
	\renewcommand{\section}{\secdef\Appendix\Appendix}%
	\renewcommand{\thesection}{\Alph{section}}%
	\setcounter{section}{0}%
}
\newcommand{\qedhere}{\qquad\qquad\qed\qedsymbol} %%need to do something here (from Ashwin)
\newcommand{\tildeO}{\tilde{O}}
\renewcommand{\fakeItem}[1][$\bullet$]{\vspace{2mm}\noindent{\bf #1}\hspace{4mm}}
\title[Resourceful Contextual Bandits]{Resourceful Contextual Bandits%
\thanks{This is the full version of a paper in the {\em 26th Conf. on Learning Theory (COLT)}, 2014. The present version includes a correction for Theorem~\ref{thm:discretization}, a corollary for contextual dynamic pricing with discretization, and an updated discussion of related work. 
\vspace{2mm} \newline  The main results have been obtained while A. Badanidiyuru was a research intern at Microsoft Research New York City. A. Badanidiyuru was also partially supported by NSF grant AF-0910940 of Robert Kleinberg.}}
\begin{document}

\maketitle

% general math
\newcommand{\Otilde}{\tilde{O}}
\renewcommand{\eqref}[1]{Equation~(\ref{#1})}
\renewcommand{\Re}{\mathbb{R}}
\newcommand{\C}{\mC}
\newcommand{\D}{\mD}
\newcommand{\eos}{expected-outcomes tuple\xspace}
\newcommand{\eoss}{expected-outcomes tuples\xspace}
\newcommand{\support}{\mathtt{support}}

\newcommand{\Distr}[1]{\mathbf{\Delta}_{#1}} % distribution over #1

% problems and algorithms
\newcommand{\ALG}{\texttt{ALG}\xspace}
\newcommand{\BwK}{\texttt{BwK}\xspace} % problem: BwK
\newcommand{\cBwK}{\ensuremath{\mathtt{RCB}}\xspace} % problem: Contextual BwK
\newcommand{\kMAB}{\CBalance} % our main algorithm
\newcommand{\CBalance}{\ensuremath{\mathtt{MixtureElimination}}\xspace} % our main algorithm

% notation in problem definition
\newcommand{\arms}{Y}
\newcommand{\contexts}{X}
\newcommand{\DX}{\mD_{\mathtt{X}}} % context distribution
\newcommand{\empir}[1]{\widetilde{#1}}  % importance-based estimates
\newcommand{\Ave}[1]{<#1>}  % averages

% notation in the algorithm spec
\newcommand{\PotPerf}{\mathbf{\Delta}} % potentially perfect distributions
\newcommand{\F}{\PotPerf} % for compatibility with BwK
\newcommand{\noiseProb}{q_0} % probability that our algo picks arm u.a.r.

% notation
%\newcommand{\domain}{\mathcal{M}_{\mathtt{feas}}} % feasible latent structures
\newcommand{\OPT}{\mathtt{OPT}}
\newcommand{\LPOPT}[1][\mathtt{LP}]{\OPT_{#1}}
\newcommand{\FPi}{\mF_{\Pi}} % {all distributions over \Pi}

\newcommand{\Rew}{\mathtt{REW}} % realized reward
\newcommand{\LP}{\mathtt{LP}}

\newcommand{\stime}{\tau} % stopping time

\newcommand{\rad}{\mathtt{rad}} % confidence radius
\newcommand{\chernoffC}{C_{\mathtt{rad}}} % chernoff constant
\newcommand{\Conv}{\ensuremath{\mathtt{Conv}}}

\newcommand{\fbt}{\frac{B}{T}} %ratio of budget to rounds
\newcommand{\tfbt}{\tfrac{B}{T}}

\newenvironment{lparray}%
{\begin{array}{l@{\hspace{8mm}}l@{\hspace{8mm}}l}}%
{\end{array}}
\newlength{\lplb}
\setlength{\lplb}{3mm}

%%%%%%%% Ashwin's notation %%%%%%%%%%%%%%
%\newcommand{\x}[1]{{\D}_{#1}}
\newcommand{\x}{{z}}
\newcommand{\adv}[1]{\ensuremath{\mathcal{I}_{#1}}\xspace}
\newcommand{\alladv}{\mathcal{A}}
\newcommand{\reg}{\ensuremath{\mathtt{REG}}}
\newcommand{\Null}{\ensuremath{\mathtt{null}}}

\newcommand{\empirval}[1]{\uppercase{#1}} %denotes the empirical revenue / consumption till round t
\newcommand{\expval}[1]{\textbf #1} % denotes the expected revenue / consumption in round t conditioned on P_t
\newcommand{\avgval}[1]{\overline{#1}} %denotes the average value of \expval till round t

%%%%%%%%%%%%%%%%%%%%%%

\vspace{-15mm}
\noindent \emph{First version:} February 2014. \\
\emph{This version:} July 2015.
\vspace{5mm}

\begin{abstract}
We study contextual bandits with ancillary constraints on resources, which are common in real-world applications such as choosing ads or dynamic pricing of items.  We design the first algorithm for solving these problems that handles constrained resources other than time, and improves over a trivial reduction to the non-contextual case. We consider very general settings for both contextual bandits (arbitrary policy sets, \citet{policy_elim}) and bandits with resource constraints (bandits with knapsacks,  \citet{BwK-focs13}), and prove a regret guarantee with near-optimal statistical properties.
\end{abstract}

\section{Introduction}
\label{sec:intro}

\emph{Contextual bandits} is a machine learning framework in which an algorithm makes sequential decisions according to the following protocol: in each round, a context arrives, then the algorithm chooses an action from the fixed and known set of possible actions, and then the reward for this action is revealed; the reward may depend on the context, and can vary over time.  Contextual bandits is one of the prominent directions in the literature on online learning with exploration-exploitation tradeoff; many problems in this space are studied under the name \emph{multi-armed bandits}.

A canonical example of contextual bandit learning is choosing ads for a search engine.  Here, the goal is to choose the most profitable ad to display to a given user based on a search query and the available information about this user, and optimize the ad selection over time based on user feedback such as clicks. This description leaves out \emph{many} important details, one of which is that every ad is associated with a budget which constrains the maximum amount of revenue which that ad can generate. In fact, this issue is so important that in some formulations it is the primary problem \citep[e.g.,][]{spending_constraints}.

The optimal solution with budget constraints fundamentally differs from the optimal solution without constraints.  As an example, suppose that one ad has a high expected revenue but a small budget such that it can only be clicked on once.  Should this ad be used immediately? From all reasonable perspectives, the answer is ``no''.  From the user's or advertiser's perspective, we prefer that this ad be displayed for the user with the strongest interest rather than for a user who simply has more interest than in other options.  From a platform's viewpoint, it is better to have more ads in the system, since they effectively increases the price paid in a second price auction.  And from everyone's viewpoint, it is simply odd to burn out the budget of an ad as soon as it is available.  Instead, a small budget should be parceled out over time.

To address these issues, we consider a generalization of contextual bandits in which there are one or several \emph{resources} that are consumed by the algorithm. This formulation has many natural applications. \emph{Dynamic ad allocation} follows the ad example described above: here, resources correspond to advertisers' budgets. In \emph{dynamic pricing}, a store with a limited supply of items to sell can make customized offers to customers. In \emph{dynamic procurement}, a contractor with a batch of jobs and a limited budget can experiment with prices offered to the workers, e.g. workers in a crowdsourcing market. The above applications have been studied on its own, but never in models that combine contexts and limited resources.

%, or medicine may be handed out in a time of disaster based on both symptoms and supplies.

We obtain the first known algorithm for contextual bandits with resource constraints (other than time) that improves over a trivial reduction to the non-contextual version of the problem. As such, we merge two lines of work on multi-armed bandits: contextual bandits and bandits with resource constraints. While significant progress has been achieved in each of the two lines of work (in particular, optimal solutions have been worked out for very general models), the specific approaches break down when applied to our model.

\xhdr{Our model.}
We define {\bf\em resourceful contextual bandits} (in short: \cBwK), a common generalization of two general models for contextual bandits and bandits with resource constraints: respectively, contextual bandits with arbitrary policy sets \citep[e.g.,][]{Langford-nips07,policy_elim} and bandits with knapsacks \citep{BwK-focs13}.

There are several resources that are consumed by the algorithm, with a separate budget constraint on each. (Time is one of these resources, with deterministic consumption of 1 for every action.) In each round, the algorithm receives a reward and consumes some amount of each resource, in a manner that depends on the context and the chosen action, and may be randomized. We consider a stationary environment: in each round, the context and the mapping from actions to rewards and resource consumption is sampled independently from a fixed joint distribution, called the \emph{outcome distribution}. Rewards and consumption of various resources can be correlated in an arbitrary way. The algorithm stops as soon as any constraint is violated. Initially the algorithm is given no information about the outcome distribution (except the distribution of context arrivals). In particular, expected rewards and resource consumptions are not known.

An algorithm is given a finite set $\Pi$ of \emph{policies}:  mappings from contexts to actions. We compete against algorithms that must commit to some policy in $\Pi$ before each round. Our benchmark is a hypothetical algorithm that knows the outcome distribution and makes optimal decisions given this knowledge and the restriction to policies in $\Pi$. The benchmark's expected total reward is denoted $\OPT(\Pi)$. \emph{Regret} of an algorithm is defined as $\OPT(\Pi)$ minus the algorithm's expected total reward.

For normalization, per-round rewards and resource consumptions lie in $[0,1]$. We assume that the distribution of context arrivals is known to the algorithm.

\xhdr{Discussion of the model.}
Allowing stochastic resource consumptions and arbitrary correlations between per-round rewards and per-round resource consumptions is essential: this is why our model subsumes diverse applications such as the ones discussed above,%
\footnote{For example, in dynamic pricing the algorithm receives a reward and loses an item only if the item is sold.}
and many extensions thereof. Further discussion of the application domains can be found in Appendix~\ref{app:apps}.

Intuitively, the policy set $\Pi$ consists of all policies that can possibly be learned by a given learning method, such as linear estimation or decision trees. Restricting to $\Pi$ allows meaningful performance guarantees even if competing against \emph{all} possible policies is intractable. The latter is common in real-life applications, as the set of possible contexts can be very large.

Our benchmark can change policies from one round to another without restriction.  As we prove, this is essentially equivalent in power to the best fixed \emph{distribution} over policies.  However, the best fixed policy may perform substantially worse.\footnote{The expected total reward of the best fixed policy can be half as large as that of the best distribution. This holds for several different domains including dynamic pricing / procurement, even without contexts \citep{BwK-focs13}. Note that without resource constraints, the two benchmarks are equivalent.}

\OMIT{ %
Our benchmark can change policies from one round to another without restriction.  Because the environment is stationary and stochastic, this is essentially equivalent in power to the best fixed \emph{distribution} over policies.  Note that the best fixed policy may perform substantially worse.\footnote{Even without contexts, expected total reward can be half as small twice as  \citet{BwK-focs13} consider the non-contextual version, and prove that a distribution over actions can be at least twice as good, in terms of the expected total reward, as the best fixed action. Such examples exist for several domains, including dynamic pricing and dynamic procurement.}
(Without resource constraints, the three benchmarks are equivalent.)
} %%%%%

Our stopping condition corresponds to hard constraints: an advertiser cannot exceed his budget, a store cannot sell more items than it has in stock, etc. An alternative stopping condition is to restrict the algorithm to actions that cannot possibly violate any constraint if chosen in the current round, and stop if there is no such action. This alternative is essentially equivalent to the original version.%
\footnote{Each budget constraint changes by at most one, which does not affect our regret bounds in any significant way.}
Moreover, we can w.l.o.g. allow our benchmark to use this alternative.

\xhdr{Our contributions: main algorithm.}
We design an algorithm, called \CBalance, and prove the following guarantee on its regret.

\begin{theorem}\label{thm:intro-algo}
For all \cBwK problems with $K$ actions, $d$ resources, time horizon $T$, and for all policy sets $\Pi$. Algorithm \CBalance achieves expected total reward
\begin{align}\label{eq:thm:intro-algo}
\Rew \geq \OPT(\Pi) -  O\left( 1+\tfrac{1}{B}\,\OPT(\Pi)\right)\sqrt{dKT \; \log \left( dKT\,|\Pi| \right)},
\end{align}
where
% $\OPT(\Pi)$ is the best expected total reward that can be achieved
% using a mixed policy with support on $\Pi$ and
$B = \min_i B_i$ is the smallest of the resource constraints $B_1 \LDOTS B_d$.
\end{theorem}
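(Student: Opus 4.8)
The plan is to reduce the benchmark to a linear program, to track confidence intervals on the per-policy reward and resource-consumption rates by exploiting the known context distribution, and then to run an optimistic elimination scheme over \emph{mixtures} (distributions over $\Pi$); the delicate part will be controlling the stopping time.

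First I would set up the LP relaxation. For a mixture $D$ over policies $\pi\in\Pi$, let $r(D)$ and $c_j(D)$ denote the expected per-round reward and expected per-round consumption of resource $j$ under $D$. Consider the program that maximizes $T\cdot r(D)$ over mixtures subject to $c_j(D)\le B_j/T$ for each resource $j$, and let $\LPOPT$ be its value. Because the environment is stationary and stochastic, the best dynamic benchmark cannot beat the best fixed mixture by more than a lower-order term, so $\OPT(\Pi)\le \LPOPT + O(\sqrt{dKT\log(dKT|\Pi|)})$ and it suffices to compete against $\LPOPT$. This step is likely a prior lemma; I would invoke it rather than reprove it.

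Next I would construct the confidence intervals. Since the context distribution is known and each arriving context is observed, inverse-propensity weighting yields unbiased estimates $\widehat{r}(\pi)$ and $\widehat{c}_j(\pi)$ of all per-policy rates \emph{simultaneously} for every $\pi\in\Pi$, provided the played mixtures keep each action sampled with nontrivial probability. A Chernoff bound with a union over the $|\Pi|(d+1)$ estimated quantities and the $T$ rounds gives a confidence radius $\rad$ of order $\sqrt{\log(dKT|\Pi|)/n}$ after $n$ effective samples; this is the origin of the $\log(dKT|\Pi|)$ factor. The algorithm \CBalance then maintains a shrinking region of plausible mixtures: in each phase it solves an optimistic version of the LP over this region, subject to a minimum-exploration constraint that forces every action to be sampled enough to keep $\rad$ small (as in policy elimination). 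Under the \emph{clean event} that all intervals hold, the true optimal mixture stays feasible throughout, so the optimistic LP value never drops below $\LPOPT$, while the gap between the optimistic value and the realized reward is dominated by the summed confidence radii, which telescopes to $O(\sqrt{dKT\log(dKT|\Pi|)})$ and contributes the additive ``$1$'' term in the multiplier.

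The main obstacle — and the source of the $\tfrac{1}{B}\OPT(\Pi)$ factor — is bounding the stopping time $\stime$. The balanced mixture selection should keep the expected consumption of each resource $j$ close to $B_j\cdot(t/T)$, so that a martingale concentration argument gives $\stime \ge T - O(\sqrt{T}\cdot\tfrac{T}{B})$ with high probability; equivalently, estimation error in the consumption rates, amplified by the budget slack $B$, determines how many rounds are lost to early exhaustion. Each lost round costs reward at rate $r(D)\approx \OPT(\Pi)/T$, so multiplying the number of lost rounds by this rate produces the $\tfrac{1}{B}\OPT(\Pi)$ coefficient on the $\sqrt{dKT\log(dKT|\Pi|)}$ estimation error. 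Making this coupling between consumption error, budget slack, and reward rate precise — while simultaneously guaranteeing that no budget is overshot — is the crux of the argument, and I would expect it to require the most care.
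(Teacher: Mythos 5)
Your high-level skeleton does match the paper's: the LP relaxation of the benchmark (the paper's Lemma~\ref{lm:LPOPT-vs-OPT} in fact gives the exact inequality $\LPOPT \geq \OPT(\Pi)$, no lower-order slack needed), inverse-propensity estimates feeding a shrinking confidence region over \eoss, elimination over distributions on $\Pi$, a clean-event analysis, and the accounting ``rounds lost to budget exhaustion $\times$ per-round reward $\approx \OPT(\Pi)/T$,'' which is indeed how the paper handles the stopping time. But the heart of the argument --- where $\sqrt{dKT}$ comes from --- is missing. Your exploration step is ``an optimistic LP subject to a minimum-exploration constraint that forces every action to be sampled enough.'' A uniform lower bound of order $\noiseProb/K$ on action probabilities only bounds the per-round variance of each policy's IPS estimator by $K/\noiseProb$, which after optimizing $\noiseProb$ yields regret of order $T^{2/3}$--$T^{3/4}$, not $\sqrt{dKT}$. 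The paper instead plays a \emph{balanced} distribution $P_t$ in $\Conv(\F_t)$, the convex hull of \emph{all} potentially LP-perfect distributions, satisfying the per-policy condition $\E_{x\sim\DX}\left[1/P'_t(\pi(x)|x)\right] \leq 2K/\alpha_{\pi,t}$ where $\alpha_{\pi,t}=\max_{P\in\F_t}P(\pi)$; the existence of such a $P_t$ is itself a nontrivial minimax-plus-compactness argument (Lemma~\ref{lm:correctness-balance}). The resulting per-policy radii $\rad_t(K/\alpha_{\pi,t})$ are then aggregated using two structural facts about LP-perfect distributions --- $|\support(P)|\leq d$ and $P(\pi)\leq\alpha_{\pi,t}$ --- to get $\sum_\pi P(\pi)\,\rad_t(K/\alpha_{\pi,t}) \leq \rad_t(dK)$ (Lemma~\ref{lm:samedist-differentmu-revcost-convergence}); this aggregation is the precise origin of the $\sqrt{dK}$ factor, and nothing in your sketch produces it. Relatedly, your claim that ``the gap between the optimistic value and the realized reward is dominated by the summed confidence radii'' is unsupported: the distribution actually played is the exploration-constrained one, not the optimistic one, and bounding its suboptimality is exactly the paper's Lemma~\ref{lm:PhiT}, which shows that \emph{every} distribution in $\Conv(\F_t)$ has nearly the same LP value under every plausible \eos --- that is what makes aggressive exploration essentially free.

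Second, you attribute the $\tfrac{1}{B}\,\OPT(\Pi)$ factor entirely to rounds lost to early exhaustion. In the paper this factor also (and primarily) arises from the sensitivity of the LP value to estimation error: the LP value is the ratio $r(P,\mu)\,\min_i B/c_i(P,\mu)$, and near the optimum $c_i\approx B/T$, so a consumption error of $\rad$ perturbs the value by roughly $(B\,r/c_i^2)\,\rad \approx \tfrac{\OPT}{B}\,T\,\rad$. This is the content of Lemma~\ref{lm:lp-samedist-convergence} (the paper's longest computation) and of the quantity $\Psi_t=(2+\tfrac{1}{B}\LPOPT)\,T\,\rad_t(dK)$, which already appears in the per-round reward guarantee (Corollary~\ref{cor:lowerbound-empirrev}) before any stopping-time considerations. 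A per-round reward bound without this amplification, as in your sketch, is false; your final expression happens to look right only because the two effects have the same order. A further small ingredient you use implicitly: keeping consumption near $Bt/T$ relies on LP-perfect distributions having $c_i(P,\mu)\leq B/T$ and support at most $d$, whose existence requires the null action (Lemma~\ref{lm:perfect}).
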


\OMIT{ %%%
In Theorem~\ref{thm:intro-algo}, $\maxLP$ is, essentially, the best upper bound on $\OPT(\Pi)$ that is available to an algorithm a priori (see Section~\ref{sec:analysis} for precise definition).
We can trivially take $\maxLP \leq T$. However, in some settings it can be smaller: for instance, for dynamic pricing with limited supply one can take $\maxLP=B$.
} %%%%

This regret guarantee is optimal in several regimes. First, we achieve an optimal square-root ``scaling" of regret: if all constraints are scaled by the same parameter $\alpha>0$, then regret scales as $\sqrt{\alpha}$. Second, if $B=T$ (i.e., there are no constraints), we recover the optimal $\tilde{O}(\sqrt{KT})$ regret. Third, we achieve $\tilde{O}(\sqrt{KT})$ regret for the important regime when $\OPT(\Pi)$ and $B$ are at least a constant fraction of $T$. In fact, \cite{BwK-focs13} provide a complimentary $\Omega(\sqrt{KT})$ lower bound for this regime, which holds in a very strong sense: for any given tuple $(K,B,\OPT(\Pi),T)$.

The $\sqrt{\log |\Pi|}$ term in Theorem~\ref{thm:intro-algo} is unavoidable \citep{policy_elim}.
%it is needed simply to take a Union Bound over all policies.
The dependence on the \emph{minimum} of the constraints (rather than, say, the maximum or some weighted combination thereof) is also unavoidable \citep{BwK-focs13}.
For strongest results, one can rescale per-round rewards and per-round consumption of each resource so that they can be as high as 1.%
\footnote{E.g., if per-round consumption of some resource $i$ is deterministically at most $\tfrac{1}{10}$, then multiplying it by $10$ would effectively increase the corresponding budget $B_i$ by a factor of $10$, and hence can only improve the regret bound.}

Note that the regret bound in Theorem~\ref{thm:intro-algo} does not depend on the number of contexts, only on the number of policies in $\Pi$. In particular, it tolerates infinitely many contexts. On the other hand, if the set $X$ of contexts is not too large, we can also obtain a regret bound  with respect to the best policy among \emph{all} possible policies. Formally, take $\Pi = \{\text{all policies}\}$ and observe that
    $|\Pi| \leq K^{|X|}$.

Further, Theorem~\ref{thm:intro-algo} extends to policy sets $\Pi$ that consist of \emph{randomized policies}: mappings from contexts to distributions over actions. This may significantly reduce $|\Pi|$, as a given randomized policy might not be representable as a distribution over a small number of deterministic policies.%
\footnote{We can reduce \cBwK with randomized policies to \cBwK with deterministic policies simply by replacing each context $x$ with a vector $(a_{(x,\,\pi)}: \pi\in\Pi)$ such that $a_{(x,\,\pi)} = \pi(x)$, and encoding the randomization in policies through the randomization in the context arrivals. While this blows up the context space, it does not affect our regret bound.}
We assume deterministic policies in the rest of the paper.

\vspace{2mm}{\em Computational issues.} This paper is focused on proving the existence of solutions to this
problem, and the mathematical properties of such a solution.
The algorithm is specified as a mathematically
well-defined mapping from histories to actions; we do not provide a computationally efficient implementation.
%\footnote{A computationally inefficient implementation can be derived via convex programming; we omit the details from this version.}
% The algorithm itself is intractable since it is only a mathematically
% well-defined mapping from histories to actions.
Such ``information-theoretical'' results are common for the first solutions to new, broad problem formulations
~\citep[e.g.][]{LipschitzMAB-stoc08,DichotomyMAB-soda10,policy_elim}.
In particular, in the prior work for \cBwK without resource constraints there exists an algorithm with $\tilde{O}(\sqrt{KT})$ regret \citep{bandits-exp3,policy_elim}, but for all known \emph{computationally efficient} algorithms regret scales with $T$ as $T^{2/3}$ \citep{Langford-nips07}.

%%%%%%%%%%%%%%%%%%%%%%%%%%%%%%%%%%%%%%%%%%%%%%%%%
\xhdr{Our contributions: partial lower bound.}
We derive a partial lower bound: we prove that \cBwK is essentially hopeless for the regime
    $\OPT(\Pi)\leq B \leq \sqrt{KT}/2$.
The condition $\OPT(\Pi)\leq B$ is satisfied, for example, in dynamic pricing with limited supply.

\begin{theorem}\label{thm:intro-LB}
Any algorithm for \cBwK incurs regret $\Omega(\OPT(\Pi))$ in the worst case over all problem instances such that
    $\OPT(\Pi)\leq B \leq \sqrt{KT}/2$
(using the notation from Theorem~\ref{thm:intro-algo}).
\end{theorem}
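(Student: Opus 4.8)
The plan is to exhibit a hard family of \emph{non-contextual} \cBwK instances (a single context, so $\Pi$ is just the set of $K$ constant policies, one per action), sitting inside the stated regime, on which every algorithm fails to locate a hidden ``treasure'' action and therefore collects only a constant fraction of $\OPT(\Pi)$. Fix $K$ actions and a single resource besides time, both consumed deterministically by one unit per round; thus the budget $B$ caps the game at $B$ effective rounds, irrespective of the horizon $T$. Hide an action $j$ drawn uniformly from $[K]$: action $j$ returns reward $\mathrm{Bern}(\mu^\ast)$ with $\mu^\ast=\tfrac{K}{4B}$ (I take $B\ge K\ge 4$, the interesting case, so $\mu^\ast\le 1$), while every other action returns reward $0$ deterministically. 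Resource consumption is identical across actions and independent of the reward, hence carries no information. The optimal policy is the constant policy $\pi_j$, giving $\OPT(\Pi)=\mu^\ast B=K/4\le B$, and choosing $T=\lceil 4B^2/K\rceil$ places the instance exactly at the boundary $B\le\sqrt{KT}/2$. Denote this instance $\adv{j}$, and let $\adv{0}$ be the ``treasure-free'' instance in which all actions return $0$.

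Let $m_j$ be the (random) number of pulls of action $j$ during the $B$ rounds. Since only action $j$ ever yields positive reward, the algorithm's expected reward under $\adv{j}$ equals $\mu^\ast\,\mathbb{E}_{\adv{j}}[m_j]$, so its regret is $\mu^\ast\bigl(B-\mathbb{E}_{\adv{j}}[m_j]\bigr)$. It therefore suffices to show that, averaged over the uniformly random hidden action $j$, the algorithm pulls the treasure at most $B/2$ times in expectation.

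I would prove this by coupling $\adv{j}$ with $\adv{0}$ through shared reward coins: until the first time a pull of $j$ comes up heads (the first reward-$1$ event), the feedback is all-zero and the two runs are identical, so the pull counts coincide with those of $\adv{0}$. Writing $a_i=\mathbb{E}_{\adv{0}}[m_i]$ with $\sum_i a_i=B$, a union bound over the rounds in which the null run plays $j$ gives $\Pr_{\adv{j}}[\text{treasure ever found}]\le \mu^\ast a_j$, whence on ``not found'' the runs agree (so $m_j\le a_j$ in expectation) and on ``found'' trivially $m_j\le B$, yielding $\mathbb{E}_{\adv{j}}[m_j]\le a_j(1+\mu^\ast B)$. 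Summing over $j$ and using $\sum_j a_j=B$ together with $\mu^\ast B=K/4$,
\[
\frac1K\sum_{j=1}^K \mathbb{E}_{\adv{j}}[m_j]\;\le\;\frac{(1+\mu^\ast B)\,B}{K}\;=\;\Bigl(\tfrac1K+\tfrac14\Bigr)B\;\le\;\tfrac{B}{2}
\]
for $K\ge 4$. Hence the average regret over $j$ is at least $\mu^\ast\bigl(B-\tfrac{B}{2}\bigr)=\tfrac12\mu^\ast B=\tfrac12\OPT(\Pi)$, so some instance $\adv{j}$ forces regret $\Omega(\OPT(\Pi))$.

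The delicate point is the information-theoretic step: arguing that a budget-limited stream of only $B$ samples is too short to locate a treasure of frequency $\mu^\ast=\Theta(K/B)$, while losing merely a constant factor (rather than a $1/K$ factor) in the bound on $\mathbb{E}[m_j]$. The coupling with $\adv{0}$ is what keeps this clean, since before discovery the algorithm's behavior is instance-independent, so no concentration or KL machinery is needed beyond a union bound. Two modeling checks must also be dispatched: that the benchmark is exactly $\mu^\ast B$ (mixing zero-reward actions into a distribution over policies cannot help, so $\pi_j$ is optimal), and that inflating $T$ to $\Theta(B^2/K)$ in order to enter the regime $B\le\sqrt{KT}/2$ does not aid the algorithm---which holds precisely because deterministic unit consumption caps the number of informative rounds at $B$ regardless of how large $T$ is.
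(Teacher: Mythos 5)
Your construction is internally sound (the coupling with the null instance and the union bound are fine), but it proves a much weaker statement than the theorem, and the weakness is intrinsic to any non-contextual construction, so it cannot be repaired within your approach. In your family the hidden-treasure instances have $\OPT(\Pi)=\mu^*B=K/4$, so what you establish is that regret $\Omega(K)$ is unavoidable for \emph{some} instance with $\OPT(\Pi)=\Theta(K)$. That is exactly the regime where the known \emph{non-contextual} lower bound $\Omega(\sqrt{K\cdot\OPT})$ of \citet{BwK-focs13} already exceeds $\OPT$, and your bound is consistent with an algorithm achieving regret $\tilde{O}\left(\sqrt{K\,\OPT(\Pi)}+\OPT(\Pi)\sqrt{K/B}\right)$ on every instance --- which is $o(\OPT(\Pi))$ whenever $\OPT(\Pi)\gg K$ and $B\gg K$. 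The content of the theorem (made precise in Theorem~\ref{thm:LB-body}, and emphasized in the discussion after Theorem~\ref{thm:intro-LB}) is that in the \emph{contextual} setting the regime is hopeless even for instances with $\OPT(\Pi)=\Theta(B)$, i.e., with $\OPT(\Pi)$ as large as $\sqrt{KT}/2$; that statement is provably false for the non-contextual version when $B\gg K$, so any correct proof must exploit contexts. Your proposal also covers only $B\geq K\geq 4$, whereas Theorem~\ref{thm:LB-body} fixes an arbitrary tuple $(K,T,B)$ with $K\in[2,T]$ and $B\leq\sqrt{KT}/2$.

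The paper's proof hides the treasure \emph{across contexts} rather than by shrinking its reward probability. There are $T/B$ equally likely contexts, a free arm $a_1$ (zero resource cost) and costly arms $a_i$, $i\neq 1$ (unit cost); policy $\pi_{i,j}$ plays $a_i$ only on context $x_j$ and $a_1$ otherwise, and instance $\mathcal{F}_{i,j}$ pays reward $1$ only for the pair $(a_i,x_j)$. Then $\pi_{i,j}$ has per-round reward and consumption both equal to $B/T$, so $\OPT(\Pi)\approx B$, yet a pull of a costly arm is informative only for the context realized in that round: by Lemma~\ref{lm:lowerbound:supportinglemma} (with $n=T/B$ balls), detecting the treasure requires $\Omega(T/B)$ costly pulls on the relevant arm, hence order $(K-1)T/B$ costly pulls overall, while the budget permits only $B$. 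The condition $B\leq\sqrt{KT}/2$ is exactly what makes this impossible, and this step --- trading off the budget against the number of contexts --- is the key idea your single-context family has no analogue of.
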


The above lower bound is specific to the general (``contextual") case of \cBwK. In fact, it points to a stark difference between \cBwK and the non-contextual version: in the latter, $o(\OPT)$ regret is achievable as long as (for example) $B \geq \log T$ \citep{BwK-focs13}.

While Theorem~\ref{thm:intro-LB} is concerned with the regime of small $B$, note that in the ``opposite" regime of very large $B$, namely $B \gg \sqrt{KT}$, the regret achieved in
Theorem~\ref{thm:intro-algo} is quite low: it can be expressed as
    $\tilde{O}(\sqrt{KT}+\eps\cdot\OPT(\Pi))$,
where
    $B = \tfrac{1}{\eps} \sqrt{KT} $.

\xhdr{Our contributions: discretization.} In some applications of \cBwK, such as dynamic pricing and dynamic procurement, the action space is a continuous interval of prices. Theorem~\ref{thm:intro-algo} usefully applies whenever the policy set $\Pi$ is chosen so that the number of distinct actions used by policies in $\Pi$ is finite and small compared to $T$. (Because one can w.l.o.g. remove all other actions.) However, one also needs to handle problem instances in which the policies in $\Pi$ use prohibitively large or infinite number of actions.

We consider a paradigmatic example of \cBwK with an infinite action space: contextual dynamic pricing with a single product and prices in the $[0,1]$ interval. We derive a corollary of Theorem~\ref{thm:intro-algo} that applies to an arbitrary finite policy set $\Pi$. To the best of our knowledge, this is the first result on contextual dynamic pricing with infinite price set.

We use \emph{discretization}: we reduce the original problem to one in which actions (i.e., prices) are multiples of some carefully chosen $\eps>0$. Our approach proceeds as follows. For each $\eps>0$ and each policy $\pi$ let $\pi_\eps$ be a policy that takes the price computed by $\pi$ and rounds it down to the nearest multiple of \eps. We define the ``discretized" policy set
    $\Pi_\eps = \{ \pi_\eps: \pi\in \Pi \}$.
We use Theorem~\ref{thm:intro-algo} to obtain a regret bound relative to $\Pi_\eps$. Here the $\eps$ controls the tradeoff between the number of actions in that regret bound and the ``discretization error" of $\Pi_\eps$. Then we optimize the choice of $\eps$ to obtain the regret bound relative to $\Pi$. The technical difficulty here is to bound the discretization error in terms of $\eps$; for this purpose we assume Lipschitz demands.%
\footnote{Lipschitz demands is a common assumption in some of the prior work on (non-contextual) dynamic pricing, even with a single product \citep{BZ09,Wang-OR14}. However, the optimal algorithm for the single-product case \citep{DynPricing-ec12} does not need this assumption.}

\begin{theorem}\label{thm:discretization}
Consider contextual dynamic pricing with a single product and prices in $[0,1]$. Use standard notation: supply $B$, policy set $\Pi$ and time horizon $T$. Assume Lipschitz demands with Lipschitz constant $L$. Then algorithm \CBalance with discretized policy set $\Pi_\eps$ (defined as above) and $\eps$ suitably chosen as a function of $(B,T,L,|\Pi|)$ achieves expected total reward
\begin{align}\label{eq:discretization-final}
\Rew  \geq \OPT(\Pi) -
    O( T^{3/5}\,B^{1/5})
        \cdot \left(L \, \log \left(  T|\Pi| \right) \right)^{1/5}
\end{align}
\end{theorem}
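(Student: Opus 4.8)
The plan is to apply Theorem~\ref{thm:intro-algo} to the discretized instance with policy set $\Pi_\eps$ and then charge the two sources of error separately: the statistical regret of \CBalance against $\OPT(\Pi_\eps)$, and the ``discretization gap'' $\OPT(\Pi)-\OPT(\Pi_\eps)$. For the first source I would note that every policy in $\Pi_\eps$ uses only the $O(1/\eps)$ prices that are multiples of $\eps$, so the effective number of actions is $K=O(1/\eps)$; there are $d=2$ resources (time and the single supply); and $|\Pi_\eps|\le|\Pi|$. The observation that keeps the leading constant under control is that total revenue is at most the number of items sold times the maximum price, so $\OPT(\Pi_\eps)\le B$ and hence the prefactor $1+\tfrac1B\OPT(\Pi_\eps)$ in \eqref{eq:thm:intro-algo} is $O(1)$. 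Plugging these in gives $\Rew \ge \OPT(\Pi_\eps) - O\!\left(\sqrt{(T/\eps)\,\log(T|\Pi|/\eps)}\right)$.

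Second, I would bound the discretization gap. Take the benchmark's optimal (mixture of) policies for $\Pi$ and replace each $\pi$ by $\pi_\eps$, which lowers the offered price by at most $\eps$ in every round. Two things change: the expected per-round revenue drops, and, because demand $D$ is nonincreasing in price, the expected per-round supply consumption \emph{rises} by at most $L\eps$ (Lipschitzness). The increase can push the mapped mixture over the budget, so I would restore feasibility by slightly scaling the mixture down (equivalently, running against a marginally reduced working budget) and charge the corresponding revenue loss through the shadow price of supply. The crucial quantitative point is that the combined loss is \emph{second order} in $\eps$: conceptually, if one could randomize between the two grid prices straddling the constrained-optimal price $p^\ast$ so as to match expected consumption exactly, then Lipschitzness would make the revenue loss $O(L\eps^2)$ per round with no feasibility loss; measured through the Lagrangian $(p-\mu)D(p)$ at $p^\ast$ the first-order contribution cancels, leaving a quadratic term. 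Because $\Pi_\eps$ rounds deterministically down, consumption genuinely increases and the feasibility correction must be paid explicitly, and carrying the bound $\OPT(\Pi)\le B$ through this computation I expect a gap of order $TBL\,\eps^2$ (up to logarithmic factors).

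Finally, combining the two bounds gives $\OPT(\Pi)-\Rew \le O\!\left(\sqrt{(T/\eps)\,\log(T|\Pi|/\eps)}\right) + O(TBL\,\eps^2)$, and I would choose $\eps$ to balance the two terms; optimizing yields $\eps$ polynomially small and the claimed regret $O\!\left(T^{3/5}B^{1/5}(L\log(T|\Pi|))^{1/5}\right)$. I expect the discretization gap to be the main obstacle. A naive Lipschitz estimate only gives a first-order (linear in $\eps$) bound, which balances to the strictly worse exponent $T^{2/3}$; getting the second-order dependence, and in particular correctly accounting for the extra supply consumed when the price is rounded \emph{down}—exactly the feasibility issue that the published correction addresses—is the delicate part of the argument.
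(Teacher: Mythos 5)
Your first half matches the paper exactly: run \CBalance on $\Pi_\eps$ with $K=O(1/\eps)$ actions and $d=2$ resources, use $\LPOPT(\Pi_\eps)\leq B$ to make the prefactor in \eqref{eq:thm:intro-algo} an $O(1)$, and obtain the statistical term $O\bigl(\sqrt{(T/\eps)\log(T|\Pi|/\eps)}\bigr)$; the one framing correction is that the comparison must run through LP values (the paper invokes the $\LPOPT$-relative form of Theorem~\ref{thm:intro-algo} and bounds $\LPOPT(\Pi)-\LPOPT(\Pi_\eps)$, since the dynamic benchmarks $\OPT(\Pi)$ and $\OPT(\Pi_\eps)$ are awkward to compare head-on). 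The genuine gap is your discretization bound $O(TBL\eps^2)$. The second-order/Lagrangian cancellation requires differentiability at the constrained optimum; Lipschitz demands allow a kink there, and then the loss is first order in $\eps$. Concretely (single context): let $S(p)=B/T$ for $p\leq p^*$ and $S(p)=\max\bigl(0,\,B/T-L(p-p^*)\bigr)$ for $p>p^*$, with $p^*=1/2$ lying halfway between two grid points and $L\geq\max(1,\,8B/T)$. Every grid price has sales rate at most $B/T$, so every mixture of grid prices is feasible, and the best such mixture is the point mass on $p^*-\eps/2$; hence the gap is exactly $\eps B/2$, which exceeds $TBL\eps^2$ whenever $\eps<1/(2LT)$. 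So the quadratic bound is false as a statement about all $\eps$, and your own repair for the rounding-down issue is first order as well: the extra consumption is at most $\eps L$ per round, i.e.\ $\eps LT$ over the horizon, and the shadow price of supply is at most $1$ (prices are at most $1$), so restoring feasibility costs $O(\eps LT)$, not $O(TBL\eps^2)$. That your final exponents are nevertheless correct is an algebraic coincidence: at the balancing point $\eps=\Theta\bigl(T^{-1/5}(BL)^{-2/5}\bigr)$, the unproven quadratic bound happens to equal the paper's (correct) bound, and both curves cross $\sqrt{T/\eps}$ there.

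What the paper proves instead is first order in $\eps$, but with a shape that still balances to \eqref{eq:discretization-final}. It introduces the intermediate class $\Phi_\delta=\{\pi:\,S(\pi)\geq\delta\}$ and shows (i) $\LPOPT(\Pi)-\LPOPT(\Phi_\delta)\leq\delta T$ (Lemma~\ref{lm:discretization-2}: by Lemma~\ref{lm:perfect} an LP-perfect distribution mixes at most two policies, and a policy with sales rate below $\delta$ contributes at most $\delta T$ to the LP value), and (ii) $\LPOPT(\Phi_\delta)-\LPOPT(\Pi_\eps)\leq 2\eps(1+L\delta^{-2})B$ (Lemma~\ref{lm:discretization-1}: rounding down never decreases the sales rate, and degrades the reward-to-consumption ratio by at most $\eps(1+L\delta^{-2})$, the place where the lower bound $S(\pi)\geq\delta$ is needed). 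Choosing $\delta=(2\eps BL/T)^{1/3}$ gives a gap of $O\bigl((\eps BL)^{1/3}T^{2/3}+\eps B\bigr)$, and this is what gets balanced against the statistical term. Note also that your premise that a first-order (linear-in-$\eps$) estimate must ``balance to the strictly worse exponent $T^{2/3}$'' is inaccurate: the naive bound $O(\eps LT)$, which follows directly from LP-perfectness ($S(P)\leq B/T$ and $\LPOPT\leq B$), balances to $O\bigl(T^{2/3}(L\log(T|\Pi|))^{1/3}\bigr)$, and up to logarithmic factors this is worse than \eqref{eq:discretization-final} only when $B^3=O(TL^2)$; the point of the paper's $\delta$-tradeoff is to obtain the $B^{1/5}$ dependence for small supply, not to escape first order in $\eps$.
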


This regret bound is most interesting for the important regime $B\geq \Omega(T)$ (studied, for example, in \citet{BZ09,BesbesZeevi-OR11,Wang-OR14}). Then regret is
    $O(T^{4/5})\,\left(L\,\log \left(  T|\Pi| \right) \right)^{1/5}$.

It is unclear whether this regret bound is optimal. When specialized to the non-contextual case, it is not optimal. The optimal regret is then $O(B^{2/3})$, even for an arbitrary budget $B$ and even  without the Lipscitz assumption \citep{DynPricing-ec12}. Extending the discretization approach beyond dynamic pricing with a single product is problematic even without contexts, see Section~\ref{sec:conclusions} for further discussion.

%%%%%%%%%%%%%%%%%%%%%%%%%%%
\xhdr{Discussion: main challenges in \cBwK.}
The central issue in bandit problems is the tradeoff between \emph{exploration}: acquiring new information, and \emph{exploitation}: making seemingly optimal decisions based on this information. In this paper, we resolve the explore-exploit tradeoff in the presence of contexts and resource constraints. Each of the three components (explore-exploit tradeoff, contexts, and resource constraints) presents its own challenges, and we need to deal with all these challenges simultaneously. Below we describe these individual challenges one by one.

A well-known naive solution for explore-exploit tradeoff, which we call \emph{pre-determined exploration}, decides in advance to allocate some rounds to exploration, and the remaining rounds to exploitation. The decisions in the exploration rounds do not depend on the observations, whereas the observations from the exploitation rounds do not impact future decisions. While this approach is simple and broadly applicable, it is typically inferior to more advanced solutions based on \emph{adaptive exploration} -- adapting the exploration schedule to the observations, so that many or all rounds serve both exploration and exploitation.%
\footnote{For example, the difference in regret between pre-determined and adaptive exploration is $\tildeO(\sqrt{KT})$ vs. $O(K\log T)$ for stochastic $K$-armed bandits, and $\tildeO(T^{3/4})$ vs. $\tildeO(B^{2/3})$ for dynamic pricing with limited supply.}
Thus, the general challenge in most explore-exploit settings is to design an appropriate adaptive exploration algorithm.

Resource constraints are difficult to handle for the following three reasons. First, an algorithm's ability to exploit is constrained by resource consumption for the purpose of exploration; the latter is stochastic and therefore difficult to predict in advance. Second, the expected per-round reward is no longer the right objective to optimize, as the action with the highest expected per-round reward could consume too much resources. Instead, one needs to take into account the expected reward over the entire time horizon. Third, with more than one constrained resource (incl. time) the best fixed policy is no longer the right benchmark; instead, the algorithm should search over \emph{distributions} over policies, which is a much larger search space.

In contextual bandit problems, an algorithm effectively chooses a policy $\pi\in \Pi$ in each round. Naively, this can be reduced to a non-contextual bandit problem in which ``actions" correspond to policies. In particular, the main results in \cite{BwK-focs13} directly apply to this reduced problem. However, the action space in the reduced problem has size $|\Pi|$; accordingly, regret scales as $\sqrt{|\Pi|}$ in the worst case. The challenge in contextual bandits is to reduce this dependence. In particular, note that we replace $\sqrt{|\Pi|}$ with $\log |\Pi|$, an exponential improvement.

\xhdr{Organization of the paper.}
We start with a survey of related work and preliminaries (Sections~\ref{sec:related}-\ref{sec:prelims}). We define the main algorithm, prove its correctness, and describe the key steps of regret analysis in Sections~\ref{sec:algorithm}-\ref{sec:analysis}. The remaining details of the regret analysis are in Section~\ref{app:regret-analysis}. We prove the lower bound in Section~\ref{sec:LB}.
%and derive a discretization result
%for contextual dynamic pricing (Section~\ref{app:discretization}).
We conclude with an extensive discussion of the state-of-art for \cBwK and the directions for further work (Sections~\ref{sec:conclusions}). Appendix~\ref{app:apps} contains a discussion of the main application domains for \cBwK.

%%%%%%%%%%%%%%%%%%%%%%%%%%%
\section{Related work}
\label{sec:related}

Multi-armed bandits have been studied since \cite{Thompson-1933} in Operations Research, Economics, and several branches of Computer Science, see \citep{Gittins-book11,Bubeck-survey12} for background. This paper unifies two active lines of work on bandits: contextual bandits and bandits with resource constraints.

Contextual Bandits~\citep{Auer-focs00,Langford-nips07} add
contextual side information which can be used in prediction. This is
a necessary complexity for virtually all applications of bandits since
it is far more common to have relevant contextual side information
than no such information. Several versions have been studied in the literature, see \citep{Bubeck-survey12,policy_elim,contextualMAB-colt11} for a discussion.
For contextual bandits with policy sets, there exist two broad families of solutions, based on multiplicative weight
algorithms~\citep{bandits-exp3, exp4_tighter, exp4p} or confidence
intervals~\citep{policy_elim, regressor_elim}.  We rework the confidence interval approach, incorporating and extending the ideas from the work on resource-constrained bandits~\citep{BwK-focs13}.

Prior work on resource-constrained bandits includes dynamic pricing with limited supply \citep{DynPricing-ec12,BZ09,BesbesZeevi-or12}, dynamic procurement on a budget \citep{DynProcurement-ec12,Krause-www13,Crowdsourcing-PositionPaper13}, dynamic ad allocation with advertisers' budgets \citep{AdsWithBudgets-arxiv13}, and bandits with a single deterministic resource \citep{GuhaM-icalp09,GuptaKMR-focs11,TranThanh-aaai10,TranThanh-aaai12}. \citet{BwK-focs13} define and optimally solve a common generalization of all these settings: the non-contextual version of \cBwK. An extensive discussion of these and other applications, including applications to repeated auctions and network routing, can be found in \citep{BwK-focs13}.

To the best of our knowledge, the only prior work that explicitly considered contextual bandits with resource constraints is \citep{Gyorgy-ijcai07}. This paper considers a somewhat incomparable setting with arbitrary policy sets and a single constrained resource: time, whose consumption is stochastic and depends on the context and the chosen action. \citet{Gyorgy-ijcai07} design an algorithm whose regret scales $O(f(t)\,\log t)$ for any time $t$, where $f$ is any positive diverging function and the constant in $O()$ depends on the problem instance and on $f$.

Our setting can be seen as a contextual bandit version of \emph{stochastic packing} \citep[e.g.][]{DevanurH-ec09,DevanurJSW-ec11}.
The difference is in the feedback structure: in stochastic packing, full information about each round is revealed before that round.

While we approximate our benchmark $\OPT(\Pi)$ with a linear program optimum, our algorithm and analysis are conceptually very different from the vast literature on approximately solving linear programs, and in particular from LP-based work on bandit problems such as \citet{GuhaMunagala-jacm10}.

\xhdr{Concurrent and independent work.}
\citet{AgrawalDevanur-ec14} study a model for contextual bandits with resource constraints that is incomparable with ours. The model for contexts is more restrictive: contexts do not change over time,%
\footnote{\citet{AgrawalDevanur-ec14} also claimed an extension to contexts that change over time, which has subsequently been retracted (see Footnote 1 in \citet{AgrawalDevanur-15}). This extension constitutes the main result in \citet{AgrawalDevanur-15} (which is subsequent work relative to the present paper).}
and expected outcome of each round is linear in the context. Whereas the model for rewards and resource constraints is more general: the total reward can be an arbitrary concave function of the time-averaged outcome vector $\bar{v}$, and the resource constraint states that $\bar{v}$ must belong to a given convex set (which can be arbitrary).

\section{Problem formulation and preliminaries}
\label{sec:prelims}

We consider an online setting where in each round an algorithm observes a context $x$ from a possibly infinite known set of possible contexts $X$ and chooses an action $a$ from a finite known set $A$. The world then specifies a reward $r \in [0,1]$ and the resource consumption. There are $d$ resources that can be consumed, and the resource consumption is specified by numbers $c_{i} \in [0,1]$ for each resource $i$. Thus, the world specifies the vector $(r; c_1 \LDOTS c_d)$, which we call the \emph{outcome vector}; this vector can depend on the the chosen action $a$ and the round.  There is a known hard constraint $B_i\in \Re_+$ on the consumption of each resource $i$; we call it a \emph{budget} for resource $i$.  The algorithm stops at the earliest time $\tau$ when any budget constraint is violated; its total reward is the sum of the rewards in all rounds strictly preceding $\tau$.  The goal of the algorithm is to maximize the expected total reward.

We are only interested in regret at a specific time $T$ (\emph{time horizon}) which is known to the algorithm. Formally, we model time as a specific resource with budget $T$ and a deterministic consumption of $1$ for every action. So $d\geq 2$ is the number of all resources, \emph{including time}. W.l.o.g., $B_i \leq T$ for every resource $i$.

We assume that an algorithm can choose to skip a round without doing anything. Formally, we posit a \emph{null action}: an action with 0 reward and 0 consumption of all resources except the time. This is for technical convenience, so as to enable Lemma~\ref{lm:perfect}.

\xhdr{Stochastic assumptions.} We assume that there exists an unknown
distribution $D(x,\vec{r},\vec{c_{i}})$, called the \emph{outcome distribution}, from which each round's
observations are created independently and identically, where the
vectors are indexed by individual actions.  In particular, context $x$ is
drawn from the marginal distribution $\DX(\cdot)$, and the observed reward
and resource consumptions for each action $a$ are drawn from the conditional distribution
$D(\vec{r}_a,\vec{c_{i}}_a|x)$.  We assume that the
marginal distribution over contexts $D(x)$ is known.

% AS: removing because it does not seem to apply to this paper.
\OMIT{Prior work~\cite{policy_elim} has shown that this dependence is removable
with a more complex algorithm.}

\xhdr{Policy sets and the benchmark.}
An algorithm is given a finite set $\Pi$ of \emph{policies} -- mappings from contexts to actions. Our benchmark is a hypothetical algorithm that knows the outcome distribution $D$, and makes optimal decisions given this knowledge. The benchmark is restricted to policies in $\Pi$: before each round, it must commit to some policy $\pi\in \Pi$, and then
choose action $\pi(x)$ upon arrival of any given context $x$. The expected total reward of the benchmark is denoted $\OPT(\Pi)$. \emph{Regret} of an algorithm is $\OPT(\Pi)$ minus the algorithm's expected total reward.

\xhdr{Uniform budgets.} We say that the budgets are \emph{uniform} if $B_i=B$ for each resource $i$. Any problem instance can be reduced to one with uniform budgets by dividing all consumption values for every resource $i$ by $B_i/B$, where $B=\min_i B_i$. (That is tantamount to changing the units in which we measure consumption of resource $i$.) We assume uniform budgets $B$ from here on.

%We define $\OPT = \OPT(\Pi)$ as the supremum value among all distributions over $\Pi$.

\xhdr{Notation.}
Let $r(\pi) = E_{(x,\vec{r})\sim D}[\vec{r}_{\pi(x)}]$ and $c_i(\pi) = E_{(x,\vec{c_i})\sim D}[\vec{c_i}_{\pi(x)}]$ be the expected per-round reward and the expected per-round consumption of resource $i$ for policy $\pi$.  Similary, define $r(P) = E_{\pi\sim P} [r(\pi)]$ and $c_i(P) = E_{\pi\sim P} [c_i(\pi)]$ as the natural extension to a distribution $P$ over policies.

The tuple
    $\mu = \left(\, (r(\pi); c_1(\pi) \LDOTS c_d(\pi)):\; \pi\in \Pi \,\right)$
is called the \emph{\eos}.
%The \cBwK domain induces a set of feasible latent structures, which we denote $\domain$.

For a distribution $P$ over policies, let $P(\pi)$ is the probability that $P$ places over policy $\pi$. By a slight abuse of notation, let
    $P(a|x)=\sum_{\pi(x)=a} P(\pi)$
be the probability that $P$ places on action $a$ given context $x$. Thus, each context $x$ induces a distribution $P(\cdot|x)$ over actions.

\OMIT{
\begin{align}\label{eq:conf-rad-defn}
 \rad_t(\nu) = \sqrt{\chernoffC\, \nu /t}.
\end{align}
\begin{corollary}
In the setting of Lemma~\ref{lm:martingale-convergence}, assume $n\leq T$. Then
\begin{align*}
\textstyle
\Pr \left[ \tfrac{1}{n}\,\sum_{t=1}^n X_t\leq
    \rad(\frac{V_n}{bn},\, \frac{n}{b}) \right]
    \geq 1-(dT\, |\Pi|)^{-2}.
\end{align*}
\end{corollary}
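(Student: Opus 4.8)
The plan is to obtain this corollary as a direct instantiation of the martingale-concentration bound in Lemma~\ref{lm:martingale-convergence}, specialized to the confidence level $\delta = (dT\,|\Pi|)^{-2}$ and rewritten in terms of the confidence radius $\rad$. The only genuine work is (i) choosing $\delta$ so that the failure probability comes out to exactly $(dT\,|\Pi|)^{-2}$, and (ii) algebraically unpacking the two-argument expression $\rad(\tfrac{V_n}{bn},\tfrac{n}{b})$ so that it matches the deviation term supplied by the lemma. Throughout I read the subscript in $\rad_t(\nu)$ as its second argument, i.e. $\rad(\nu,t)=\sqrt{\chernoffC\,\nu/t}$.

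First I would recall that Lemma~\ref{lm:martingale-convergence} is a variance-adaptive (Freedman/Bernstein-type) tail bound: for a sequence $(X_t)$ adapted to a filtration $(\mathcal{F}_t)$ with $X_t \le b$ and predictable quadratic variation $V_n = \sum_{t=1}^n \mathbb{E}[X_t^2 \mid \mathcal{F}_{t-1}]$, it controls $\sum_{t\le n} X_t$ by a quantity of order $\sqrt{\chernoffC\,V_n} + \chernoffC\,b$, where $\chernoffC$ is the logarithmic factor tied to the failure probability. Setting the confidence level to $\delta = (dT\,|\Pi|)^{-2}$ makes $\log(1/\delta) = 2\log(dT\,|\Pi|)$, which is precisely the quantity baked into $\chernoffC$; this is the step that produces the stated $1 - (dT\,|\Pi|)^{-2}$ probability.

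Next I would verify the algebra that turns the lemma's bound into the $\rad$ form. Substituting $\nu = \tfrac{V_n}{bn}$ and $t = \tfrac{n}{b}$ into the definition gives
\[
 \rad\!\left(\tfrac{V_n}{bn},\, \tfrac{n}{b}\right)
 = \sqrt{\chernoffC \cdot \tfrac{V_n/(bn)}{n/b}}
 = \frac{\sqrt{\chernoffC\,V_n}}{n},
\]
so $\rad(\tfrac{V_n}{bn},\tfrac{n}{b})$ is exactly $\tfrac1n$ times the deviation term $\sqrt{\chernoffC\,V_n}$ delivered by the lemma (the additive range term $\chernoffC\,b/n$ being absorbed, either because it is dominated in the relevant regime or because it is carried inside the full definition of $\rad$). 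Dividing the lemma's high-probability inequality $\sum_{t\le n} X_t \le \sqrt{\chernoffC\,V_n}$ through by $n$ then yields $\tfrac1n\sum_{t\le n} X_t \le \rad(\tfrac{V_n}{bn},\tfrac{n}{b})$, which is the claim.

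The step I expect to be the main obstacle is justifying the bound uniformly over the \emph{random} realized variance $V_n$: because $V_n$ appears inside the confidence radius rather than as a fixed threshold, a plain Freedman inequality (which fixes the variance budget in advance) does not suffice. This is handled by the peeling/stitching argument inside Lemma~\ref{lm:martingale-convergence}, partitioning the range of $V_n$ into $O(\log T)$ geometric buckets and union-bounding. The hypothesis $n \le T$ is what caps both the number of buckets and the time index, ensuring the logarithmic overhead is absorbed into $\chernoffC = \Theta(\log(dT\,|\Pi|))$ and that the single failure probability $(dT\,|\Pi|)^{-2}$ survives the union bound.
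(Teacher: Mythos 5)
Your overall skeleton is the right one, and it matches the paper's (unstated) intent: this corollary appears only in a commented-out block of the source, with no written proof, and is plainly meant as a direct instantiation of Lemma~\ref{lm:martingale-convergence} with $\delta=(dT\,|\Pi|)^{-2}$, using $n\leq T$ to absorb $\log(n\delta^{-1})\leq \log T + 2\log(dT\,|\Pi|)\leq 3\log(dT\,|\Pi|)$ into $\chernoffC=\Theta(\log(dT\,|\Pi|))$, and then dividing through by $n$. Your attribution of the variance-adaptivity (the random $V_n$ inside the bound) to the interior of the cited lemma is also correct, though the hypothesis $n\leq T$ plays no role in any bucketing — its only job is the logarithm bound just stated.

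The genuine gap is the sentence in which you absorb the additive range term. Under the one-term reading you commit to in your displayed computation, $\rad(\nu,t)=\sqrt{\chernoffC\,\nu/t}$, the claim you end up asserting is $\sum_{t=1}^n X_t\leq \sqrt{\chernoffC\,V_n}$ with probability $1-(dT\,|\Pi|)^{-2}$, and that statement is false: Lemma~\ref{lm:martingale-convergence} gives $\sqrt{4V_nL+5b^2L^2}$ with $L=\log(n\delta^{-1})$, and when $V_n\ll b^2\chernoffC$ the term $bL$ dominates and cannot be hidden inside $\sqrt{\chernoffC\,V_n}$. Concretely, take $n=1$, $b=1$, and $X_1=1-p$ with probability $p$, $X_1=-p$ otherwise, with $p=2(dT\,|\Pi|)^{-2}$: then $\E[X_1]=0$, $V_1=p(1-p)$, so $\sqrt{\chernoffC\,V_1}=O(\sqrt{p\log(dT\,|\Pi|)})<1-p$ once $dT\,|\Pi|$ is moderately large, and the claimed event fails with probability $p>\delta$. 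So ``dominated in the relevant regime'' is not available — the corollary carries no such regime restriction. The only reading under which the statement is true, and the only explanation for the otherwise pointless parameterization $(\tfrac{V_n}{bn},\tfrac{n}{b})$, is the two-term confidence radius of \citet{BwK-focs13}, $\rad(\nu,N)=\sqrt{\chernoffC\,\nu/N}+\chernoffC/N$, which gives exactly $\rad(\tfrac{V_n}{bn},\tfrac{n}{b})=\tfrac{1}{n}\bigl(\sqrt{\chernoffC\,V_n}+\chernoffC\,b\bigr)$, i.e., Freedman's two terms. With that definition your plan closes cleanly: by the lemma with $\delta=(dT\,|\Pi|)^{-2}$ and $\sqrt{x+y}\leq\sqrt{x}+\sqrt{y}$, one gets $\sum_{t=1}^n X_t\leq\sqrt{4V_nL}+\sqrt{5}\,bL\leq\sqrt{\chernoffC\,V_n}+\chernoffC\,b$ whenever $\chernoffC\geq 12\log(dT\,|\Pi|)$. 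Your proof needs to commit to the two-term radius and carry the $\chernoffC\,b/n$ term explicitly rather than wave it away.
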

}

\subsection{Linear approximation and the benchmark}
\label{sec:LP-approx}

We set up a linear relaxation that will be crucial throughout the paper. As a by-product, we (effectively) reduce our benchmark $\OPT(\Pi)$ to the best fixed distribution over policies.

A given distribution $P$ over policies defines an algorithm $\ALG_P$: in each round a policy $\pi$ is sampled independently from $P$, and the action $a = \pi(x)$ is chosen. The \emph{value} of $P$ is the total reward of this algorithm, in expectation over the outcome distribution.

As the value of $P$ is difficult to characterize exactly, we approximate it (generalizing the approach from \citep{DynPricing-ec12,BwK-focs13} for the non-contextual version). We use a linear approximation where all rewards and consumptions are deterministic and the time is continuous. Let
    $r(P,\mu)$
and
    $c_i(P,\mu)$
be the expected per-round reward and the expected per-round consumption of resource $i$ for policy $\pi\sim P$, given \eos $\mu$. Then the linear approximation corresponds to the solution of a simple linear program:
\begin{align}\label{eq:LP-t}
\begin{array}{lrcll}
	\textrm{Maximise}    & t\, r(P,\mu) &     & & \text{in $t \in \R$} \\
	\textrm{subject to}  & t\, c_i(P,\mu)    & \leq & B & \text{for each $i$} \\
%			     & t &\leq& T \\
			             & t     & \geq & 0.
\end{array}
\end{align}
The solution to this LP, which we call the \emph{LP-value} of $P$, is
\begin{align}\label{eq:LP-value}
 \LP(P,\mu) = r(P, \mu) \;\textstyle\,\min_i B/c_i(P,\mu).
\end{align}
Denote
    $\LPOPT = \sup_{P} \LP(P,\mu)$,
where the supremum is over all distributions $P$ over $\Pi$.

\begin{lemma}\label{lm:LPOPT-vs-OPT}
$\LPOPT \geq \OPT(\Pi) $.
\end{lemma}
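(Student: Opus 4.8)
The plan is to reduce the benchmark to a single distribution over policies: I will exhibit one distribution $P^*$ over $\Pi$ with $\LP(P^*,\mu)\ge \OPT(\Pi)$, which immediately gives $\LPOPT=\sup_P \LP(P,\mu)\ge \OPT(\Pi)$. The distribution $P^*$ is obtained by running the optimal benchmark and averaging its committed policy over the rounds it plays. By the model's observation that the benchmark may w.l.o.g. use the alternative stopping rule, I assume it never violates any budget and counts the reward of every round it plays. Let $\tau\le T$ be the (random) number of rounds it plays, let $\pi_t$ be the policy it commits to in round $t$, and for each $\pi$ let $T_\pi=|\{t\le \tau:\pi_t=\pi\}|$, with $N=\sum_\pi E[T_\pi]=E[\tau]$. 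If $N=0$ then $\OPT(\Pi)=0$ and the claim is trivial (the all-null policy has LP-value $0$), so I assume $N>0$ and set $P^*(\pi)=E[T_\pi]/N$.

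The heart of the argument is a pair of Wald-type identities that translate realized totals into the quantities $r(\pi),c_i(\pi)$. Because the benchmark commits to $\pi_t$ before round $t$'s context and outcomes are drawn i.i.d. from $D$, the indicator $\mathbf 1[t\le\tau]$ that round $t$ is played is measurable with respect to the history $\mathcal F_{t-1}$, while the realized reward $r_t$ satisfies $E[r_t\mid\mathcal F_{t-1}]=r(\pi_t)$. Hence $E[r_t\,\mathbf 1[t\le\tau]]=E[\mathbf 1[t\le\tau]\,r(\pi_t)]$, and summing over $t$ gives $\OPT(\Pi)=E\big[\sum_{t\le\tau}r_t\big]=\sum_\pi r(\pi)\,E[T_\pi]=N\,r(P^*,\mu)$. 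The identical computation for each resource $i$ yields $E\big[\sum_{t\le\tau}c_{i,t}\big]=\sum_\pi c_i(\pi)\,E[T_\pi]=N\,c_i(P^*,\mu)$; combined with the fact that no budget is ever violated, so $\sum_{t\le\tau}c_{i,t}\le B$ surely, this gives $N\,c_i(P^*,\mu)\le B$, i.e. $c_i(P^*,\mu)\le B/N$ for every $i$.

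Putting the two pieces together, $\min_i B/c_i(P^*,\mu)\ge N$, so
\[
 \LP(P^*,\mu)=r(P^*,\mu)\,\min_i \tfrac{B}{c_i(P^*,\mu)}\ \ge\ N\,r(P^*,\mu)=\OPT(\Pi),
\]
which proves the lemma. The step I expect to be the main obstacle is the careful justification of the two identities in the presence of adaptivity and a data-dependent stopping time, and in particular the handling of the final round in which a budget is (nearly) exhausted. Rewards and consumptions may be arbitrarily correlated within a round, so under the original stopping rule the event $\{t<\tau\}$ depends on round $t$'s own outcome and is not predictable; naively this would cost an additive $B+1$ in place of $B$ in the consumption bound and only yield $\LPOPT\ge \tfrac{B}{B+1}\,\OPT(\Pi)$. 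Appealing to the alternative stopping rule (legitimate w.l.o.g. by the model) is precisely what makes $\mathbf 1[t\le\tau]$ predictable and keeps the consumption bound at $B$, so that both identities are exact and the clean inequality $\LPOPT\ge\OPT(\Pi)$ follows.
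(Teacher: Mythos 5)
Your proof is correct, but it is not the route the paper takes. The paper does not prove Lemma~\ref{lm:LPOPT-vs-OPT} from scratch: it notes that this lemma (together with Lemma~\ref{lm:perfect}) is proved for the non-contextual version of \cBwK in \citet{BwK-focs13}, and reduces the contextual case to that result by the standard device of treating each policy $\pi\in\Pi$ as an ``action'' with expected reward $r(\pi)$ and expected consumptions $c_i(\pi)$; the benchmark, which must commit to a policy before each round, is then exactly a dynamic policy in the reduced non-contextual instance, and distributions over policies become distributions over actions, so the non-contextual inequality transfers verbatim. Your argument instead re-derives that inequality directly in the contextual setting: the occupancy distribution $P^*(\pi)=\E[T_\pi]/\E[\tau]$, the Wald/optional-stopping identities (valid because the committed policy and the continue/stop decision are measurable with respect to the past while the round's context and outcome are i.i.d.), and the observation that $t=\E[\tau]$ is feasible for the LP in \eqref{eq:LP-t}, giving $\LP(P^*,\mu)\ge \E[\tau]\, r(P^*,\mu)=\OPT(\Pi)$. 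This is essentially the same proof that \citet{BwK-focs13} give for the cited lemma, so the mathematical content is equivalent; what your version buys is self-containedness, and, more importantly, an explicit treatment of the stopping-rule subtlety that the paper hides inside its ``w.l.o.g.'' remark. Indeed, under the literal hard-violation stopping rule the indicator $\mathbf{1}[t<\tau]$ is not predictable, the consumption identity only yields $\E[\tau]\,c_i(P^*,\mu)\le B+1$, and the exact inequality $\LPOPT\ge\OPT(\Pi)$ can genuinely fail: with a single non-null action having deterministic reward $1$, Bernoulli$(1/2)$ consumption of one resource, $B=1$ and $T$ large, the benchmark's expected reward is $3$ while $\LPOPT=2$. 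So invoking the alternative stopping rule for the benchmark, as you do, is not a cosmetic convenience but a necessary step, and your proposal is the more transparent account of why the lemma is true.
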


Therefore, it suffices to compete against the best fixed distribution over $\Pi$, as approximated by $\LPOPT$, even though our benchmark $\OPT(\Pi)$ allows unrestricted changes over time. Note that proving regret bounds relative to $\LPOPT$ rather than to $\OPT(\Pi)$ only makes our results stronger.

A distribution $P$ over $\Pi$ that attains the supremum value $\LPOPT$ is called \emph{LP-optimal}. Such $P$ is called \emph{LP-perfect} if furthermore $|\support(P)|\leq d$ and $c_i(P,\mu) \leq B/T$ for each resource $i$. We find it useful to consider LP-perfect distributions throughout the paper.

\begin{lemma}\label{lm:perfect}
An LP-perfect distribution exists for any instance of \cBwK.
\end{lemma}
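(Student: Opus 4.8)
The plan is to reduce the definition of LP-optimality to a clean linear program over the simplex $\Distr{\Pi}$ and then extract a low-support optimal vertex. The only genuinely nonobvious requirement is the consumption bound $c_i(P,\mu)\le B/T$: a naive basic optimal solution of the LP \eqref{eq:LP-t} will instead make some resource $i$ tight at $c_i(P,\mu)=B/t^\ast$, which exceeds $B/T$ whenever the optimal runtime $t^\ast=\min_i B/c_i(P,\mu)$ is strictly below the horizon $T$. The key idea that fixes this is to ``slow down'' the schedule using the \Null action so that \emph{time} becomes the binding resource. Recall that after the uniform-budget rescaling every action, hence every policy, consumes the time resource deterministically, so $c_{\mathrm{time}}(P,\mu)=B/T$ for every $P$. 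I will work with the policy set augmented by the null policy $\pi_0$ (which plays \Null on every context); since $\pi_0$ has zero reward and zero consumption of every non-time resource, a direct check shows that redefining $\Pi:=\Pi\cup\{\pi_0\}$ leaves $\LPOPT$ unchanged, so it is harmless to assume $\pi_0\in\Pi$.

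The first step is to prove
\[
  \LPOPT \;=\; T\cdot V,\qquad
  V:=\max\Bigl\{\, r(P,\mu):\; P\in\Distr{\Pi},\ c_i(P,\mu)\le B/T\ \text{for all }i \,\Bigr\},
\]
where the maximum is attained because the feasible set is a nonempty compact subset of the simplex (it contains $\pi_0$) and $r(\cdot,\mu)$ is linear. For the ``$\ge$'' direction I note that for any feasible $P$ the time resource is binding, so $\min_i B/c_i(P,\mu)=T$ and hence $\LP(P,\mu)=T\,r(P,\mu)$ by \eqref{eq:LP-value}. For the ``$\le$'' direction I take an arbitrary $P$ and mix it with $\pi_0$, setting $P_\lambda=(1-\lambda)P+\lambda\pi_0$ with $1-\lambda=t^\ast/T$; this scales every non-time consumption and the reward by $(1-\lambda)$ while keeping time fixed at $B/T$, and a short computation using \eqref{eq:LP-value} shows that $P_\lambda$ is feasible (all $c_i(P_\lambda,\mu)\le B/T$) and satisfies $\LP(P_\lambda,\mu)=t^\ast\,r(P,\mu)=\LP(P,\mu)$. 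Thus every $\LP(P,\mu)$ is matched by a feasible point, giving $\LPOPT\le T\cdot V$.

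The second step extracts the support bound. The feasible region of $V$ is the polytope $\{P\ge 0:\ \sum_\pi P(\pi)=1,\ \sum_\pi P(\pi)c_i(\pi)\le B/T\ \text{for all }i\}$. Because the time constraint reduces to $\sum_\pi P(\pi)\le 1$, it is implied by the normalization equality and is therefore redundant, leaving one equality and $d-1$ effective resource inequalities. A linear objective attains its maximum at a vertex $P^\ast$ of this bounded polytope, and at a vertex $|\Pi|$ linearly independent constraints are tight; at most $1+(d-1)=d$ of these can be the normalization equality or a resource inequality, so at least $|\Pi|-d$ nonnegativity constraints are tight, i.e. $|\support(P^\ast)|\le d$. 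By Step~1 this maximizer satisfies $\LP(P^\ast,\mu)=T\,r(P^\ast,\mu)=TV=\LPOPT$, so $P^\ast$ is LP-optimal, and by construction $c_i(P^\ast,\mu)\le B/T$ and $|\support(P^\ast)|\le d$; hence $P^\ast$ is LP-perfect.

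The main obstacle is Step~1, and specifically the realization that the bare LP \eqref{eq:LP-t} need not admit an optimum satisfying $c_i(P,\mu)\le B/T$. The \Null-mixing trick is exactly what reconciles the ``budget-binding'' optimum with the ``time-binding'' normalization demanded by the definition of LP-perfect, and it is the reason the model introduces a null action in the first place. Once this reduction is in place, the support bound is the standard basic-feasible-solution counting argument, and the remaining verifications (feasibility of $\pi_0$, the closed-form manipulations, and invariance of $\LPOPT$ under augmenting $\Pi$ by $\pi_0$) are routine.
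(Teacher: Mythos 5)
Your proof is correct and follows essentially the same route as the paper: the paper proves this lemma by reducing to the non-contextual case of \citet{BwK-focs13}, whose argument is exactly the null-action mixing you describe (the paper even remarks that this mixing is ``why we allow the null action in the setting''), followed by a basic-feasible-solution support bound. The only difference is that you spell out in full the details the paper delegates to that citation --- the reformulation $\LPOPT = T\cdot V$ with time as the binding resource, and the vertex-counting argument giving $|\support(P^*)|\leq d$.
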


Lemma~\ref{lm:LPOPT-vs-OPT} and Lemma~\ref{lm:perfect} are proved for the non-contextual version of \cBwK in \citet{BwK-focs13}. The general case can be reduced to the non-contextual version via a standard reduction where actions in the new problem  correspond to policies in $\Pi$ in the original problem. For Lemma~\ref{lm:perfect}, \citet{BwK-focs13}  obtain an LP-perfect distribution by mixing an LP-optimal distribution with the ``null action"; this is why we allow the null action in the setting.

\section{The algorithm: \CBalance}
\label{sec:algorithm}

The algorithm's goal is to converge on a LP-perfect distribution over policies. The general design principle is to explore as much as possible while avoiding obviously suboptimal decisions.

\xhdr{Overview of the algorithm.} In each round $t$, the following happens.
%\begin{enumerate}

\fakeItem[1.] \emph{Compute estimates.}
We compute high-confidence estimates for the per-round reward $r(\pi)$ and per-round consumption $c_i(\pi)$, for each policy $\pi\in \Pi$ and each resource $i$. The collection $\mI$ of all \eos that are consistent with these high-confidence estimates is called the \emph{confidence region}.

\fakeItem[2.] \emph{Avoid obviously suboptimal decisions.}
We prune away all distributions $P$ over policies in $\Pi$ that are not LP-perfect with high confidence. More precisely, we prune all $P$ that are not LP-perfect for any \eos in the confidence region $\mI$; the remaining distributions are called \emph{potentially LP-perfect}. Let $\mF$ be the convex hull of the set of all potentially LP-perfect distributions.

\fakeItem[3.] \emph{Explore as much as possible.} We choose a distribution $P\in \mF$ which is \emph{balanced}, in the sense that no action is starved; see \eqref{eq:alg-balanced} for the precise definition.  Note that balanced distributions are typically \emph{not} LP-perfect.

\fakeItem[4.] \emph{Select an action.} We choose policy $\pi \in \Pi$ independently from $P$. Given context $x$, the action $a$ is chosen as $a = \pi(x)$. The algorithm adds some random noise: with probability $\noiseProb$, the action $a$ is instead chosen uniformly at random, for some parameter $\noiseProb$.

%\end{enumerate}
The algorithm halts as soon as the time horizon is met, or one of the resources is exhausted.

The pseudocode can be found in Algorithm~\ref{alg:CBalance}.

\newcommand{\TAB}{\hspace{4mm}}
\floatname{algorithm}{Algorithm}
\begin{algorithm}[t]
\caption{\CBalance}
\label{alg:CBalance}
\begin{algorithmic}[1]
\STATE {\bf Parameters:} \#actions $K$, time horizon $T$, budget $B$, benchmark set $\Pi$,
    context distribution $\DX$.%
%\WHILE{Run phase $t$ till you don't run out of rounds or resource}
%\STATE {\bf Let} $\noiseProb=\min\left(\tfrac12,\sqrt{\frac{K\log (K\,T|\Pi|)}{T}}\right)$.
\STATE {\bf Data structure:} ``confidence region"
    $\mI \leftarrow \{ \text{all feasible \eoss} \}$.
\vspace{2mm}
\STATE {\bf For} each round $t=1\ldots T$ {\bf do}
\STATE \TAB $\PotPerf_t =  \{ \text{distributions $P$ over $\Pi$: $P$ is LP-perfect for some $\mu\in\mI$} \}$.
\STATE \TAB Let $\mF_t$ be the convex hull of $\PotPerf_t$.

\STATE \TAB Let
    $\alpha_{\pi,t} = \max_{P\in \mF_t} P(\pi), \quad\forall\pi\in\Pi$.
\STATE \TAB Choose a ``balanced" distribution $P_t\in \mF_t$:
    any $P\in \mF_t$ such that $\forall \pi\in \Pi$
    \begin{align}\label{eq:alg-balanced}
        \E_{x\sim \DX}\left[\frac{1}{(1-\noiseProb)\,P(\pi(x) |x)+\tfrac{\noiseProb}{K}}\right]\leq \frac{2K}{\alpha_{\pi,t}},
        \;\text{where}\;
        \noiseProb=\min\left(\tfrac12,\sqrt{\tfrac{K}{T}\log (K\,T|\Pi|)}\right).
    \end{align}
\vspace{-2mm}
\STATE \TAB {\bf Observe} context $x_t$; {\bf choose} action $a_t$ to "play":
\STATE \TAB\TAB with probability $\noiseProb$, draw $a_t$ u.a.r. in $A$;
                else, draw $\pi \sim P_t$ and let $a_t = \pi(x_t)$.
\STATE \TAB {\bf Observe} outcome vector ($r, c_{1} \LDOTS c_{d})$.
\STATE \TAB {\bf Halt} if one of the resources is exhausted.
\STATE \TAB Eliminate \eoss from $\mI$ that violate equations~(\ref{eq:algo-estimate-r}-\ref{eq:algo-estimate-c})
\end{algorithmic}
\end{algorithm}

\xhdr{Some details.}
 After each round $t$, we estimate the per-round consumption $c_i(\pi)$ and the per-round reward $r(\pi)$, for each policy $\pi\in \Pi$ and each resource $i$, using the following unbiased  estimators:
\begin{align*}
\empir{c}_{i}(\pi) =\frac{c_{i}\;\indicator{a = \pi(x)}}{P[a = \pi(x) \,|\, x]}
    \;\;\text{and}\;\;
\empir{r}(\pi) =\frac{r\;\indicator{a = \pi(x)}}{P[a = \pi(x) \,|\, x]}.
\end{align*}
The corresponding time-averages up to round $t$ are denoted
\begin{align*}
\hat{c}_{t,i}(\pi)  = \tfrac{1}{t-1}\, \sum_{s=1}^{t-1} \empir{c}_{s,i}(\pi)
    \;\;\text{and}\;\;
\hat{r}_t(\pi)  = \tfrac{1}{t-1}\, \sum_{s=1}^{t-1} \empir{r}_s(\pi).
\end{align*}

We show that with high probability these time-averages are close to their respective expectations. To express the confidence term in a more lucid way, we use the following shorthand, called \emph{confidence radius}:
    $\rad_t(\nu) = \sqrt{\chernoffC\, \nu /t}$,
where $\chernoffC=\Theta(\log (d\,T\,|\Pi|))$ is a parameter which we will fix later. We show that w.h.p. the following holds:
\begin{align}
|r(\pi) - \hat{r}_t(\pi)|
    &\leq \rad_t\left( K/\alpha_{\pi,t}\right),
    \label{eq:algo-estimate-r}\\
|c_i(\pi) - \hat{c}_{t,i}(\pi)|
    &\leq \rad_t\left(\, K/\alpha_{\pi,t}\right)
    \quad \text{for all $i$}.
    \label{eq:algo-estimate-c}
\end{align}
(Here
    $\alpha_{\pi,t} = \max_{P\in \mF_t} P(\pi)$,
as in Algorithm~\ref{alg:CBalance}.)

\OMIT{ %%%%%%%%
\begin{lemma}[LP-perfect]\label{cl:LP--perfect}
Some distribution over $\Pi$ is LP--perfect.
\end{lemma}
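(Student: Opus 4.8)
The plan is to exploit that every quantity entering the definition of LP-perfect --- namely $r(\pi)$ and $c_i(\pi)$ --- depends only on the expected per-round outcomes of each policy, and not at all on the contextual structure. Thus a distribution $P$ over $\Pi$ may be regarded as a point of the simplex $\Distr{\Pi}$, and the statement reduces to a fact about the linear program \eqref{eq:LP-t} whose ``arms'' are indexed by $\Pi$ --- exactly the non-contextual setting of \citet{BwK-focs13}. I would then establish the lemma in three steps: (i) an LP-optimal distribution exists; (ii) it can be taken to satisfy $c_i(P,\mu)\le B/T$ for every resource $i$; and (iii) it can be taken to have support at most $d$.

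For (i) I would first rewrite the LP-value \eqref{eq:LP-value} as $\LP(P,\mu)=r(P,\mu)\cdot B/\max_i c_i(P,\mu)$. The only apparent danger in invoking compactness is the singularity when some $c_i(P,\mu)\to 0$; but since time is one of the resources and every action (including the null action) consumes time deterministically, the time coordinate satisfies $c_i(P,\mu)=B/T>0$, so $\max_i c_i(P,\mu)\ge B/T$ for all $P$. Hence $\LP(\cdot,\mu)$ is continuous and bounded on the compact simplex $\Distr{\Pi}$, and the supremum defining $\LPOPT$ is attained by some LP-optimal $P^{\ast}$.

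For (ii) I would ``slow down'' $P^{\ast}$ by mixing it with the null policy $\pi_0$, which I may assume lies in $\Pi$ (adding it if necessary --- this is precisely why the null action is posited). Let $t^{\ast}=\min_i B/c_i(P^{\ast},\mu)\le T$ be the optimal time in \eqref{eq:LP-t}, and set $P_\lambda=\lambda\, P^{\ast}+(1-\lambda)\,\delta_{\pi_0}$ with $\lambda=t^{\ast}/T\in(0,1]$, where $\delta_{\pi_0}$ is the point mass on the null policy. Since $\pi_0$ has zero reward and zero consumption of every non-time resource, $r(P_\lambda,\mu)=\lambda\, r(P^{\ast},\mu)$ and $c_i(P_\lambda,\mu)=\lambda\, c_i(P^{\ast},\mu)\le B/T$ for each non-time $i$, while the time coordinate stays at $B/T$. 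A short computation shows that the binding constraint becomes time, so $\LP(P_\lambda,\mu)=T\cdot r(P_\lambda,\mu)=t^{\ast}\, r(P^{\ast},\mu)=\LPOPT$; thus $P_\lambda$ remains LP-optimal and now meets the consumption bound. For (iii) I would observe that once all $c_i(P,\mu)\le B/T$, the time constraint binds and $\LP(P,\mu)=T\cdot r(P,\mu)$, so LP-optimality within this region amounts to maximizing the linear objective $r(P,\mu)=\sum_\pi P(\pi)\,r(\pi)$ over the polytope $\{P\ge 0,\ \sum_\pi P(\pi)=1,\ c_i(P,\mu)\le B/T\ \forall i\}$. The time inequality here is redundant --- normalization forces it to equality, since every policy consumes exactly $B/T$ of time --- leaving $d-1$ resource inequalities plus one normalization equality, i.e.\ at most $d$ nontrivial constraints. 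A basic optimal (vertex) solution of this bounded LP therefore has at most $d$ nonzero coordinates, so $|\support(P)|\le d$, while attaining the same value $\LPOPT$ as $P_\lambda$; this vertex is the desired LP-perfect distribution.

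The compactness argument in (i) and the basic-feasible-solution count in (iii) are routine. The step most in need of care is (ii): I must verify that mixing with the null policy preserves LP-\emph{optimality} and not merely feasibility, and confirm that the null policy is genuinely available and that its per-round time consumption has been normalized to $B/T$ under the uniform-budget reduction. This is the crux of the argument, and it is exactly where the null action earns its keep.
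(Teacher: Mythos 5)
Your proof is correct and takes essentially the same route as the paper: the paper establishes Lemma~\ref{lm:perfect} by reducing to the non-contextual case (policies become arms) and citing \citet{BwK-focs13}, whose construction likewise mixes an LP-optimal distribution with the null action --- exactly your step (ii), including the observation that this is why the null action is posited. The only difference is that you spell out in full what the paper delegates to that citation --- compactness of the simplex for existence of an LP-optimal distribution, and a vertex (basic-feasible-solution) count to get $|\support(P)|\le d$ --- and these details check out, including the correct use of the uniform-budget normalization under which every policy consumes exactly $B/T$ of the time resource.
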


\begin{lemma}[value]\label{cl:LP-value}
For any distribution $\mD$ over policies, its value is at most $V(\mD,\mu)$.
\end{lemma}

\begin{lemma}\label{lm:balanced}
In each round $t$, there exists a distribution $P \in \mF$ which satisfies \eqref{eq:alg-balanced}.
\end{lemma}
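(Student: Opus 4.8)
The plan is to recast the balance condition \eqref{eq:alg-balanced} as a minimax value and bound that value by $2K$. Write $\tilde P(a|x) = (1-\noiseProb)\,P(a|x)+\noiseProb/K$ for the noise-smoothed action distribution induced by $P$; then \eqref{eq:alg-balanced} asks for a single $P\in\mF_t$ with $\alpha_{\pi,t}\,\E_{x\sim\DX}\!\left[1/\tilde P(\pi(x)|x)\right]\le 2K$ simultaneously for all $\pi\in\Pi$. Since the maximum over $\pi$ of the left-hand side equals $\max_{Q\in\Distr{\Pi}} g(P,Q)$ for $g(P,Q):=\sum_{\pi}Q(\pi)\,\alpha_{\pi,t}\,\E_{x\sim\DX}[1/\tilde P(\pi(x)|x)]$, the lemma is equivalent to $\min_{P\in\mF_t}\max_{Q}g(P,Q)\le 2K$. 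The first step is to swap the order via Sion's minimax theorem: $g$ is linear (hence concave) in $Q$ and convex in $P$ (because $\tilde P(\pi(x)|x)$ is affine in $P$, $u\mapsto 1/u$ is convex on $u>0$, and both $\E_{x}$ and the nonnegative combination preserve convexity), while $\Distr{\Pi}$ and $\mF_t$ are compact and convex (compactness of $\mF_t$ is implicit in the definition of $\alpha_{\pi,t}$ as an attained maximum). Hence $\min_P\max_Q g=\max_Q\min_P g$, and it suffices to exhibit, for each fixed $Q\in\Distr{\Pi}$, some $P_Q\in\mF_t$ with $g(P_Q,Q)\le 2K$.

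The crux is the choice of $P_Q$. For each $\pi'$ let $P^{(\pi')}\in\mF_t$ attain $\alpha_{\pi',t}=\max_{P\in\mF_t}P(\pi')$, and set $P_Q=\sum_{\pi'}Q(\pi')\,P^{(\pi')}$, which lies in $\mF_t$ as a convex combination of its members. The purpose of this mixture is a lower bound on the smoothed probabilities that is uniform in $x$: whenever $\pi'(x)=a$ we have $P^{(\pi')}(a|x)\ge P^{(\pi')}(\pi')=\alpha_{\pi',t}$, so $P_Q(a|x)\ge\sum_{\pi':\,\pi'(x)=a}Q(\pi')\,\alpha_{\pi',t}=:W(a|x)$ and therefore $\tilde P_Q(a|x)\ge (1-\noiseProb)\,W(a|x)+\noiseProb/K$.

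Finally I would regroup the objective by the action each policy selects, $g(P_Q,Q)=\E_{x\sim\DX}\big[\sum_{a}W(a|x)/\tilde P_Q(a|x)\big]$, and insert the lower bound. Each action term obeys $W(a|x)/\big((1-\noiseProb)W(a|x)+\noiseProb/K\big)\le 1/(1-\noiseProb)\le 2$ (using $\noiseProb\le\tfrac12$; terms with $W(a|x)=0$ vanish), and there are at most $K$ actions, so the inner sum is at most $2K$ for every $x$, giving $g(P_Q,Q)\le 2K$. The main obstacle is exactly this collapse: a naive per-policy estimate would scale with $|\Pi|$, and it is the combination of the $Q$-weighted mixture $P_Q$ with the regrouping by actions---so that the noise floor $\noiseProb/K$ absorbs the low-weight actions---that reduces the count from $|\Pi|$ to the $K$ distinct actions. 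It remains only to observe that $\mF_t\neq\emptyset$: the confidence region $\mI$ is nonempty, and by Lemma~\ref{lm:perfect} an LP-perfect distribution exists for each $\mu\in\mI$, so $\PotPerf_t$---and hence $\mF_t$---is nonempty, and the minimax argument is carried out over nonempty domains.
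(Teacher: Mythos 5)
Your proof is correct and takes essentially the same route as the paper's: the same reformulation of \eqref{eq:alg-balanced} as a minimax value, the same application of Sion's theorem (convexity in $P$, linearity in the mixing distribution), the same $Q$-weighted mixture $P_Q$ of the maximizers attaining $\alpha_{\pi,t}$ (the paper's $P_Z$ built from the $\beta_\pi$), and the same regrouping of the objective by actions so that the at most $K$ per-action terms, each bounded by $1/(1-\noiseProb)\le 2$, yield $2K$. The one point handled more carefully in the paper is the compactness of $\mF_t$, which is not ``implicit'' in the attainment of $\alpha_{\pi,t}$ but is proved separately (Appendix~\ref{app:compactness}) via a closedness argument for the set of potentially LP-perfect distributions, and is what licenses both the attained maxima and the hypotheses of Sion's theorem.
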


\begin{lemma}[confidence]\label{lm:high-prob}
With probability at least $1-\tfrac{1}{T}$,
Equations~(\ref{eq:algo-estimate-r}-\ref{eq:algo-estimate-c}) hold for all rounds $t$ and  policies $\pi\in\Pi$.
\end{lemma}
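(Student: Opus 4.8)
The plan is to establish \eqref{eq:algo-estimate-r}--\eqref{eq:algo-estimate-c} by a single martingale concentration argument applied separately to each policy $\pi$, each resource $i$ (and the reward), and then to conclude by a union bound over all such quantities and all rounds $t\le T$. Fix $\pi$ and $i$, and let $\mathcal{H}_{s-1}$ denote the history through the end of round $s-1$, so that the distribution $P_s$ and the quantity $\alpha_{\pi,s}$ are $\mathcal{H}_{s-1}$-measurable. First I would verify that the per-round estimator $\empir{c}_{s,i}(\pi)$ is conditionally unbiased: averaging over the context $x_s$, the noisy action choice $a_s$, and the realized outcome, the importance weight $1/P_s[a_s=\pi(x_s)\mid x_s]$ exactly cancels the selection probability, giving $\E[\empir{c}_{s,i}(\pi)\mid \mathcal{H}_{s-1}] = c_i(\pi)$. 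Hence $X_s := \empir{c}_{s,i}(\pi)-c_i(\pi)$ is a martingale difference sequence with respect to $(\mathcal{H}_s)_s$, and $\hat{c}_{t,i}(\pi)-c_i(\pi) = \tfrac{1}{t-1}\sum_{s=1}^{t-1} X_s$.

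The crux is to control the conditional variance, and this is precisely what the balanced condition \eqref{eq:alg-balanced} is engineered to supply. Using $c_i\le 1$ and then \eqref{eq:alg-balanced} verbatim, the conditional second moment (hence the conditional variance) satisfies
\[
\E\!\left[\empir{c}_{s,i}(\pi)^2\,\middle|\, \mathcal{H}_{s-1}\right]
 \le \E_{x\sim\DX}\!\left[\tfrac{1}{P_s[\pi(x)\mid x]}\right]
 = \E_{x\sim\DX}\!\left[\tfrac{1}{(1-\noiseProb)\,P_s(\pi(x)\mid x)+\noiseProb/K}\right]
 \le \tfrac{2K}{\alpha_{\pi,s}}.
\]
To convert the per-round bound $2K/\alpha_{\pi,s}$ into the quantity $K/\alpha_{\pi,t}$ appearing in the target confidence radius, I would observe that the confidence region $\mI$ only shrinks across rounds, so $\PotPerf_t$ and its convex hull $\mF_t$ shrink, whence $\alpha_{\pi,t}=\max_{P\in\mF_t}P(\pi)$ is non-increasing in $t$; therefore $2K/\alpha_{\pi,s}\le 2K/\alpha_{\pi,t}$ for every $s\le t$, and the total conditional variance over the first $t-1$ rounds is at most $2(t-1)K/\alpha_{\pi,t}$. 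The range of $X_s$ is controlled by the same noise floor: since $P_s[\pi(x)\mid x]\ge \noiseProb/K$, we have $\empir{c}_{s,i}(\pi)\le K/\noiseProb$.

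With the variance and range in hand, I would invoke a Bernstein/Freedman-type inequality for bounded martingale differences to obtain, for each fixed $(\pi,i,t)$, a deviation of at most $\rad_t(K/\alpha_{\pi,t})=\sqrt{\chernoffC\,(K/\alpha_{\pi,t})/t}$ with failure probability at most $(dT|\Pi|)^{-2}$, provided $\chernoffC=\Theta(\log(dT|\Pi|))$. The identical argument applies to $\hat{r}_t(\pi)$, since $r\in[0,1]$ as well. Finally I would union bound over the at most $T$ rounds, the $|\Pi|$ policies, and the $O(d)$ quantities per policy (the $d$ resources plus the reward); with the stated choice of $\chernoffC$ this keeps the total failure probability below $1/T$.

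The main obstacle I anticipate is the bookkeeping that makes the confidence radius self-consistent: $\alpha_{\pi,t}$ is itself a random, history-dependent, time-varying object that appears on both sides of the estimate (it shapes the per-round variance through \eqref{eq:alg-balanced} and indexes the final bound), so the monotonicity claim $\alpha_{\pi,s}\ge\alpha_{\pi,t}$ is load-bearing and must be justified from the nesting of the $\mF_t$. A secondary technical point is ensuring that the additive range term produced by a naive two-sided Bernstein bound, of order $(K/\noiseProb)\log(dT|\Pi|)/t$, is dominated by the variance term $\rad_t(K/\alpha_{\pi,t})$; this is where the specific choice of $\noiseProb$ and the precise form of the martingale inequality---measuring the accumulated variance in units of the estimator's range rather than bounding the two terms independently---must be used, so that the final deviation collapses into the single expression $\rad_t(K/\alpha_{\pi,t})$.
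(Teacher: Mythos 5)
Your proposal is correct and follows essentially the same route as the paper's proof: a conditional second-moment bound of $2K/\alpha_{\pi,s}$ obtained from the balanced condition \eqref{eq:alg-balanced}, the observation that $\alpha_{\pi,t}$ is non-increasing because the confidence region only shrinks, the range bound $K/\noiseProb$, the martingale Bernstein (Freedman) inequality, and a union bound over rounds, policies, and resources. The caveat you flag---that the additive range term of order $(K/\noiseProb)\log(dT|\Pi|)/t$ must be absorbed into $\rad_t(K/\alpha_{\pi,t})$---is genuine, but the paper's own proof leaves it equally implicit, so there is no divergence in substance.
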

} %%%%

\section{Correctness of the algorithm}
\label{sec:correctness}

We need to prove that in each round $t$, some $P\in \mF_t$ satisfies \refeq{eq:alg-balanced}, and Equations~(\ref{eq:algo-estimate-r}-\ref{eq:algo-estimate-c}) hold for all policies $\pi\in\Pi$ with high probability.

\xhdr{Notation.}
Recall that $P_t$ is the distribution over $\Pi$ chosen in round $t$ of the algorithm, and $\noiseProb$ is the noise probability. The ``noisy version" of $P_t$ is defined as
 \begin{align*}
    P_t'(a|x) = (1-\noiseProb)\,P_t(a|x)+\noiseProb/K
        \qquad (\forall x\in X, a\in A).
\end{align*}
Then action $a_t$ in round $t$ is drawn from distribution $P'_t(\cdot|x_t)$.

\begin{lemma}\label{lm:correctness-balance}
In each round $t$, some $P\in \mF_t$ satisfies \refeq{eq:alg-balanced}.
\end{lemma}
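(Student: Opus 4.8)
I would prove the existence of a balanced $P\in\mF_t$ through a minimax (zero-sum game) argument. Write $P'(a\mid x)=(1-\noiseProb)\,P(a\mid x)+\noiseProb/K$ for the smoothed action distribution and set
\[
 V(P,\pi)=\E_{x\sim\DX}\!\left[\frac{1}{P'(\pi(x)\mid x)}\right],
\]
so that \eqref{eq:alg-balanced} reads $\tfrac{\alpha_{\pi,t}}{2K}\,V(P,\pi)\le 1$ for all $\pi$. The goal is thus to show that the game value
\[
 \lambda^\ast \;=\; \min_{P\in\mF_t}\ \max_{\pi\in\Pi}\ \tfrac{\alpha_{\pi,t}}{2K}\,V(P,\pi)
\]
is at most $1$; since $\mF_t$ is compact and the objective is continuous in $P$, the outer minimum is attained and any minimizer is the desired balanced distribution. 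First I would relax the maximum over $\pi$ to a maximum over distributions $W$ over $\Pi$, using that $\max_\pi h(\pi)=\max_W\sum_\pi W(\pi)\,h(\pi)$. The resulting payoff $g(P,W)=\sum_\pi W(\pi)\,\tfrac{\alpha_{\pi,t}}{2K}\,V(P,\pi)$ is convex in $P$ (each $V(\cdot,\pi)$ is an expectation of $z\mapsto 1/((1-\noiseProb)z+\noiseProb/K)$ composed with the linear map $P\mapsto P(\pi(x)\mid x)$, hence convex) and linear in $W$, and both domains are convex and compact, so Sion's minimax theorem gives $\lambda^\ast=\max_W\min_{P\in\mF_t}g(P,W)$. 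It therefore suffices to exhibit, for each fixed $W$, one $P\in\mF_t$ with $g(P,W)\le 1$.

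The key construction is as follows. For each policy $\pi'$ fix $Q_{\pi'}\in\mF_t$ attaining $Q_{\pi'}(\pi')=\alpha_{\pi',t}=\max_{Q\in\mF_t}Q(\pi')$, and set $P_W=\sum_{\pi'}W(\pi')\,Q_{\pi'}$, which lies in $\mF_t$ by convexity. The crucial observation — and this is where the definition of $\alpha_{\pi',t}$ does the work — is the pointwise lower bound $P_W(a\mid x)\ge \widetilde W(a\mid x):=\sum_{\pi':\pi'(x)=a}W(\pi')\,\alpha_{\pi',t}$. Indeed, whenever $\pi'(x)=a$ the policy $\pi'$ itself contributes to the induced action probability, so $Q_{\pi'}(a\mid x)\ge Q_{\pi'}(\pi')=\alpha_{\pi',t}$, and summing $P_W(a\mid x)=\sum_{\pi'}W(\pi')\,Q_{\pi'}(a\mid x)$ over only the indices $\pi'$ with $\pi'(x)=a$ yields the claim.

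Finally I would evaluate $g(P_W,W)$ by swapping the order of summation and grouping the inverse probabilities by the action they charge to:
\[
 g(P_W,W)=\tfrac{1}{2K}\,\E_{x}\!\left[\sum_{a}\frac{\widetilde W(a\mid x)}{P'_W(a\mid x)}\right].
\]
Using $P'_W(a\mid x)\ge(1-\noiseProb)\,P_W(a\mid x)\ge(1-\noiseProb)\,\widetilde W(a\mid x)$, each summand with $\widetilde W(a\mid x)>0$ is at most $1/(1-\noiseProb)$ and the rest vanish, so the inner sum is at most $K/(1-\noiseProb)$ (at most $K$ actions). Hence $g(P_W,W)\le \tfrac{1}{2(1-\noiseProb)}\le 1$ because $\noiseProb\le\tfrac12$; this is exactly where the constant $2$ in $2K/\alpha_{\pi,t}$ is spent, absorbing the smoothing loss. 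Combining, $\lambda^\ast\le 1$, and a minimizer $P\in\mF_t$ satisfies \eqref{eq:alg-balanced} for every $\pi$.

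I expect the main conceptual obstacle to be discovering this construction of $P_W$: the reduction to a per-$W$ statement via minimax is what makes a single simultaneous choice unnecessary, and the lower bound $P_W(a\mid x)\ge\widetilde W(a\mid x)$ is precisely calibrated to the definition $\alpha_{\pi,t}=\max_{Q\in\mF_t}Q(\pi)$. The only routine technical point to check carefully is the compactness of $\mF_t$ (so that Sion's theorem applies and the outer minimum is attained); this holds since $\mF_t$ is the (closed) convex hull of a bounded subset of the finite-dimensional simplex over $\Pi$.
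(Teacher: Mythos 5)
Your proposal is correct and follows essentially the same route as the paper's proof: both characterize balancedness as a minimax statement over mixtures of policies, apply Sion's minimax theorem (convexity in $P$, linearity in the mixing distribution), and for each fixed mixture $W$ construct the same witness $P_W=\sum_{\pi'}W(\pi')\,Q_{\pi'}$ from per-policy maximizers of $Q(\pi')$, concluding via the pointwise bound $P_W(a\mid x)\ge\sum_{\pi':\pi'(x)=a}W(\pi')\,\alpha_{\pi',t}$ and the sum-over-actions estimate $K/(1-\noiseProb)\le 2K$. The only differences are cosmetic (normalizing the payoff by $2K$ and different variable names), together with your explicit attention to compactness of $\mF_t$, which the paper also establishes separately in its appendix.
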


\begin{proof}
First we prove that $\mF_t$ is compact; here each distribution over $\Pi$ is interpreted as a $|\Pi|$-dimensional vector, and compactness is w.r.t. the Borel topology on $\R^{|\Pi|}$. This can be proved via standard real analysis arguments; we provide a self-contained proof in Appendix~\ref{app:compactness}.

In what follows we extend the minimax argument from~\cite{policy_elim}. Our proof works for any $\noiseProb\in[0,\tfrac12]$ and any compact and convex set $\mF\subset \FPi$.

Denote
    $\alpha_{\pi} = \max_{P\in \mF} P(\pi)$,
for each $\pi\in \Pi$. Let $\FPi$ be the set of all distributions over $\Pi$.

\eqref{eq:alg-balanced} holds for a given $P\in \mF$ if and only if for every distribution $Z\in \FPi$ we have that
$$f(P,Z) \triangleq \E_{x\sim \DX}\;\E_{\pi\sim Z}
    \left[ \frac{\alpha_{\pi}}{P'(\pi(x)|x)}\right]\leq 2K,
$$
where $P'$ is the noisy version of $P$. It suffices to show that
\begin{align}\label{eq:min-max-leq}
    \min_{P\in\mF} \max_{Z\in \FPi} f(P,Z)\leq 2K.
\end{align}
We use a min-max argument: noting that $f$ is a convex function of $P$ and a concave function of $Z$, by the Sion's minimax theorem \citep{Sion58} we have that
\begin{align}\label{eq:min-max}
    \min_{P\in \mF} \max_{Z \in \FPi} f(P,Z)=\max_{Z\in \FPi} \min_{P\in \mF} f(P,Z).
\end{align}
For each policy $\pi\in \Pi$, let
    $\beta_\pi \in \argmax_{\beta\in \mF} \beta(\pi)$
be a distribution which maximizes the probability of selecting $\pi$. Such distribution exists because $\beta\mapsto \beta(\pi)$ is a continuous function on a compact set $\mF$. Recall that
    $\alpha_\pi = \beta_\pi(\pi)$.

Given any $Z\in\FPi$, define distribution $P_Z\in \FPi$ by
    $P_Z(\pi)=\sum_{\phi\in \Pi}\; Z(\phi) \,\beta_{\phi}(\pi)$.
Note that $P_Z$ is a convex combination of distributions in $\mF$. Since $\mF$ is convex, it follows that $P_Z\in \mF$.  Also, note that
    $ P_Z(a|x) \geq \sum_{\pi\in \Pi:\; \pi(x)=a} Z(\pi)\, \alpha_{\pi}$.
Letting $P'_Z$ be the noisy version of $P_Z$, we have:
\begin{align*}
\min_{P\in \mF} f(P,Z)
    &\leq f(P_Z,Z)
    = \E_{x\sim \DX}\left[\sum_{\pi}
        \frac{Z(\pi)\, \alpha_{\pi}}{P'_Z(\pi(x)|x)} \right] \\
     &= \E_{x\sim \DX}\left[\sum_{a\in A} \;\;
        \sum_{\pi\in \Pi:\;\pi(x)=a} \;
        \frac{Z(\pi)\, \alpha_{\pi}}{P'_Z(a|x)} \right]
    = \E_{x\sim \DX} \left[\sum_{a\in X} \;
        \frac{ \sum_{\pi\in\Pi:\; \pi(x)=a} Z(\pi)\, \alpha_{\pi}}
            {(1-\noiseProb)P_Z(a|x)+\noiseProb/K} \right] \\
    &\leq \E_{x\sim \DX}\left[\sum_{a\in X} \frac{1}{1-\noiseProb}\right]=\frac{K}{1-\noiseProb}
    \leq 2K.
\end{align*}
Thus, by~\eqref{eq:min-max} we obtain~\eqref{eq:min-max-leq}.
\end{proof}

To analyze Equations~(\ref{eq:algo-estimate-r}-\ref{eq:algo-estimate-c}),
we will use Bernstein's inequality for martingales~\citep{Fre75}, via the following formulation from \cite{BestofBoth-colt12}:

\begin{lemma}%[Lemma 4.4 in \cite{BestofBoth-colt12}]
\label{lm:martingale-convergence}
Let $\mG_0 \subseteq \mG_1\subseteq \ldots \subseteq \mG_n$ be a filtration, and $X_1,\ldots,X_n$ be real random variables such that $X_t$ is $\mG_t$-measurable, $\E(X_t|\mG_{t-1})=0$ and $|X_t|\leq b$ for some $b>0$. Let $V_n=\sum_{t=1}^n \E(X_t^2|\mG_{t-1})$. Then with probability at least $1-\delta$ it holds that
\begin{align*} %\label{eq:lm:martingale-convergence}
\textstyle
\sum_{t=1}^n X_t\leq \sqrt{4V_n\log(n\delta^{-1})+5b^2\log^2(n\delta^{-1})}.
\end{align*}
\end{lemma}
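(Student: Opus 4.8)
The plan is to derive this as an instance of Freedman's martingale version of Bernstein's inequality, by building the standard exponential supermartingale and then converting the resulting tail bound into the stated self-bounding form (with $V_n$ appearing inside the square root) through a peeling argument over the possible magnitude of $V_n$.

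First I would record the elementary per-step bound on the conditional moment generating function. Fix $\lambda>0$ and set $\psi(\lambda)=(e^{\lambda b}-1-\lambda b)/b^2$. Using that $x\mapsto (e^x-1-x)/x^2$ is nondecreasing, together with $X_t\le b$, one obtains the pointwise inequality $e^{\lambda X_t}\le 1+\lambda X_t+\psi(\lambda)\,X_t^2$; taking $\E(\cdot\mid\mG_{t-1})$ and using $\E(X_t\mid\mG_{t-1})=0$ and $1+y\le e^y$ yields $\E(e^{\lambda X_t}\mid\mG_{t-1})\le \exp\!\big(\psi(\lambda)\,\E(X_t^2\mid\mG_{t-1})\big)$. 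I would then form $M_t=\exp\big(\lambda \sum_{s\le t}X_s-\psi(\lambda)\,\sum_{s\le t}\E(X_s^2\mid\mG_{s-1})\big)$. Because $\E(X_t^2\mid\mG_{t-1})$ is $\mG_{t-1}$-measurable, the previous step gives $\E(M_t\mid\mG_{t-1})\le M_{t-1}$, so $M_t$ is a supermartingale with $M_0=1$ and $\E M_n\le 1$. Markov's inequality then gives, for each fixed $\lambda$ and $\delta'\in(0,1)$, that with probability at least $1-\delta'$
$$\sum_{t=1}^n X_t\le \frac{\log(1/\delta')}{\lambda}+\frac{\psi(\lambda)}{\lambda}\,V_n.$$
Bounding $\psi(\lambda)\le \lambda^2/\big(2(1-\lambda b/3)\big)$ for $\lambda b<3$ puts this in the familiar Bernstein shape, in which, for a prescribed upper budget $v$ on $V_n$, the tuned choice $\lambda\asymp\sqrt{\log(1/\delta')/v}$ yields a bound of order $\sqrt{v\log(1/\delta')}+b\log(1/\delta')$.

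The main obstacle — and the only non-routine part — is that $V_n$ is itself random, whereas the optimal $\lambda$ above depends on the size of $V_n$; this is precisely why $V_n$ must be allowed to appear inside the square root. I would resolve this by peeling: since $\E(X_t^2\mid\mG_{t-1})\le b^2$, we have $V_n\le n b^2$ deterministically, so I cover the range of $V_n$ by at most $n$ geometrically spaced budgets $v_j$, apply the fixed-$\lambda$ bound with $\lambda=\lambda_j$ tuned to $v_j$ and failure probability $\delta/N$ (with $N\le n$ the number of levels), and take a union bound. On the resulting high-probability event I select the level $v_j$ with $V_n\le v_j$ and $v_j$ within a constant factor of $V_n$, which converts the estimate into one depending on the realized $V_n$; since $\log(N/\delta)\le \log(n/\delta)$, the confidence term becomes $L:=\log(n\delta^{-1})$.

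Finally, squaring the resulting expression of the form $c_1\sqrt{V_n L}+c_2 bL$ and absorbing the cross term produces a bound of the shape $\sqrt{4V_n L+5b^2L^2}$. Matching the precise constants $4$ and $5$ is a careful but routine optimization, trading the grid ratio against the $\log N$ cost; the slack in the lemma (it uses $\log(n/\delta)$, which dominates the $\log(\log n/\delta)$ a sharper grid would require) leaves ample room, so no delicate balancing is needed to close the argument.
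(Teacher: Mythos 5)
The paper contains no proof of this lemma to compare against: it is imported as a known result --- Bernstein's (Freedman's) inequality for martingales --- with a citation to Freedman [1975] and to the COLT 2012 paper of Bubeck and Slivkins for this exact formulation. Your proposal is therefore a reconstruction of the cited result's proof, and it follows the standard, correct route: the per-step bound $\E(e^{\lambda X_t}\mid\mG_{t-1})\le\exp\big(\psi(\lambda)\,\E(X_t^2\mid\mG_{t-1})\big)$, the exponential supermartingale (using, correctly, that the conditional variances are $\mG_{t-1}$-measurable so the compensator is predictable), Markov's inequality at fixed $\lambda$, and a union bound over a geometric grid of variance budgets so that the random $V_n$ may appear inside the square root; this peeling is indeed the one non-routine idea, and it is exactly what makes the confidence term $\log(n\delta^{-1})$ rather than $\log(\delta^{-1})$. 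Two details need more care than your closing remark admits, though both fit inside your framework. With $L=\log(n\delta^{-1})$, the tuned fixed-$\lambda$ bound at the selected level gives $\sum_t X_t\le\sqrt{2v_{j^*}L}+bL/3$; if the grid ratio is exactly $2$, then $v_{j^*}\le 2V_n$ makes the leading term $2\sqrt{V_nL}$, which saturates the constant $4$ and leaves no slack to absorb the cross term $\tfrac{4}{3}bL\sqrt{V_nL}$ (for $V_n$ of order $nb^2$ and moderately small $L$ the target inequality then genuinely fails), so the ratio must be chosen strictly below $2$ --- any ratio up to $88/45$ works, absorbing the cross term via $2xy\le\epsilon x^2+y^2/\epsilon$ with $\epsilon=1/44$. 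Also, a geometric grid cannot cover arbitrarily small variances, so you need a floor level $v_1$ of order $b^2L$, below which the $bL$ term dominates. With those repairs the argument closes, so this is a correct proof modulo constant bookkeeping that is slightly more delicate than you claim, not a gap. As for what each route buys: the paper's citation keeps its analysis short and outsources all constants to prior work; your derivation makes the lemma self-contained and auditable, including the specific constants $4$ and $5$ that the statement asserts.
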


\begin{lemma}\label{lm:correctness-estimates}
With probability at least $1-\tfrac{1}{T}$,
Equations~(\ref{eq:algo-estimate-r}-\ref{eq:algo-estimate-c}) hold for all rounds $t$ and  policies $\pi\in\Pi$.
\end{lemma}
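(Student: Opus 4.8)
The plan is to apply the martingale concentration bound of Lemma~\ref{lm:martingale-convergence} separately to the reward estimate and to each resource-consumption estimate, for each policy $\pi$, and then union-bound over all policies, all resources, and all rounds. Fix a policy $\pi$ and a round $t$; I would consider the reward case first, the consumption case being identical. Define the martingale difference sequence $X_s = \empir{r}_s(\pi) - r(\pi)$, adapted to the filtration $\mG_s$ generated by the first $s$ rounds (contexts, internal randomization, and observed outcomes). The estimator is unbiased conditionally on $\mG_{s-1}$, since $\E_{x,a}[\,r\,\indicator{a=\pi(x)}/P'_s(\pi(x)|x)\,] = r(\pi)$ by the importance-weighting identity, so $\E(X_s|\mG_{s-1})=0$ as required.

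The heart of the argument is to control the two quantities that enter Bernstein's bound: the almost-sure range $b$ of each $X_s$ and the conditional-variance sum $V_n = \sum_s \E(X_s^2|\mG_{s-1})$. Here is where the balancing condition \eqref{eq:alg-balanced} does its work. Because $r\in[0,1]$ and the estimator divides by $P'_s(\pi(x)|x) = (1-\noiseProb)P_s(\pi(x)|x)+\noiseProb/K \geq \noiseProb/K$, each term is bounded by $b \le K/\noiseProb$. For the variance, the key computation is
\begin{align*}
\E(X_s^2\mid\mG_{s-1}) \;\le\; \E_{x\sim\DX}\!\left[\frac{r(\pi(x))^2}{P'_s(\pi(x)|x)}\right]
    \;\le\; \E_{x\sim\DX}\!\left[\frac{1}{(1-\noiseProb)P_s(\pi(x)|x)+\noiseProb/K}\right]
    \;\le\; \frac{2K}{\alpha_{\pi,s}},
\end{align*}
where the middle inequality uses $r\in[0,1]$ and the final inequality is exactly the balancing guarantee \eqref{eq:alg-balanced} satisfied by $P_s\in\mF_s$ (provided by Lemma~\ref{lm:correctness-balance}). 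Thus $V_n \le \sum_{s=1}^{t-1} 2K/\alpha_{\pi,s}$; since $\alpha_{\pi,s}$ is the quantity appearing in the definition of the confidence radius, this sum is what produces the $K/\alpha_{\pi,t}$ argument of $\rad_t(\cdot)$. I would then feed $V_n$ and $b$ into Lemma~\ref{lm:martingale-convergence} with $\delta$ set to roughly $(d\,T\,|\Pi|)^{-2}$, divide through by $t-1$ to pass from the sum to the time-average $\hat r_t(\pi)$, and verify that the resulting deviation bound matches $\rad_t(K/\alpha_{\pi,t}) = \sqrt{\chernoffC\,(K/\alpha_{\pi,t})/t}$ for the stated choice $\chernoffC=\Theta(\log(d\,T\,|\Pi|))$. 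Finally, a union bound over the $d$ estimators (reward plus $d-1$ resources), the $|\Pi|$ policies, and the $T$ rounds turns the per-event failure probability $\delta$ into a total failure probability at most $1/T$.

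The main obstacle I anticipate is technical bookkeeping rather than a conceptual leap: one must handle the fact that $\alpha_{\pi,s}$ varies across rounds (so $V_n$ is a sum of distinct terms, not $(t-1)$ copies of a single bound), and the Bernstein bound's second term $5b^2\log^2(\cdot)$ involving $b=K/\noiseProb$ must be shown to be dominated by the leading $\sqrt{V_n\log}$ term after dividing by $(t-1)$, so that it does not spoil the clean $\sqrt{\chernoffC\,\nu/t}$ form of $\rad_t$. This requires using the explicit value $\noiseProb = \min(\tfrac12,\sqrt{(K/T)\log(K\,T|\Pi|)})$ to check that the noise floor is large enough to keep $b$ under control relative to the horizon. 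Care is also needed because $\alpha_{\pi,s}$ and $P_s$ are themselves $\mG_{s-1}$-measurable random objects (they depend on the history through $\mI$ and $\mF_s$), but this is fine since the martingale framework only requires the conditional mean and variance bounds to hold given $\mG_{s-1}$, which they do.
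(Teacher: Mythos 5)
Your proposal follows the same route as the paper's proof: importance-weighted unbiased estimators, the Bernstein-type martingale bound of Lemma~\ref{lm:martingale-convergence} with range $b \leq K/\noiseProb$ and per-round conditional variance at most $2K/\alpha_{\pi,s}$ via the balancing condition \eqref{eq:alg-balanced}, followed by a union bound over policies, resources, and rounds. All of those steps are correct, and your closing remarks about measurability of $P_s$ and $\alpha_{\pi,s}$ with respect to $\mG_{s-1}$ are handled exactly as you describe.

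However, the one issue you flag and defer as ``technical bookkeeping'' --- that $V_n$ is a sum of distinct terms $2K/\alpha_{\pi,s}$ over $s<t$ rather than $(t-1)$ copies of $2K/\alpha_{\pi,t}$ --- is not mere bookkeeping: it is precisely where a structural fact about the algorithm must be invoked, and your proposal does not supply it. The paper resolves it in one line: since the confidence region $\mI$ only shrinks over time (the algorithm eliminates \eoss and never adds them back), the sets $\PotPerf_t$ and hence $\mF_t$ are nested decreasing, so $\alpha_{\pi,t}=\max_{P\in\mF_t}P(\pi)$ is non-increasing in $t$. Therefore $\alpha_{\pi,s}\geq\alpha_{\pi,t}$ for all $s\leq t$, each variance term satisfies $2K/\alpha_{\pi,s}\leq 2K/\alpha_{\pi,t}$, and $V_n\leq 2K(t-1)/\alpha_{\pi,t}$, which is what yields the clean form $\rad_t\left(K/\alpha_{\pi,t}\right)$ after dividing by $t-1$. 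Without this monotonicity the argument genuinely fails: if $\alpha_{\pi,s}$ could be much smaller than $\alpha_{\pi,t}$ in early rounds $s$, the sum $\sum_{s<t}2K/\alpha_{\pi,s}$ would have no useful bound in terms of $\alpha_{\pi,t}$, and the stated confidence radius could not be derived. So the missing idea is small but essential; adding the observation that $\mI$ (and hence $\alpha_{\pi,\cdot}$) is monotone over rounds completes your proof and makes it coincide with the paper's.
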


\begin{proof}
Let us prove \eqref{eq:algo-estimate-r}. (The proof of \refeq{eq:algo-estimate-c} is similar.) Fix round $t$ and policy $\pi\in\Pi$. We bound the conditional variance of the estimators $\empir{r}_t(\pi)$. Specifically, let $\mG_t$ be the $\sigma$-algebra induced by all events up to (but not including) round $t$. Then
\begin{align*}
\E\left[ \empir{r}_t(\pi)^2 \,|\, \mG_t \right]
= \E_{x\sim\DX,\; a\sim P'_t}\;
        \left[\frac{r_t^2\;\indicator{\pi(x)=a}}{ P'_t(a|x)^2}\right]
\leq \E_{x\sim\DX}
        \left[\frac{1}{ P'_t(\pi(x)|x)}\right]
\leq \frac{2K}{\alpha_{\pi,t}}.
\end{align*}
The last inequality holds by the algorithm's choice of distribution $P_t$.
Since the confidence region $\mI$ in our algorithm is non-increasing over time, it follows that $\alpha_{\pi,t}$ is non-increasing in $t$, too. We conclude that
$ \mathtt{Var}\left[ \empir{r}_s(\pi) \,|\, \mG_s \right]  \leq 2K/\alpha_{\pi,t} $
for each round $s\leq t$. Therefore, noting that
    $\empir{r}_t(\pi)\leq  1/P'(\pi(x_t) | x_t) \leq   K/\noiseProb$,
we obtain \eqref{eq:algo-estimate-r} by applying Lemma~\ref{lm:martingale-convergence} with
    $X_t = \empir{r}_t(\pi) - r(\pi)$.
\end{proof}

\section{Regret analysis: proof of Theorem~\ref{thm:intro-algo}}
\label{sec:analysis}

We provide the key steps of the proof; the details can be found in Section~\ref{app:regret-analysis}.

Let $\mI_t$ and $\F_t$ be, resp., the confidence region $\mI$ and the set $\F$ of potentially LP-perfect distributions computed in round $t$. Let $\Conv(\F_t)$ be the convex hull of $\F_t$.

First we bound the deviations within the confidence region.

\begin{lemma}\label{lm:samedist-differentmu-revcost-convergence}
For any two \eoss $\mu',\mu''\in \mI_t$ and a distribution $P\in \Conv(\F_t)$:
\begin{align}
|c_i(P,\mu')-c_i(P,\mu'')|&\leq \rad_t\left(dK\right)
\qquad \text{for each resource $i$}
    \label{eq:lm:samedist-differentmu-revcost-convergence-c}\\
|r(P,\mu')-r(P,\mu'')|&\leq \rad_t\left(dK\right)
    \label{eq:lm:samedist-differentmu-revcost-convergence-r}
\end{align}
\end{lemma}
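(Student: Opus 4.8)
The plan is to prove the reward bound \eqref{eq:lm:samedist-differentmu-revcost-convergence-r}; the consumption bound \eqref{eq:lm:samedist-differentmu-revcost-convergence-c} follows verbatim with $c_i$ in place of $r$ and \eqref{eq:algo-estimate-c} in place of \eqref{eq:algo-estimate-r}. Write $r_\mu(\pi)$ for the reward value that the \eos $\mu$ assigns to policy $\pi$, so that $r(P,\mu)=\sum_\pi P(\pi)\,r_\mu(\pi)$. The starting observation is that membership in the confidence region pins down each coordinate: by the elimination rule that defines $\mI_t$ (these are exactly the inequalities \eqref{eq:algo-estimate-r}), every $\mu\in\mI_t$ satisfies $|r_\mu(\pi)-\hat r_t(\pi)|\le \rad_t(K/\alpha_{\pi,t})$ for all $\pi$. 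Hence for $\mu',\mu''\in\mI_t$ the triangle inequality gives, coordinatewise, $|r_{\mu'}(\pi)-r_{\mu''}(\pi)|\le 2\,\rad_t(K/\alpha_{\pi,t})$.

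The second ingredient is a reduction to distributions of small support. Since $\Conv(\F_t)$ is the convex hull of the potentially LP-perfect distributions, I would write $P=\sum_j \lambda_j Q_j$ with $Q_j\in\F_t$, $\lambda_j\ge 0$, $\sum_j\lambda_j=1$ (finitely many terms by Carath\'eodory). Because $r(\cdot,\mu)$ is linear in the distribution, $|r(P,\mu')-r(P,\mu'')|\le\sum_j\lambda_j\,|r(Q_j,\mu')-r(Q_j,\mu'')|$, so it suffices to prove the bound for a single potentially LP-perfect $Q$. The payoff of this reduction is that every such $Q$ has $|\support(Q)|\le d$ (exactly the support condition in the definition of LP-perfect, cf. Lemma~\ref{lm:perfect}), whereas a generic $P\in\Conv(\F_t)$ need not.

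Now fix such a $Q$ and combine the two ingredients. Expanding and using the coordinatewise bound,
\[
|r(Q,\mu')-r(Q,\mu'')|
\le \sum_{\pi\in\support(Q)} Q(\pi)\cdot 2\,\rad_t(K/\alpha_{\pi,t})
= 2\sqrt{\chernoffC K/t}\sum_{\pi\in\support(Q)}\frac{Q(\pi)}{\sqrt{\alpha_{\pi,t}}}.
\]
Here I use the crucial inequality $Q(\pi)\le\alpha_{\pi,t}$, which holds because $\alpha_{\pi,t}=\max_{P'\in\F_t}P'(\pi)\ge Q(\pi)$. By Cauchy--Schwarz over the at most $d$ support points,
\[
\sum_{\pi\in\support(Q)}\frac{Q(\pi)}{\sqrt{\alpha_{\pi,t}}}
\le \sqrt{|\support(Q)|}\;\Big(\sum_{\pi}\tfrac{Q(\pi)^2}{\alpha_{\pi,t}}\Big)^{1/2}
\le \sqrt{d}\;\Big(\sum_\pi Q(\pi)\Big)^{1/2}=\sqrt{d},
\]
where the last step uses $Q(\pi)^2/\alpha_{\pi,t}\le Q(\pi)$ (again from $Q(\pi)\le\alpha_{\pi,t}$) and $\sum_\pi Q(\pi)=1$. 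Substituting gives $|r(Q,\mu')-r(Q,\mu'')|\le 2\sqrt{\chernoffC dK/t}=2\,\rad_t(dK)$, and averaging back over $j$ yields the same bound for $P$.

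The main subtlety is the interplay that turns a naive factor of $d$ into $\sqrt d$: the support bound $|\support(Q)|\le d$ supplies the $\sqrt d$ from Cauchy--Schwarz, while the inequality $Q(\pi)\le\alpha_{\pi,t}$ (built into the definition of $\alpha_{\pi,t}$ and into the per-policy confidence radius $\rad_t(K/\alpha_{\pi,t})$) is precisely what makes the weighted $\ell_2$ mass $\sum_\pi Q(\pi)^2/\alpha_{\pi,t}$ collapse to at most $1$. Dropping either fact degrades the bound. The only loose end is the leading factor $2$, which is harmless: the free constant $\chernoffC=\Theta(\log(dT|\Pi|))$ absorbs it, so $2\,\rad_t(dK)$ equals $\rad_t(dK)$ after rescaling $\chernoffC$ by a factor of $4$.
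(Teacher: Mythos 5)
Your proposal is correct and follows essentially the same route as the paper's own proof: a coordinatewise bound from the definition of $\mI_t$, a linearity reduction to a single distribution in $\F_t$, and then the combination of $|\support(Q)|\leq d$ with $Q(\pi)\leq\alpha_{\pi,t}$ via Cauchy--Schwarz to get $\rad_t(dK)$. If anything, you are more careful than the paper, which silently drops the factor of $2$ from the triangle inequality through the empirical estimate $\hat r_t(\pi)$; as you note, this constant is harmlessly absorbed into $\chernoffC$.
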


\begin{proof}
Let us prove \eqref{eq:lm:samedist-differentmu-revcost-convergence-r}. (\eqref{eq:lm:samedist-differentmu-revcost-convergence-c} is proved similarly.)
By definition of $\mI_t$:
\begin{align*}
|r(P,\mu')-r(P,\mu'')|
&\leq \textstyle  \sum_{\pi\in \Pi}P(\pi)\; |r(\pi,\mu')-r(\pi,\mu'')| \\
&\leq \textstyle \sum_{\pi\in \Pi}P(\pi)\; \rad_t\left(K/\alpha_{\pi,t}\right).
\end{align*}
It remains to prove that the right-hand side is at most $\rad_t(dK)$. By linearity, it suffices to prove this for $P\in \F_t$. So let us assume $P\in \F_t$ from here on. Recall that
$|\support(P)|\leq d$ since $P$ is LP-perfect, and
    $P(\pi)\leq \alpha_{\pi,t}$
for any policy $\pi\in\Pi$. Therefore:
\begin{align*}
\textstyle \sum_{\pi\in \Pi}P(\pi)\; \rad_t\left(K/\alpha_{\pi,t}\right)
&\leq \textstyle \sum_{\pi\in \Pi}\;  \rad_t\left(K P(\pi)\right) \\
&\leq \textstyle \rad_t\left(dK\sum_{\pi\in \Pi} P(\pi)\right)
= \rad_t\left(dK\right). \qedhere
\end{align*}
\end{proof}

Using Lemma~\ref{lm:samedist-differentmu-revcost-convergence} and a long computation (fleshed out in Section~\ref{app:regret-analysis}), we prove the following.

\begin{lemma}\label{lm:lp-samedist-convergence}
For any two \eoss $\mu',\mu''\in \mI_t$ and a distribution $P\in \Conv(\F_t)$:
\begin{align*}%\label{eq:APP:lm:lplatentconvergence}
\LP(P,\mu')-\LP(P,\mu'') \leq (\tfrac{1}{B}\,\LP(P,\mu')+2)\cdot T\cdot \rad_t(dK).
\end{align*}
\end{lemma}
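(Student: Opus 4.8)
The plan is to reduce the statement about the nonlinear quantity $\LP(P,\mu)=r(P,\mu)\min_i B/c_i(P,\mu)$ to the two linear deviation bounds already established in Lemma~\ref{lm:samedist-differentmu-revcost-convergence}, by controlling separately how the reward factor and the ``time'' factor $\min_i B/c_i(P,\mu)$ move as we pass from $\mu''$ to $\mu'$. First I would write $\LP(P,\mu')-\LP(P,\mu'')$ and add and subtract a hybrid term so that one difference holds the reward fixed and varies the $\min_i B/c_i$ factor, while the other holds that factor fixed and varies the reward. Concretely, with $t(\mu)=\min_i B/c_i(P,\mu)$, I would use
\begin{align*}
\LP(P,\mu')-\LP(P,\mu'')
 = r(P,\mu')\,[\,t(\mu')-t(\mu'')\,] + t(\mu'')\,[\,r(P,\mu')-r(P,\mu'')\,].
\end{align*}
The second summand is immediately controlled: $t(\mu'')\leq T$ (since each $c_i(P,\mu'')\geq B/T$ for the LP-perfect/normalized instance, so $B/c_i\leq T$) and the reward difference is at most $\rad_t(dK)$ by \eqref{eq:lm:samedist-differentmu-revcost-convergence-r}, giving a contribution of at most $T\,\rad_t(dK)$, which accounts for the additive ``$2$'' part of the claimed bound (with room to spare).

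The main work is the first summand, where I must bound $t(\mu')-t(\mu'')=\min_i B/c_i(P,\mu')-\min_j B/c_j(P,\mu'')$. Here the key step is that a minimum of ratios is Lipschitz in the underlying consumptions in a way that can be expressed through $t(\mu')$ itself: letting $i^\star$ attain the minimum at $\mu'$, I would estimate
\begin{align*}
\frac{B}{c_{i^\star}(P,\mu')}-\frac{B}{c_{i^\star}(P,\mu'')}
 = \frac{B\,[\,c_{i^\star}(P,\mu'')-c_{i^\star}(P,\mu')\,]}{c_{i^\star}(P,\mu')\,c_{i^\star}(P,\mu'')},
\end{align*}
using \eqref{eq:lm:samedist-differentmu-revcost-convergence-c} to bound the numerator's consumption difference by $\rad_t(dK)$, and absorbing one factor $B/c_{i^\star}(P,\mu')$ into $t(\mu')$ and one factor $B/c_{i^\star}(P,\mu'')\le T$ into an extra $T$. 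This yields $t(\mu')-t(\mu'')\leq \tfrac{1}{B}\,t(\mu')\cdot T\cdot\rad_t(dK)$, and multiplying by $r(P,\mu')$ turns $r(P,\mu')\,t(\mu')$ into $\LP(P,\mu')$, producing exactly the $\tfrac{1}{B}\LP(P,\mu')\cdot T\cdot\rad_t(dK)$ term. One must handle the direction of the inequality carefully, since the minimizing index may differ for $\mu'$ and $\mu''$; the clean way is to note $t(\mu')-t(\mu'')\leq B/c_{i^\star}(P,\mu')-B/c_{i^\star}(P,\mu'')$ because replacing the minimizer $i^\star$ of $\mu'$ into the expression for $\mu''$ can only increase $t(\mu'')$.

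I expect the main obstacle to be the denominator control: the bound $B/c_i\le T$ relies on $c_i(P,\mu)\ge B/T$, which is exactly the LP-perfect normalization guaranteed by Lemma~\ref{lm:perfect} and carried over to $\Conv(\F_t)$ — I would justify that this lower bound on consumption holds (at least effectively, via the time resource whose per-round consumption is deterministically $1$, forcing $c_{\mathrm{time}}(P,\mu)=1$ and hence $t(\mu)\le B\le T$ through the binding constraint), so that both hybrid factors are genuinely bounded by $T$. Once the boundedness of $t(\mu'')$ and of $B/c_{i^\star}(P,\mu'')$ by $T$ is secured, the rest is the two-line algebra above combined with \eqref{eq:lm:samedist-differentmu-revcost-convergence-c} and \eqref{eq:lm:samedist-differentmu-revcost-convergence-r}.
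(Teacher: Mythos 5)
Your overall plan --- splitting $\LP(P,\mu')-\LP(P,\mu'')$ into a reward-difference term and a term involving $t(\mu)=\min_i B/c_i(P,\mu)$ --- is sound, but the central step is carried out with the inequality pointing the wrong way. You claim $t(\mu')-t(\mu'')\leq B/c_{i^\star}(P,\mu')-B/c_{i^\star}(P,\mu'')$ for $i^\star$ the index attaining the minimum at $\mu'$, justified by saying that substituting $i^\star$ into the expression for $\mu''$ ``can only increase $t(\mu'')$.'' But increasing the \emph{subtracted} quantity makes the difference \emph{smaller}: from $t(\mu'')\leq B/c_{i^\star}(P,\mu'')$ one gets $t(\mu')-t(\mu'')\geq B/c_{i^\star}(P,\mu')-B/c_{i^\star}(P,\mu'')$, a lower bound rather than the upper bound you need. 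Concretely, with two resources, $c_1(P,\mu')=0.5$, $c_2(P,\mu')=0.4$, $c_1(P,\mu'')=0.4$, $c_2(P,\mu'')=0.5$, and $\rad_t(dK)=0.1$, we have $i^\star=1$ and $t(\mu')=t(\mu'')=2B$, so the left-hand side is $0$ while your right-hand side is $2B-2.5B=-B/2$. Evaluating instead at the minimizer of $\mu''$ fixes the direction but produces the factor $B/c_{j^\star}(P,\mu')$, which is $\geq t(\mu')$, not $\leq t(\mu')$, so the desired $\tfrac{1}{B}\LP(P,\mu')$ term does not emerge. A second gap: you invoke $B/c_{i^\star}(P,\mu'')\leq T$, but only the \emph{minimum} $t(\mu'')$ is guaranteed to be at most $T$ (via the time resource); an individual ratio $B/c_i(P,\mu'')$ can be arbitrarily large unless one additionally assumes something like $T\,\rad_t(dK)=O(B)$, which the lemma does not assume.

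The strategy is repairable, and the repaired argument is genuinely simpler than the paper's proof (which splits into cases according to whether the time constraint or a resource constraint is binding at $\mu'$ and at $\mu''$, handling the hardest mixed case via a monotone interpolation $\delta\mapsto B(r''+\delta)/(c''_i-\delta)$). The fix is to avoid comparing coordinatewise at a fixed index and instead compare the maxima of the consumptions: writing $t(\mu)=B/\max_i c_i(P,\mu)$ and $M'=\max_i c_i(P,\mu')$, inequality \eqref{eq:lm:samedist-differentmu-revcost-convergence-c} applied to each coordinate gives $\max_i c_i(P,\mu'')\leq M'+\rad_t(dK)$, hence
\begin{align*}
t(\mu')-t(\mu'')
\;\leq\; \frac{B}{M'}-\frac{B}{M'+\rad_t(dK)}
\;=\; \frac{B\,\rad_t(dK)}{M'\,\bigl(M'+\rad_t(dK)\bigr)}
\;\leq\; \frac{t(\mu')^2}{B}\,\rad_t(dK)
\;\leq\; \frac{t(\mu')\,T}{B}\,\rad_t(dK),
\end{align*}
where the last step uses $t(\mu')\leq T$, enforced by the time resource whose per-round consumption is $B/T$ after the uniform-budget rescaling, so that $M'\geq B/T$. (If one prefers not to pin the time coordinate inside the confidence region, the case $M'<B/T$, i.e.\ $t(\mu')=T$, needs one extra line: then $t(\mu')-t(\mu'')\leq T-B/(B/T+\rad_t(dK))\leq T^2\rad_t(dK)/B$.) Multiplying by $r(P,\mu')$ turns this into $\tfrac{1}{B}\,\LP(P,\mu')\cdot T\cdot\rad_t(dK)$, and adding your (correct) bound $t(\mu'')\,\bigl(r(P,\mu')-r(P,\mu'')\bigr)\leq T\,\rad_t(dK)$ on the other summand proves the lemma, in fact with constant $1$ in place of $2$.
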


Let $\Rew_t$ and $\empirval{c}_{t,i}$ be, respectively, the (realized) total reward and average consumption of resource $i$ up to and including round $t$. Recall that $P'_t$ is the noisy version of distribution $P_t$ chosen by the algorithm in round $t$. Given $P_t$, the expected revenue and resource-$i$ consumption in round $t$ is, respectively, $r(P'_t,\mu)$ and $c_i(P'_t,\mu)$. Denote
    $\avgval{r}_t=\frac{1}{t}\sum_{i=1}^t r(P'_t,\mu)$
and
    $\avgval{c}_{i,t}=\frac{1}{t}\sum_{i=1}^t c_i(P'_t,\mu)$.

\xhdr{Analysis of a clean execution.}
Henceforth, without further notice, we assume a {\bf\em clean execution} where several high-probability conditions are satisfied. Formally, the algorithm's execution is \emph{clean} if in each round $t$
 Equations~(\ref{eq:algo-estimate-r}-\ref{eq:algo-estimate-c}) are satisfied, and moreover
%\begin{align*}
$\min\left(
    | \tfrac{1}{t}\,\Rew_t  - \avgval{r}_t| ,\;
    | \empirval{c}_{t,i} - \avgval{c}_{t,i}|
\right) \leq \rad_t(1).$
%\end{align*}

In particular, the set $\F_t$ of potentially LP-perfect distributions indeed contains a LP-perfect distribution. By Lemma~\ref{lm:correctness-estimates} and Azuma-Hoeffding Inequality, clean execution happens with probability at least $1-\tfrac{1}{T}$. Thus, it suffices to lower-bound the total reward $\Rew_T$ for a clean execution.

\begin{lemma}\label{lm:sandwich}
For any distribution $P'\in \Conv(\F_t)$ and any \eos $\mu\in \mI_t$,
\begin{align}\label{eq:lm:sandwich}
    \min_{P\in \F_t}\LP(P,\mu)\leq \LP(P',\mu)\leq \max_{P\in \F_t}\LP(P,\mu).
\end{align}
\end{lemma}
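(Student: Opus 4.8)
The plan is to fix the \eos $\mu\in\mI_t$ and study $P\mapsto\LP(P,\mu)$ as a function of the distribution alone, exploiting the closed form $\LP(P,\mu)=r(P,\mu)\min_i B/c_i(P,\mu)$ from \eqref{eq:LP-value}. The starting observation is that both $r(P,\mu)=\sum_\pi P(\pi)\,r(\pi,\mu)$ and each $c_i(P,\mu)=\sum_\pi P(\pi)\,c_i(\pi,\mu)$ are \emph{linear} in $P$. Hence I would rewrite $\LP(P,\mu)=\min_i g_i(P)$, where $g_i(P)=B\,r(P,\mu)/c_i(P,\mu)$ is a ratio of two affine functions of $P$, i.e. linear-fractional. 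Each $g_i$ is therefore quasilinear: for every threshold $\alpha\ge 0$ both the superlevel set $\{P:B\,r(P,\mu)-\alpha\,c_i(P,\mu)\ge 0\}$ and the sublevel set are halfspaces, hence convex.

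For the lower bound, I would use that $\LP(\cdot,\mu)=\min_i g_i$ is a minimum of quasiconcave functions and is thus itself quasiconcave. Since $P'\in\Conv(\F_t)$, write $P'=\sum_j\lambda_j P_j$ as a convex combination of points $P_j\in\F_t$; quasiconcavity gives $\LP(P',\mu)\ge\min_j\LP(P_j,\mu)\ge\min_{P\in\F_t}\LP(P,\mu)$. Equivalently, the minimum of the quasiconcave function $\LP(\cdot,\mu)$ over the polytope $\Conv(\F_t)$ is attained at an extreme point, and every extreme point of $\Conv(\F_t)$ lies in $\F_t$. This direction is clean and needs nothing beyond the linearity of $r,c_i$ and the quasiconcavity just established.

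The upper bound is the crux. Here I would pick a resource $i'$ attaining $\max_i c_i(P',\mu)$, so that $\LP(P',\mu)=g_{i'}(P')$, and invoke quasi\emph{convexity} of the single linear-fractional function $g_{i'}$ to get $g_{i'}(P')\le\max_j g_{i'}(P_j)$. The obstacle is that $g_{i'}(P_j)$ may strictly exceed $\LP(P_j,\mu)=\min_i g_i(P_j)$ whenever resource $i'$ is not binding at $P_j$, so this per-resource bound does not by itself yield $\LP(P',\mu)\le\max_{P\in\F_t}\LP(P,\mu)$ — indeed a minimum of quasilinear functions is not quasiconvex in general. Resolving this is exactly where the structure of $\F_t$ must enter: each $P_j\in\F_t$ is LP-perfect for some $\mu_j\in\mI_t$, and together with the time resource (whose per-round consumption is the constant $B/T$, so that it is binding whenever every other resource consumes at most $B/T$) the support and budget conditions $|\support(P_j)|\le d$, $c_i(P_j,\mu_j)\le B/T$ control which constraint binds at the extreme points. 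I expect promoting the per-resource quasilinear estimate into a statement about $\LP=\min_i g_i$ — that is, forcing $g_{i'}(P_j)$ to coincide with $\LP(P_j,\mu)$ at the LP-perfect extreme points — to be the main difficulty, and I would attack it by combining the LP-perfect conditions with the within-confidence-region deviation bound of Lemma~\ref{lm:samedist-differentmu-revcost-convergence}, which links the values of $c_i(P_j,\cdot)$ at $\mu$ and at $\mu_j$.
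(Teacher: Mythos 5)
Your lower-bound argument is correct and coincides with the paper's: each $\eta_i(P,\mu)=B\,r(P,\mu)/c_i(P,\mu)$ is quasi-concave in $P$ (convex superlevel sets), hence $\LP(\cdot,\mu)=\min_i \eta_i$ is quasi-concave, and applying this to a finite convex combination $P'=\sum_j \lambda_j P_j$ with $P_j\in\F_t$ gives $\LP(P',\mu)\geq \min_j \LP(P_j,\mu)\geq \min_{P\in\F_t}\LP(P,\mu)$. This half of your proposal needs no further work.

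The genuine gap is the upper bound, which you explicitly leave open. You correctly diagnose that the per-resource estimate $g_{i'}(P')\leq \max_j g_{i'}(P_j)$ does not control $\max_j \LP(P_j,\mu)$, but the repair you sketch cannot succeed: $P_j$ is LP-perfect for $\mu_j$, not for $\mu$, so there is no way to force the resource $i'$ to be binding for $P_j$ under $\mu$; and invoking Lemma~\ref{lm:samedist-differentmu-revcost-convergence} necessarily introduces additive $\rad_t(dK)$ deviation terms, so at best you would obtain an approximate inequality of the kind appearing in Lemma~\ref{lm:lp-samedist-convergence}, whereas the lemma asserts an exact one. The missing idea is much simpler and involves no convexity at all: since $\mu\in\mI_t$, Lemma~\ref{lm:perfect} supplies a distribution $P^*$ that is LP-perfect for $\mu$, and by the very definition of $\F_t$ (all distributions that are LP-perfect for \emph{some} \eos in the confidence region $\mI_t$) this $P^*$ belongs to $\F_t$. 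Being LP-perfect, $P^*$ maximizes $\LP(\cdot,\mu)$ over \emph{all} of $\FPi$, not merely over $\Conv(\F_t)$, so
$\LP(P',\mu)\leq \sup_{P\in\FPi}\LP(P,\mu)=\LP(P^*,\mu)\leq \max_{P\in\F_t}\LP(P,\mu)$,
which is the paper's one-line proof of the second inequality. In other words, the maximum over the hull is attained inside $\F_t$ not because of any quasiconvexity of $\LP$ (as you note, a minimum of linear-fractional functions is generally not quasiconvex), but because $\F_t$ was constructed so as to contain a global maximizer of $\LP(\cdot,\mu)$ for every \eos $\mu$ in the confidence region.
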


\begin{proof}
The proof consists of two parts. The second inequality in~\eqref{eq:lm:sandwich} follows easily because the distribution which maximizes $\LP(P,\mu)$ by definition belongs to $\F_t$, and so
$$\LP(P',\mu)\leq \max_{P\in \Conv(\F_t)}\LP(P,\mu)=\max_{P\in \F_t}\LP(P,\mu).$$

To prove the first inequality in~\eqref{eq:lm:sandwich}, we first argue that $\LP(P,\mu)$ is a quasi-concave function of $P$. Denote
    $\eta_i(P,\mu)=B\cdot r(P,\mu)/c_i(P,\mu)$
for each resource $i$. Then $\eta_i$ is a quasi-concave function of $P$ since each \emph{level set} (the set of distributions $P$ that satisfy $\eta_i(P,\mu)\geq \alpha$ for some $\alpha\in \R$) is a convex set. Therefore $\LP(P,\mu)=\min_i \eta_i(P,\mu)$ is a quasi-concave function of $P$ as a minimum of quasi-concave functions.

Since $P'\in \Conv(\F_t)$, it is a convex combination
    $P'=\sum_{Q\in \F_t}\alpha_Q~Q$
with $\sum_{Q\in \F_t}\alpha_Q=1$.
Therefore:
\begin{align*}
\LP(P',\mu)&=\LP\left(\sum_{Q\in \F_t}\alpha_Q~Q,\; \mu\right) \\
&\geq \min_{Q\in \F_t,\alpha_Q>0}\LP(Q,\mu) &\textrm{By definition of quasi-concave functions}\\
&\geq \min_{Q\in \F_t}\LP(Q,\mu). \qedhere
\end{align*}
\end{proof}

The following lemma captures a crucial argument. Denote
\begin{align*}
\Phi_t =
    \left( 2+\tfrac{1}{B}\,
                \left[ \max_{P\in \mF_{\Pi},\;\mu\in \mI_t} \LP(P,\mu)
                \right]
    \right)\cdot T\cdot \rad_t(dK)
\end{align*}

% Define clean execution:
%1) For revenue. I need that |\empirval{r}_t - \avgval{r}_t| \leq \rad(1,t)
%2) For cost. I need that |\empirval{c}_{t,i} - \avgval{c}_{t,i}| \leq \rad(1,t)

\begin{lemma}\label{lm:PhiT}
For any \eos $\mu^*,\mu^{**}\in \mI_t$ and distributions $P',P''\in  \Conv(\F_t)$:
\begin{align}\label{eq:lm:PhiT}
    |\LP(P',\mu^*)-\LP(P'',\mu^{**})|\leq 3\Phi_t.
\end{align}
\end{lemma}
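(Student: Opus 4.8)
The plan is to show that every number of the form $\LP(P,\mu)$ with $P\in\Conv(\F_t)$ and $\mu\in\mI_t$ is pinned within $2\Phi_t$ of a single \emph{anchor}, namely the LP-optimum $V(\mu):=\max_{P\in\FPi}\LP(P,\mu)$ for that \eos $\mu$ (the maximum is attained by an LP-perfect distribution, which exists by Lemma~\ref{lm:perfect}), and separately that these anchors drift by at most $\Phi_t$ as $\mu$ ranges over $\mI_t$. Writing $M_t:=\max_{P\in\FPi,\,\mu\in\mI_t}\LP(P,\mu)$ for the bracketed quantity in the definition of $\Phi_t$, so that $V(\mu)\le M_t$ for every $\mu\in\mI_t$, the desired bound $3\Phi_t$ then decomposes as $2\Phi_t$ (two-sided sandwich gap around the anchor) plus $\Phi_t$ (inter-anchor drift).

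First I would establish the auxiliary claim that $|V(\mu')-V(\mu'')|\le\Phi_t$ for all $\mu',\mu''\in\mI_t$. Let $P_{\mu'}$ be an LP-perfect distribution for $\mu'$, so $\LP(P_{\mu'},\mu')=V(\mu')$; since $\mu'\in\mI_t$ this $P_{\mu'}$ is potentially LP-perfect, hence lies in $\F_t\subseteq\Conv(\F_t)$, and Lemma~\ref{lm:lp-samedist-convergence} applies. That lemma bounds $\LP(P_{\mu'},\mu')-\LP(P_{\mu'},\mu'')$ by $(\tfrac1B\LP(P_{\mu'},\mu')+2)\,T\,\rad_t(dK)$, and because $\LP(P_{\mu'},\mu')=V(\mu')\le M_t$ this right-hand side is at most $\Phi_t$. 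Using $\LP(P_{\mu'},\mu'')\le V(\mu'')$ by definition of $V$, I get $V(\mu')\le V(\mu'')+\Phi_t$, and symmetry in $\mu',\mu''$ gives the claim.

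Next I would pin $\LP(P',\mu^*)$ near its anchor for fixed $P'\in\Conv(\F_t)$ and $\mu^*\in\mI_t$. By Lemma~\ref{lm:sandwich}, $\LP(P',\mu^*)$ lies between $\min_{P\in\F_t}\LP(P,\mu^*)$ and $\max_{P\in\F_t}\LP(P,\mu^*)$; the upper endpoint is at most $V(\mu^*)$ by definition. For the lower endpoint, any $P\in\F_t$ is LP-perfect for some $\mu_P\in\mI_t$, so $\LP(P,\mu_P)=V(\mu_P)$, and Lemma~\ref{lm:lp-samedist-convergence} (again with right-hand side $\le\Phi_t$) yields $\LP(P,\mu^*)\ge V(\mu_P)-\Phi_t\ge V(\mu^*)-2\Phi_t$, the last step by the drift claim. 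Thus $\LP(P',\mu^*)\in[V(\mu^*)-2\Phi_t,\,V(\mu^*)]$, and identically $\LP(P'',\mu^{**})\in[V(\mu^{**})-2\Phi_t,\,V(\mu^{**})]$. Subtracting and applying the drift claim once more, $\LP(P',\mu^*)-\LP(P'',\mu^{**})\le V(\mu^*)-(V(\mu^{**})-2\Phi_t)\le\Phi_t+2\Phi_t=3\Phi_t$; the symmetric computation bounds the other direction, giving $|\LP(P',\mu^*)-\LP(P'',\mu^{**})|\le 3\Phi_t$.

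The main obstacle is bookkeeping rather than conceptual: the right-hand side of Lemma~\ref{lm:lp-samedist-convergence} depends on $\LP$ of the \emph{particular} distribution at hand, so I must check it is uniformly dominated by $\Phi_t$. This works precisely because $\Phi_t$ is built from the worst-case value $M_t$ over all distributions and all $\mu\in\mI_t$, whence $\tfrac1B\LP(P,\mu)+2\le\tfrac1B M_t+2$ for every $P$ and every $\mu\in\mI_t$. The one point requiring care is that each distribution fed into Lemma~\ref{lm:lp-samedist-convergence} genuinely belongs to $\Conv(\F_t)$; this holds for the LP-perfect witnesses $P_{\mu'}$ and for every $P\in\F_t$ since $\F_t\subseteq\Conv(\F_t)$, so no additional argument is needed.
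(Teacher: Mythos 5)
Your proof is correct and follows essentially the same route as the paper's: both reduce from $\Conv(\F_t)$ to $\F_t$ via Lemma~\ref{lm:sandwich}, exploit that each $P\in\F_t$ is LP-perfect for some witness \eos in $\mI_t$, and apply Lemma~\ref{lm:lp-samedist-convergence} (uniformly dominated by $\Phi_t$) the same three times, once to cross from $\mu^*$ to $\mu^{**}$. Your explicit anchors $V(\mu)$ are just a bookkeeping reorganization of the paper's direct chain of inequalities through the LP-perfect distribution $P^*$ for $\mu^*$.
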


\begin{proof}
Assume $P',P''\in \F_t$. In particular, $P',P''$ are LP-perfect for some \eoss $\mu',\mu''\in\mI_t$, resp. Also, some distribution $P^*\in \F_t$ is LP-perfect for $\mu^*$ (by Lemma~\ref{lm:perfect}). Therefore:
\begin{align*}
\LP(P',\mu^*)
    &\geq  \LP(P',\mu')- \Phi_t
        &\textrm{(by Lemma~\ref{lm:lp-samedist-convergence}: $P = P'$)} &\\
    &\geq  \LP(P^*,\mu')- \Phi_t \\
    &\geq  \LP(P^*,\mu^*)- 2\Phi_t
        &\textrm{(by Lemma~\ref{lm:lp-samedist-convergence}: $P =P^*$)} &\\
    &\geq  \LP(P'',\mu^*)- 2\Phi_t.
\end{align*}

We proved \eqref{eq:lm:PhiT} for $P',P''\in \F_t$. Thus:
\begin{align}\label{eq:lm:PhiT-1}
\max_{P\in \F_t}\LP(P,\mu^*) - \min_{P\in \F_t}\LP(P,\mu^*) \leq 2\Phi_t.
\end{align}

Next we  generalize to $P',P''\in \Conv(\F_t)$.
\begin{align*}
\LP(P',\mu^*)
    &\geq \min_{P\in \F_t}\LP(P,\mu^*)
        \qquad\textrm{(by Lemma~\ref{lm:sandwich})} \\
    &\geq \max_{P\in \F_t}\LP(P,\mu^*)-2\Phi_t
        \qquad\textrm{(by  \eqref{eq:lm:PhiT-1})} \\
    &\geq \LP(P'',\mu^*)-2\Phi_t
        \quad\textrm{(by Lemma~\ref{lm:sandwich})}.
\end{align*}
We proved \eqref{eq:lm:PhiT} for $\mu^* = \mu^{**}$. We obtain the general case by plugging in
Lemma~\ref{lm:lp-samedist-convergence}.
\end{proof}

Next, we upper-bound $\Phi_t$ in terms of
\begin{align*}
 \Psi_t = (2+\tfrac{1}{B}\,\OPT_{\LP})\cdot T\cdot \rad_t(dK).
\end{align*}

\begin{corollary}\label{cor:Phi-Psi}
$\Phi_t \leq 2\Psi_t$, assuming that $B\geq 6\cdot T\cdot \rad_t(dK)$.
\end{corollary}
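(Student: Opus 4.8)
The plan is to cancel the factor $T\cdot\rad_t(dK)$ common to $\Phi_t$ and $\Psi_t$ and reduce to a single scalar bound. Abbreviate $\rho = T\cdot\rad_t(dK) > 0$ and $M = \max_{P\in\FPi,\;\mu\in\mI_t}\LP(P,\mu)$, so that $\Phi_t = (2+M/B)\rho$ and $\Psi_t = (2+\OPT_{\LP}/B)\rho$. Dividing by $\rho$, the inequality $\Phi_t\le 2\Psi_t$ is equivalent to $2 + M/B \le 4 + 2\OPT_{\LP}/B$, i.e.\ to
\[
   M \;\le\; 2B + 2\,\OPT_{\LP}.
\]
Thus it suffices to prove this one bound on $M$.

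The crucial---and, I expect, least obvious---step is that although $M$ maximizes over \emph{all} distributions $P\in\FPi$, the maximum is attained inside $\F_t$. Let $\mu^*\in\mI_t$ and $P^*\in\FPi$ attain $M=\LP(P^*,\mu^*)$. By Lemma~\ref{lm:perfect} there is an LP-perfect distribution $Q$ for $\mu^*$; being LP-perfect it is LP-optimal, so $\LP(Q,\mu^*)=\sup_{P}\LP(P,\mu^*)\ge M$, and by maximality of $M$ this is an equality, so $Q$ also attains $M$. Since $Q$ is LP-perfect for $\mu^*\in\mI_t$, we have $Q\in\F_t$. Replacing $P^*$ by $Q$ we may assume $P^*\in\F_t\subseteq\Conv(\F_t)$. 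This is exactly the hypothesis needed to invoke Lemma~\ref{lm:lp-samedist-convergence}, which is stated only for $P\in\Conv(\F_t)$; read naively as a maximum over arbitrary $P$, $M$ would come with no Lipschitz control at all, and this is where the argument could go wrong.

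With $P^*\in\Conv(\F_t)$ secured, I would apply Lemma~\ref{lm:lp-samedist-convergence} with $P=P^*$, $\mu'=\mu^*$ and $\mu''=\mu$ the true \eos, which lies in $\mI_t$ under a clean execution (it is never eliminated from the confidence region). This gives
\[
   M - \LP(P^*,\mu) \;\le\; \bigl(\tfrac1B M + 2\bigr)\,\rho .
\]
Since $\LP(P^*,\mu)\le\OPT_{\LP}$ by definition of $\OPT_{\LP}$, rearranging yields $M\,(1-\rho/B)\le \OPT_{\LP}+2\rho$.

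Finally I would close the estimate using the hypothesis $B\ge 6\rho$, i.e.\ $\rho/B\le\tfrac16$. It suffices to verify the elementary inequality $(2\OPT_{\LP}+2B)(1-\rho/B)\ge \OPT_{\LP}+2\rho$, which rearranges to $\OPT_{\LP}\,(1-2\rho/B) + (2B-4\rho)\ge 0$; both terms are nonnegative when $\rho\le B/6$ (using $\OPT_{\LP}\ge 0$). Combining this with $M\,(1-\rho/B)\le\OPT_{\LP}+2\rho$ and $1-\rho/B>0$ gives $M\le 2B+2\OPT_{\LP}$, which is the bound reduced to above, and hence proves the corollary.
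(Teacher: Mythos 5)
Your proof is correct, and it shares the paper's overall self-bounding strategy: bound $M = \max_{P\in\FPi,\,\mu\in\mI_t}\LP(P,\mu)$ against $\OPT_{\LP}$ using the deviation machinery, then use $B\geq 6T\cdot\rad_t(dK)$ to absorb the $M/B$ term and solve for $M$. But you route through different lemmas. The paper invokes Lemma~\ref{lm:PhiT} to get $M-\OPT_{\LP}\leq 3\Phi_t$ and then runs the same kind of absorption computation; this keeps its proof to a few lines but imports the constant $3$ and, notably, leaves implicit exactly the step you make explicit: Lemma~\ref{lm:PhiT} only applies to distributions in $\Conv(\F_t)$ and \eoss in $\mI_t$, so one must argue that both $M$ and $\OPT_{\LP}$ are realized there (the paper records the analogous observation for $\OPT_{\LP}$ only later, in Corollary~\ref{lm:item:usedlpvalue}). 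You instead justify this attainment directly via Lemma~\ref{lm:perfect}: the maximum over all of $\FPi$ is achieved by an LP-perfect distribution $Q$ for the maximizing $\mu^*$, hence $Q\in\F_t$. You then apply Lemma~\ref{lm:lp-samedist-convergence} just once, with the same $Q$ on both sides ($\mu'=\mu^*$, $\mu''=$ the true \eos, which lies in $\mI_t$ under a clean execution), obtaining the tighter intermediate bound $M-\OPT_{\LP}\leq\Phi_t$ and bypassing Lemma~\ref{lm:sandwich} and Lemma~\ref{lm:PhiT} entirely. What the paper's route buys is brevity, given lemmas it has already proved and needs elsewhere; what yours buys is a more self-contained argument, a cleaner constant, and an explicit treatment of the one point where the argument could genuinely fail, namely controlling a maximum over arbitrary distributions by a lemma that only holds on $\Conv(\F_t)$.
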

\begin{proof}
Follows from Lemma~\ref{lm:PhiT} via a simple computation, see Section~\ref{app:regret-analysis}.
\end{proof}

\begin{corollary}\label{lm:item:usedlpvalue}
$\LP(P_t,\mu) \geq \LPOPT-12\,\Psi_t$, where $\mu$ is the actual \eos.
\end{corollary}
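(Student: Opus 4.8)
The plan is to compare $\LP(P_t,\mu)$ against the LP-value of a well-chosen ``reference'' distribution that both attains $\LPOPT$ and lies in $\Conv(\F_t)$, and then invoke Lemma~\ref{lm:PhiT} to transfer the deviation bound. Recall that $P_t\in\Conv(\F_t)$ by construction (it is the balanced distribution chosen from $\mF_t$), and that Lemma~\ref{lm:PhiT} controls the gap between $\LP$-values of \emph{any} two distributions in $\Conv(\F_t)$ evaluated at \emph{any} two \eoss in $\mI_t$. So the entire corollary reduces to producing a single distribution in $\Conv(\F_t)$ whose $\LP$-value at the true $\mu$ equals $\LPOPT$.

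\textbf{Key steps.} First I would use the clean-execution assumption: since Equations~(\ref{eq:algo-estimate-r}--\ref{eq:algo-estimate-c}) hold in a clean execution and $\mI_t$ only eliminates \eoss that violate these equations, the actual \eos $\mu$ is never eliminated, i.e. $\mu\in\mI_t$. Next, by Lemma~\ref{lm:perfect} there exists an LP-perfect distribution $P^\star$ for $\mu$; being LP-perfect it is in particular LP-optimal, so $\LP(P^\star,\mu)=\LPOPT$. Because $P^\star$ is LP-perfect for $\mu\in\mI_t$, it is by definition potentially LP-perfect, hence $P^\star\in\F_t\subseteq\Conv(\F_t)$. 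Now I would apply Lemma~\ref{lm:PhiT} with the two \eoss both taken to be the actual $\mu\in\mI_t$ and with $P'=P_t,\ P''=P^\star$, both in $\Conv(\F_t)$, yielding
\begin{align*}
    |\LP(P_t,\mu)-\LPOPT| = |\LP(P_t,\mu)-\LP(P^\star,\mu)| \leq 3\Phi_t.
\end{align*}
Finally, Corollary~\ref{cor:Phi-Psi} gives $\Phi_t\leq 2\Psi_t$, so $3\Phi_t\leq 6\Psi_t\leq 12\Psi_t$, and therefore $\LP(P_t,\mu)\geq\LPOPT-12\,\Psi_t$, as claimed. (The chain actually delivers the slightly stronger constant $6\Psi_t$; the stated $12\Psi_t$ leaves comfortable slack.)

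\textbf{Main obstacle.} The only genuinely load-bearing point is the membership $P^\star\in\F_t$: everything else is a direct application of the two preceding results with a convenient choice of arguments. Establishing this membership hinges on two facts that must be lined up carefully — that the true $\mu$ survives in the confidence region (the clean-execution property, already noted in the text immediately after the definition of clean execution), and that the optimizer attaining $\LPOPT$ can be taken to be LP-perfect rather than merely LP-optimal (so that it satisfies the defining predicate of $\PotPerf_t$). Once these are in place, choosing both \eos arguments of Lemma~\ref{lm:PhiT} equal to $\mu$ eliminates any cross-tuple error and the corollary follows immediately; no new estimates or computations beyond Corollary~\ref{cor:Phi-Psi} are needed.
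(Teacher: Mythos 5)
Your proposal is correct and takes essentially the same route as the paper's own (very terse) proof: the paper likewise observes that $P_t \in \Conv(\F_t)$ and that $\LPOPT = \LP(P^*,\mu)$ for some LP-perfect $P^*\in\F_t$ (which relies on the clean execution keeping the actual $\mu$ inside $\mI_t$), and then applies Lemma~\ref{lm:PhiT} together with Corollary~\ref{cor:Phi-Psi}. Your side remark that the chain actually yields the sharper constant $6\,\Psi_t$, with $12\,\Psi_t$ leaving slack, is also accurate.
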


\begin{proof}
Follows from Lemma~\ref{lm:PhiT} and Corollary~\ref{cor:Phi-Psi}, observing that
    $P_t \in \Conv(\F_t)$
and
    $\LPOPT = \LP(P^*,\mu)$
for some
    $P^* \in \F_t$.
\end{proof}

In the remainder of the proof (which is fleshed out in Section~\ref{app:regret-analysis}) we build on the above lemmas and corollaries to prove the following sequence of claims:
\begin{align}
\Rew_t       &\geq \frac{t}{T}\left(\LPOPT-O(\Psi_t) \right) \nonumber \\
\empirval{c}_{t,i}
            &\leq B/T+O(\rad_t(dK)) \label{eq:regret-analysis-sequence}\\
\Rew_T      &\geq \LPOPT-O(\Psi_{T}). \nonumber
\end{align}

To complete the proof of Theorem~\ref{thm:intro-algo}, we re-write the last equation as
    $\Rew_T \geq f(\LPOPT)$
for an appropriate function $f()$, and observe that $f(\LPOPT)\geq f(\OPT)$ because function $f()$ is increasing.

\section{Regret analysis: remaining details for the proof of Theorem~\ref{thm:intro-algo}}
\label{app:regret-analysis}

%Here we provide the remaining pieces for the regret analysis in Section~\ref{sec:analysis}.

\subsection{Proof of Lemma~\ref{lm:lp-samedist-convergence}}
\label{subsec:details-1}

We restate the lemma for convenience.

\xhdr{Lemma~~~} For any two \eoss $\mu',\mu''\in \mI_t$ and a distribution $P\in \Conv(\F_t)$:
\begin{align*}%\label{eq:APP:lm:lplatentconvergence}
\LP(P,\mu')-\LP(P,\mu'') \leq (\tfrac{1}{B}\,\LP(P,\mu')+2)\cdot T\cdot \rad_t(dK).
\end{align*}

\begin{proof}
For brevity, we will denote:
\begin{align*}
\LP' = \LP(P,\mu') &\quad\text{and}\quad \LP'' = \LP(P,\mu'') \\
r' = r(P,\mu')    &\quad\text{and}\quad  r'' = r(P,\mu'') \\
c'_i = c_i(P,\mu') &\quad\text{and}\quad c''_i = c_i(P,\mu'').
\end{align*}
By symmetry, it suffices to prove the upper bound for
    $\LP'-\LP''$.
Henceforth, assume $\LP'>\LP''$.

We consider two cases, depending on whether
\begin{align}\label{eq:lm:lplatentconvergence-cases}
T \leq B/c''_i  \quad\text{for all resources $i$}.
\end{align}

{\bf Case 1.} Assume~\eqref{eq:lm:lplatentconvergence-cases} holds.
Then $\LP''=T~r''$. Therefore by Lemma~\ref{lm:samedist-differentmu-revcost-convergence}
\begin{align*}
\LP'-\LP'' \leq T~r'-T~r'' \leq T~\rad_t(dK).
\end{align*}

{\bf Case 2.} Assume~\eqref{eq:lm:lplatentconvergence-cases} fails.
Then
    $\LP''= B~r''/c''_i$
for some resource $i$. We consider two subcases, depending on whether
\begin{align}\label{eq:lm:lplatentconvergence-subcases}
T\leq B/c'_j  \quad\text{for all resources $j$}.
\end{align}

{\bf Subcase 1.} Assume~\eqref{eq:lm:lplatentconvergence-subcases} holds. Then:
\begin{align}
\LP' &= T~r' \label{eq:lm:lplatentconvergence-subcase1-1}\\
\LP'' & \leq  T\cdot \min( r',\; r'') \leq \LP' \label{eq:lm:lplatentconvergence-subcase1-2}
%c'_i  &<c''_i. \label{eq:lm:lplatentconvergence-subcase1-3}
\end{align}
\eqref{eq:lm:lplatentconvergence-subcase1-2} follows from \refeq{eq:lm:lplatentconvergence-subcase1-1} and $\LP'>\LP''$.

%\eqref{eq:lm:lplatentconvergence-subcase1-3} holds because $B/c''_i <T \leq B/c'_i$.

For $\delta\in [0, c''_i)$, define
\begin{align*}
r(\delta)   &= r'' + \delta \\
c_i(\delta) &= c''_i -\delta \\
f(\delta)   &= B\, r(\delta) / c_i(\delta).
\end{align*}
Then $f()$ is monotonically and continuously increasing function, with $f(\delta)\to\infty$ as $\delta\to c''_i$. For convenience, define $f(c''_i) = \infty$.

Let $\delta_0 = \min(c''_i,\,\rad_t(dK))$. By Lemma~\ref{lm:samedist-differentmu-revcost-convergence}, we have
    $f(\delta_0) \geq B r'/c'_i$.
Therefore:
\begin{align*}
f(0)= \LP'' <\LP' \leq B r'/c'_i \leq f(\delta_0).
\end{align*}
Thus, by \eqref{eq:lm:lplatentconvergence-subcase1-2}, we can fix $\delta \in [0, \delta_0)$ such that
    $f(\delta)= T\cdot \min( r',\; r'' )$.

\begin{align*}
\LP''
    &=   B\; \frac{r''}{c''_i}
    =   B\; \frac{r(\delta)-\delta}
                     {c_i(\delta)+\delta}\\
    &\geq    B\; \frac{r(\delta)-\delta}
                     {c_i(\delta)}\left(1-\frac{\delta}{c_i(\delta)}\right). \\
%\end{align*}
%\begin{align*}
f(\delta) - \LP''
    &\leq \frac{B}{c_i(\delta)}\delta+B\frac{r(\delta)}{c_i(\delta)^2}\delta \\
    & = \left(1+\frac{r(\delta)}{c_i(\delta)}\right) \frac{B~\delta}{c_i(\delta)} \\
    & = \left(1+\frac{f(\delta)}{B}\right) \frac{f(\delta)~\delta}{r(\delta)} \\
%    & = \left(1+\frac{T\cdot \min( r',\; r'')}{B}\right) \frac{T\cdot \min( r',\; r'')~\delta}{r(\delta)} \\
    & \leq \left(1+\frac{T~r'}{B}\right) \frac{T~r''~\delta}{r(\delta)} \\
    & \leq \left(1+\frac{\LP'}{B}\right) T~\delta \\
    &\leq \left(\LP'/B+1\right)\cdot T\cdot \rad_t(dK). \\
%\end{align*}
%\begin{align*}
\LP'-f(\delta)
    &=T~r' - T~\min(r,r') \\
    &\leq T\cdot\rad_t(dK) \\
\LP'-\LP''
    &=  \left( \LP'-f(\delta)\right) + \left( f(\delta)-\LP'' \right) \\
    &\leq \left(\LP'/B+2\right)\cdot T\cdot \rad_t(dK).
\end{align*}

{\bf Subcase 2.} Assume~\eqref{eq:lm:lplatentconvergence-subcases} fails. Then
    $\LP'= B~r'/c'_j$
for some resource $j$. Note that
    $c'_i\leq c'_j$ and $c''_j\leq c''_i$
by the choice of $i$ and $j$.

From these inequalities and Lemma~\ref{lm:samedist-differentmu-revcost-convergence} we obtain
    $c''_i\leq c'_j+\rad_t(dK)$.
Therefore,
\begin{align*}
B\frac{r''}{c''_i}
    &\geq  B\; \frac{r'-\rad_t(dK)}
                   {c'_j+\rad_t(dK)}
               \qquad \text{(by Lemma~\ref{lm:samedist-differentmu-revcost-convergence})}  \\
    &\geq  B\; \frac{r'-\rad_t(dK)}
                   {c'_j}\left(1-\frac{\rad_t(dK)}{c'_j}\right). \\
%\end{align*}
%\begin{align*}
\LP'-\LP''
    &=    B\frac{r'}{c'_j}-B\frac{r''}{c''_i} \\
    &\leq \left(\; \frac{B}{c'_j} + B\frac{r'}{(c'_j)^2} \right) \rad_t(dK) \\
    &\leq \left( T +\LP'\frac{T}{B}  \right) \rad_t(dK) \\
    &\leq \left(\LP'/B+1\right)\cdot T\cdot \rad_t(dK). \qedhere
\end{align*}
\end{proof}

\subsection{Remainder of the proof after Lemma~\ref{lm:PhiT}}
\label{subsec:details-2}

We start with Corollary~\ref{cor:Phi-Psi}, which we restate here for convenience.

\xhdr{Corollary~~~} $\Phi_t \leq 2\Psi_t$, assuming that $B\geq 6\cdot T\cdot \rad_t(dK)$.

\begin{proof}
Let
    $\gamma=\max_{P\in \mF_{\Pi},\mu\in \mI_t}\LP(P,\mu)$.
Note that $\gamma \leq T$.
Then from Lemma~\ref{lm:PhiT} we obtain:
\begin{align}
\gamma-\OPT_{\LP}&\leq 3 (\tfrac{\gamma}{B}+2)\cdot T\cdot \rad_t(dK)
\leq \tfrac{\gamma}{2}+6\cdot T\cdot \rad_t(dK). \label{eq:lm:item:usedlpvalue}
\end{align}

Using \refeq{eq:lm:item:usedlpvalue} and Lemma~\ref{lm:PhiT} we get the desired bound:
\begin{align*}
\Phi_t
    &\leq (\tfrac{\gamma}{B}+2)\cdot T\cdot \rad_t(dK) \\
    &\leq \left(\frac{2\,\OPT_{\LP}+12\cdot T\cdot \rad_t(dK)}{B}+2\right)\cdot T\cdot \rad_t(dK) \\
    &\leq \left(\frac{2\,\OPT_{\LP}}{B}+4\right)\cdot T\cdot \rad_t(dK)
    = 2 \Psi_t. \qedhere
\end{align*}
\end{proof}

In the remainder of this appendix, we prove the claims in \eqref{eq:regret-analysis-sequence} one by one.

\begin{corollary}\label{cor:lowerbound-empirrev}
    $\Rew_t\geq \frac{t}{T}\left(\LPOPT-O(\Psi_t) \right)$
for each round $t\leq \stime$.
\end{corollary}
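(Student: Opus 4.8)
The plan is to prove the lower bound on $\Rew_t$ by showing that, on a clean execution, the realized reward tracks the expected per-round reward, which in turn tracks $\LPOPT$. The starting point is Corollary~\ref{lm:item:usedlpvalue}, which gives $\LP(P_t,\mu)\geq \LPOPT-12\,\Psi_t$ for the actual \eos $\mu$ in each round. I would first translate this statement about the \emph{LP-value} of $P_t$ into a statement about the expected per-round reward $r(P_t',\mu)$. Since $P_t$ is (potentially) LP-perfect-like and lives in $\Conv(\F_t)$, its LP-value is $\LP(P_t,\mu)=T\cdot r(P_t,\mu)\cdot\min_i B/(T\,c_i(P_t,\mu))$; when the resource constraints are not binding (as enforced by the LP-perfect condition $c_i(P,\mu)\leq B/T$ that defines $\F_t$), this reduces to $\LP(P_t,\mu)=T\cdot r(P_t,\mu)$. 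Thus $r(P_t,\mu)\geq \tfrac{1}{T}(\LPOPT-12\,\Psi_t)$. I then need to account for the noise: the noisy version $P_t'$ differs from $P_t$ by at most the noise probability $\noiseProb$ in total variation, so $r(P_t',\mu)\geq r(P_t,\mu)-\noiseProb$, and $\noiseProb=O(\rad_T(K))$ is already absorbed into the $O(\Psi_t)$ slack.

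Next I would sum over rounds. By definition $\avgval{r}_t=\tfrac{1}{t}\sum_{s=1}^t r(P_s',\mu)$, and since $\Psi_s$ is decreasing in $s$ (the confidence radius $\rad_s(dK)=\sqrt{\chernoffC\,dK/s}$ shrinks), each term $r(P_s',\mu)\geq \tfrac{1}{T}(\LPOPT-O(\Psi_s))\geq \tfrac{1}{T}(\LPOPT-O(\Psi_t))$ for $s\leq t$ would not hold directly because $\Psi_s\geq \Psi_t$; instead I would use $\Psi_s\le \sqrt{s/t}\;\Psi_t\cdot\sqrt{t/s}$—more carefully, $\sum_{s=1}^t \rad_s(dK)=\sqrt{\chernoffC\,dK}\sum_{s=1}^t s^{-1/2}=O(\sqrt{t}\cdot\rad_t(dK)\cdot\sqrt{t}/\sqrt{t})=O(t\,\rad_t(dK))$, so the averaged slack is $\tfrac{1}{t}\sum_{s\le t}O(\Psi_s)=O(\Psi_t)$. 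This gives $\avgval{r}_t\geq \tfrac{1}{T}(\LPOPT-O(\Psi_t))$, i.e. $t\cdot\avgval{r}_t\geq \tfrac{t}{T}(\LPOPT-O(\Psi_t))$.

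Finally, I would invoke the clean-execution condition, which guarantees $|\tfrac{1}{t}\Rew_t-\avgval{r}_t|\leq \rad_t(1)$, so $\Rew_t\geq t\,\avgval{r}_t-t\,\rad_t(1)\geq \tfrac{t}{T}(\LPOPT-O(\Psi_t))-t\,\rad_t(1)$. Since $\rad_t(1)\leq \rad_t(dK)=O(\Psi_t/T)$ (using $\LPOPT\le T$ and $B\le T$), the extra $t\,\rad_t(1)$ term is dominated by $\tfrac{t}{T}\,O(\Psi_t)$ and folds into the $O(\Psi_t)$ slack, yielding the claimed bound $\Rew_t\geq \tfrac{t}{T}(\LPOPT-O(\Psi_t))$.

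The main obstacle I anticipate is the bookkeeping in the summation step: converting the per-round guarantees $r(P_s',\mu)\geq \tfrac1T(\LPOPT-O(\Psi_s))$ into an averaged bound controlled by $\Psi_t$ rather than by the larger early-round $\Psi_s$. This rests on the fact that $\sum_{s=1}^t s^{-1/2}=O(\sqrt t)$, so the cumulative confidence cost is $O(\sqrt t)=O(t\cdot\rad_t(dK)/\sqrt{\chernoffC\,dK})$, keeping the time-averaged slack at the same order as $\rad_t(dK)$. Making the constants line up so that the noise term $\noiseProb$, the reward-deviation term $\rad_t(1)$, and the LP-value slack $12\,\Psi_t$ all collapse into a single $O(\Psi_t)$ is the delicate part; everything else is a direct chain of substitutions from the lemmas already established.
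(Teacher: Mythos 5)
Your proposal is correct and follows essentially the same route as the paper's proof: start from Corollary~\ref{lm:item:usedlpvalue}, convert the LP-value bound into a per-round expected-reward bound, absorb the noise term $\noiseProb$ into the $O(\Psi_t)$ slack, average over rounds using $\sum_{s\le t}\Psi_s = O(t\,\Psi_t)$ (via $\sum_{s\le t}s^{-1/2}=O(\sqrt{t})$), and finish with the clean-execution concentration bound. One minor note: the inequality you need, $\LP(P_t,\mu)\le T\,r(P_t,\mu)$, requires no appeal to the constraints $c_i(P_t,\mu)\le B/T$ (which are only guaranteed with respect to some \eos in the confidence region, not the actual $\mu$); it holds unconditionally because time is itself a resource, so $\min_i B/c_i(P_t,\mu)\le T$.
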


\begin{proof}
From Lemma~\ref{lm:item:usedlpvalue} we obtain
\begin{align*}
T\,r(P'_t,\mu) &
    \geq (1-\noiseProb)\; \LP(P_t,\mu) \\
   & \geq (1-\noiseProb) \left( \LPOPT-12\Psi_t \right)\\
   & \geq \LPOPT-13\Psi_t.
\end{align*}
Summing up and taking average over rounds, we obtain:
$$ T\,\avgval{r}_t
    \geq \LPOPT-\tfrac{13}{t} \textstyle \sum_{s=1}^t\; \Psi_s
    \geq \LPOPT-O(\Psi_t).$$
By definition of clean execution, we obtain:
\begin{align*}
\Rew_t\geq t(\avgval{r}_t-\rad_t(\avgval{r_t}))
    \geq \tfrac{t}{T}(\LPOPT-O(\Psi_t)).\qedhere
\end{align*}
\end{proof}

\begin{corollary}\label{cor:upperbound-empircost}
    $\empirval{c}_{t,i}\leq B/T+O(\rad_t(dK))$
for each round $t\leq \stime$.
\end{corollary}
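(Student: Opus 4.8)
The plan is to control the realized average consumption $\empirval{c}_{t,i}$ in three stages: first bound the per-round expected consumption $c_i(P'_s,\mu)$ under the \emph{actual} \eos $\mu$ for every $s\leq t$; then average over rounds to bound $\avgval{c}_{t,i}$; and finally invoke the clean-execution assumption to pass from the expectation $\avgval{c}_{t,i}$ to the realization $\empirval{c}_{t,i}$.

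For the first stage, recall that the algorithm plays $P_t\in\Conv(\F_t)$, so $P_t$ is a convex combination of potentially LP-perfect distributions, each of which is LP-perfect for some \eos $\mu'\in\mI_t$ and hence satisfies $c_i(P,\mu')\leq B/T$ by the definition of LP-perfect. During a clean execution the true \eos $\mu$ lies in $\mI_t$, so Lemma~\ref{lm:samedist-differentmu-revcost-convergence} gives $c_i(P,\mu)\leq c_i(P,\mu')+\rad_t(dK)\leq B/T+\rad_t(dK)$ for every such vertex $P$; by linearity of $c_i(\cdot,\mu)$ the same bound holds for the convex combination $P_t$. Passing to the noisy version $P'_t$ costs at most an additive $\noiseProb$, since mixing in the uniform distribution with weight $\noiseProb$ raises expected consumption by at most $\noiseProb$ (consumptions lie in $[0,1]$); thus $c_i(P'_s,\mu)\leq B/T+\rad_s(dK)+\noiseProb$. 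It then remains to observe that $\noiseProb=O(\rad_t(dK))$ for all $t\leq T$, which follows by comparing the definition of $\noiseProb$ with $\rad_t(dK)=\sqrt{\chernoffC\,dK/t}$ and $\chernoffC=\Theta(\log(dT|\Pi|))$.

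For the second stage, average the per-round bounds: $\avgval{c}_{t,i}=\tfrac1t\sum_{s=1}^t c_i(P'_s,\mu)\leq B/T+\tfrac1t\sum_{s=1}^t O(\rad_s(dK))$. Since $\sum_{s=1}^t \rad_s(dK)=\sqrt{\chernoffC\,dK}\sum_{s=1}^t s^{-1/2}=O(\sqrt{\chernoffC\,dK\,t})$, the averaged tail is itself $O(\rad_t(dK))$, giving $\avgval{c}_{t,i}\leq B/T+O(\rad_t(dK))$. Finally, by the definition of clean execution $|\empirval{c}_{t,i}-\avgval{c}_{t,i}|\leq \rad_t(1)\leq \rad_t(dK)$, and combining the two estimates yields the claimed $\empirval{c}_{t,i}\leq B/T+O(\rad_t(dK))$.

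The main obstacle is the first stage: the budget-rate constraint $c_i(P,\mu')\leq B/T$ is only guaranteed with respect to the \eos $\mu'$ for which $P$ was \emph{certified} LP-perfect, not the true \eos $\mu$ that actually governs realized consumption. Bridging this gap is precisely where Lemma~\ref{lm:samedist-differentmu-revcost-convergence}, together with the fact that $\mu\in\mI_t$ under a clean execution, is essential; the remaining bookkeeping (convexity of $c_i(\cdot,\mu)$, the additive noise term $\noiseProb$, and the averaging of confidence radii) is routine.
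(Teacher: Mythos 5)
Your proof is correct and follows essentially the same route as the paper: bound $c_i(P_t,\mu)$ via LP-perfectness together with Lemma~\ref{lm:samedist-differentmu-revcost-convergence}, absorb the noise term $\noiseProb$ into $O(\rad_t(dK))$, average over rounds, and pass from $\avgval{c}_{t,i}$ to $\empirval{c}_{t,i}$ via the clean-execution condition. If anything, you are slightly more careful than the paper's own write-up, which asserts that $P_t$ is LP-optimal for a single \eos $\mu'$, whereas you correctly treat $P_t\in\Conv(\F_t)$ as a convex combination of LP-perfect vertices and extend the per-vertex bound by linearity of $c_i(\cdot,\mu)$.
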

\begin{proof}
Let $\mu$ be the (actual) \eos, and recall that $P_t$ is LP-optimal for some \eos $\mu'\in \F_t$. Then, by Lemma~\ref{lm:samedist-differentmu-revcost-convergence}, it follows that
    $c_i(P_t,\mu)\leq c_i(P_t,\mu')+\rad_t(dK)$.
Furthermore since $P_t$ is LP-optimal for $\mu'$ we have $c_i(P_t,\mu')\leq \tfrac{B}{T}$.
Therefore:
\begin{align*}
c_i(P_t,\mu)
    &\leq \tfrac{B}{T}+\rad_t(dK) \\
c_i(P'_t,\mu)
    &\leq (1-\noiseProb)\,c_i(P_t,\mu)+ \noiseProb\\
    &\leq \tfrac{B}{T}+O(\rad_t(dK)).
\end{align*}
Now summing and taking average we obtain
    $\avgval{c}_{t,i} \leq \tfrac{B}{T}+O(\rad_t(dK))$.
Using the definition of clean execution, it follows that
\begin{align*}
\empirval{c}_{t,i}
    &\leq \avgval{c}_{t,i}+\rad_t(\avgval{c}_{t,i})
    %&\leq \tfrac{B}{T}+O(\rad_t(dK)) \\
    \leq \tfrac{B}{T}+O(\rad_t(dK)).\qedhere
\end{align*}
\end{proof}

\begin{lemma}\label{lm:regret-clean}
$\Rew_T\geq \LPOPT-O(\Psi_{T})$.
\end{lemma}

\begin{proof}
Either $\stime=T$ or some resource $i$ gets exhausted, in which case (using Corollary~\ref{cor:upperbound-empircost})
\begin{align}
 \stime
     = \frac{B}{\empirval{c}_{\stime,i}}
    &\geq \frac{B}{\tfrac{B}{T}+\rad_\stime(dK)} \nonumber\\
\Rightarrow \stime \tfrac{B}{T}+\stime \rad_{\stime}(dK) &\geq B \nonumber \\
\Rightarrow \stime \tfrac{B}{T}+T \rad_{T}(dK) &\geq B \nonumber \\
\Rightarrow \stime 	& \geq T\left(1-\tfrac{T}{B}\,\rad_T(dK)\right). \label{eq:stopTime-LB}
\end{align}
Using this lower bound and Corollary~\ref{cor:lowerbound-empirrev}, we obtain the desired bound on the total revenue $\Rew_T$.
\begin{align*}
\Rew_T
    = \Rew_{\stime}
     &\geq \frac{\stime}{T} \left(\, \OPT_{\LP}-O(\Psi_{\stime}) \,\right)\\
    &\geq  \LPOPT(1 -  \tfrac{T}{B} \,\rad_T(dK)) -\frac{O(\stime\, \Psi_{\stime})}{T}\\
    &\geq  \LPOPT - \Psi_T - \frac{O(\stime\, \Psi_{\stime})}{T}.
\end{align*}
In the above, the first inequality holds by Corollary~\ref{cor:lowerbound-empirrev}, the second by \eqref{eq:stopTime-LB}, and the third by definition of $\Psi_T$.

Finally, we note that
    $\stime \,\Psi_{\stime}$ is an increasing function of $\stime$,
and substitute
    $\stime \Psi_{\stime}\leq T\Psi_T$.
\end{proof}

We complete the proof of Theorem~\ref{thm:intro-algo} as follows. Re-writing Lemma~\ref{lm:regret-clean} as
    $\Rew_T \geq f(\LPOPT)$,
for an appropriate function $f()$, note that $\Rew_T\geq f(\OPT)$ because function $f()$ is increasing.

\section{Lower bound: proof of Theorem~\ref{thm:intro-LB}}
\label{sec:LB}

In fact, we prove a stronger theorem that implies Theorem~\ref{thm:intro-LB}.

\begin{theorem}\label{thm:LB-body}
Fix any tuple $(K,T,B)$ such that $K\in [2,T]$ and $B \leq \sqrt{KT}/2$.
Any algorithm for \cBwK incurs regret $\Omega(\OPT(\Pi))$ in the worst case over all problem instances with $K$ actions, time horizon $T$, smallest budget $B$, and policy sets $\Pi$ such that $\OPT(\Pi) \leq B$.
\end{theorem}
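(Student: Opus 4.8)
The plan is to produce, for every admissible triple $(K,T,B)$, a family of \cBwK instances on which every algorithm collects at most a constant fraction of the optimal reward, and then to select one worst-case member by an averaging (Yao-type) argument. I would use a single non-time resource with budget $B$ and charge \emph{every} non-null action exactly one unit of it; since per-round rewards lie in $[0,1]$, the run then contains at most $B$ productive plays and the total reward of any strategy is at most $B$, so $\OPT(\Pi)\le B$ holds automatically (this mirrors dynamic pricing with limited supply). Each instance is indexed by a hidden assignment $g\colon X\to[K]$ of a good action to each context: playing $g(x)$ on context $x$ deterministically yields reward $1$, any other action yields reward $0$, while resource consumption is $1$ regardless. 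Taking $\Pi$ to contain the policy $\pi_g$ defined by $\pi_g(x)=g(x)$, the omniscient benchmark---which knows the outcome distribution and hence $g$---commits to $\pi_g$ and earns reward $1$ on each of its first $B$ rounds; since no policy can ever exceed reward $B$, we get $\OPT(\Pi)=B$ for any such $\Pi$. Because every action is charged, exploration is never free, so the only way to earn reward is to play the correct action, which requires knowing $g$.

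The decisive feature is that $g$ is costly to learn because contexts do not pool information. I would draw the context marginal $\DX$ on a context space so large (e.g.\ continuous, or of size $\mathrm{poly}(T)$ with high enough degree) that with high probability all contexts in a run are distinct, and draw $g(x)$ independently and uniformly over $[K]$. Then the reward observed on a context reveals $g$ only at that context, which the algorithm will never see again, and independence of $g$ across contexts means the posterior on $g(x)$ at any not-yet-seen context stays uniform. Consequently, on each of its plays the algorithm's action is independent of the true good action, so it is correct with probability $1/K$; since the budget caps the number of productive plays at $B$, the expected reward of any algorithm is at most $B/K$. Hence the expected regret is at least $B(1-1/K)\ge B/2=\tfrac12\,\OPT(\Pi)=\Omega(\OPT(\Pi))$, and fixing the realization of $g$ that is worst for the algorithm gives a single instance with the claimed regret.

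The construction above in fact forces $\Omega(\OPT(\Pi))$ regret whenever $\OPT(\Pi)\le B$; the hypothesis $B\le\sqrt{KT}/2$ delineates the regime in which this hardness is the governing phenomenon, namely exactly where (using $\OPT(\Pi)\le B\le\sqrt{KT}/2$) the guarantee of Theorem~\ref{thm:intro-algo} degrades to $\Omega(\OPT(\Pi))$. The step I expect to be the main obstacle is making the information-theoretic argument fully rigorous against \emph{arbitrary adaptive} algorithms: one must rule out any benefit from using the null action to concentrate the budget on a chosen subset of contexts, and must control the \emph{data-dependent stopping time} at which the resource is exhausted, which is correlated with the observations and cannot be treated as a fixed horizon. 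I would handle both by conditioning on the realized context sequence and bounding, uniformly over adaptive strategies and up to the stopping time, the mutual information between the transcript and $g$. A secondary (and technically harder) goal, if one insists on a \emph{small} policy set $\Pi$ so that Theorem~\ref{thm:intro-algo} stays non-vacuous, is to replace the ``all distinct contexts'' mechanism by policies that differ only on a rare context set of measure $\Theta(\sqrt{K/T})$; distinguishing them then requires reward at least $\Omega(\sqrt{KT})$, which is precisely where the threshold $B\le\sqrt{KT}/2$ becomes essential.
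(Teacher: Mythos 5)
Your construction, once patched, does prove the literal statement, but through a fundamentally different mechanism than the paper's, and two patches are mandatory. First, the policy set must be the \emph{same} for every instance in your family: you only say ``$\Pi$ contains $\pi_g$,'' but if $\Pi$ is allowed to depend on $g$, the algorithm can read $g$ off its input (the algorithm is given $\Pi$) and suffer zero regret. You must fix $\Pi$ to be the set of \emph{all} policies $\pi_g$, independent of which $g$ is realized. Second, the model requires $\Pi$ to be finite, so a continuous context space is not allowed; a finite context set of size, say, $T^3$ under the uniform distribution suffices (contexts then repeat with probability at most $1/T$ per run), giving $|\Pi|=K^{T^3}$. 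With these fixes the argument goes through, and the stopping-time difficulty you anticipate evaporates: by linearity of expectation, on each round with a fresh context any played action equals $g(x_t)$ with probability $1/K$ regardless of the history, and at most $B$ costly rounds are counted toward reward, so the expected reward is at most $B/K+o(B)$ and regret is at least $B/2-o(B)$. No mutual-information machinery is needed.

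The more substantive point is that your proof never uses the hypothesis $B\leq\sqrt{KT}/2$, and this is a symptom of proving the theorem for a degenerate reason: with $\log|\Pi|=\mathrm{poly}(T)$, the upper bound of Theorem~\ref{thm:intro-algo} is vacuous on your instances, so the hardness is pure no-free-lunch (an unlearnably rich policy class on never-repeating contexts), and the budget plays no role beyond capping $\OPT(\Pi)$ at $B$. The paper's construction instead uses a \emph{polynomial-size} policy set, $|\Pi|=T(K-1)/B$, over only $T/B$ contexts: one free arm $a_1$ (zero consumption), $K-1$ costly arms, and instances $\mathcal{F}_{i,j}$ in which reward sits at a single (arm, context) pair. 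There, exploration is what consumes budget: by Lemma~\ref{lm:lowerbound:supportinglemma}, ruling out a candidate arm $i$ requires about $T/(2B)$ costly pulls of that arm (one per context), hence roughly $(K-1)T/(2B)$ budget over all arms, which exceeds $B$ precisely when $B<\sqrt{KT}/2$ --- this is where the threshold comes from, and it is tight, since for $B\gg\sqrt{KT}$ and polynomial $|\Pi|$ Theorem~\ref{thm:intro-algo} yields $o(\OPT(\Pi))$ regret. So the paper establishes that \cBwK is hard \emph{even when} $\log|\Pi|$ is small and the general upper bound is non-vacuous, which is the phenomenon the theorem is meant to capture (the ``stark difference'' from the non-contextual case); your construction shows only that the statement, read literally with unbounded $|\Pi|$, is true for every $B$. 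Your closing sketch --- policies differing on rare context sets, with the $\sqrt{KT}$ threshold emerging from a budget-counting argument --- is essentially the paper's actual proof, and is the part you would need to carry out in full to obtain the meaningful version of the result.
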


We will use the following lemma (which follows from simple probability arguments).

\begin{lemma}\label{lm:lowerbound:supportinglemma}
Consider two collections of $n$ balls $\mathcal{I}_1$ and $\mathcal{I}_2$, each numbered from $1$ to $n$. Let $\mathcal{I}_1$ consists of all red balls, while $\mathcal{I}_2$ consist of $n-1$ red balls and $1$ green ball (with labels chosen uniformly at random). In this setting, let an algorithm is given access to random samples from one of $\mathcal{I}_i$ with replacement. The algorithm is allowed to first look at the ball's number and then decide whether to inspect it's color. Then any algorithm $\mathcal{A}$ which with probability at least $\tfrac{1}{2}$ can distinguish between $\mathcal{I}_1$ and $\mathcal{I}_2$ must inspect color of at least $n/2$ balls in expectation.
\end{lemma}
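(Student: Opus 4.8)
The plan is to use an indistinguishability (coupling) argument: the worlds $\mathcal{I}_1$ and $\mathcal{I}_2$ differ only in the color of the single ball whose label equals a hidden, uniformly random green label $g$, so the only way for $\mathcal{A}$ to separate the two worlds is to inspect the color of that particular ball. Since $g$ is uniform and unknown, $\mathcal{A}$ is effectively searching for a needle among $n$ labels, which forces roughly $n/2$ inspections.

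First I would build a single probability space coupling both worlds. Draw $g$ uniformly from $\{1,\dots,n\}$, draw an i.i.d. sequence of sampled labels $\ell_1,\ell_2,\dots$ uniformly from $\{1,\dots,n\}$ (identically distributed in both worlds, since sampling is uniform and with replacement), and fix $\mathcal{A}$'s internal coins. In world $\mathcal{I}_1$ every inspected color is red; in world $\mathcal{I}_2$ an inspected ball shows green exactly when its label equals $g$. Run $\mathcal{A}$ on the $\mathcal{I}_1$-transcript and let $S$ be the random set of \emph{distinct} labels whose color $\mathcal{A}$ inspects. The crucial observation is that in world $\mathcal{I}_1$ the value $g$ plays no role, so $S$ is a function of $(\ell_1,\ell_2,\dots)$ and the internal coins alone, whence $g$ is \emph{independent} of $S$.

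Second, I would show by induction on the rounds that the two executions agree until the first time $\mathcal{A}$ inspects a ball with label $g$: as long as $\mathcal{A}$ has not yet seen green, its observed history (labels together with red colors) is identical in both worlds, so its next action coincides, and the first possible divergence is an inspection of label $g$. Letting $E$ be the event that $\mathcal{A}$ ever inspects a ball with label $g$, this yields the key identity $E=\{g\in S\}$: if $g\notin S$ the $\mathcal{I}_1$-run never inspects $g$, the executions never diverge, and neither inspects $g$; if $g\in S$ they diverge precisely at that inspection. Since on the complement of $E$ the full transcripts, and hence the outputs, coincide, the two output distributions can differ only on $E$, so
\[
\bigl|\Pr[\mathcal{A}=2\mid \mathcal{I}_2]-\Pr[\mathcal{A}=2\mid \mathcal{I}_1]\bigr|\leq \Pr[E].
\]
Interpreting ``distinguishes with probability at least $\tfrac12$'' as a distinguishing advantage (equivalently, total variation distance between the two output distributions) of at least $\tfrac12$ then forces $\Pr[E]\geq \tfrac12$.

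Finally, by independence of $g$ and $S$ I would compute $\Pr[E]=\Pr[g\in S]=\E[|S|]/n$, so $\E[|S|]\geq n/2$. Because each distinct label in $S$ requires at least one color inspection, the total number of inspections is at least $|S|$, and hence the expected number of inspections (already in world $\mathcal{I}_1$, which suffices for a worst-case lower bound) is at least $n/2$, proving the lemma. I expect the main obstacle to be the careful justification of the coupling in the second step: rigorously arguing, by induction over the adaptive rounds, that the two executions agree up to the first inspection of label $g$, so that $E$ coincides \emph{exactly} with $\{g\in S\}$ while preserving the independence $g\perp S$. Once that identity is established, the probabilistic estimate and the reduction to the number of inspections are routine.
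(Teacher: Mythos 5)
Your proof is correct, and there is nothing in the paper to compare it against: the lemma is stated bare, with only the remark that it ``follows from simple probability arguments,'' and no proof appears anywhere in the paper. Your coupling argument is a complete and rigorous instantiation of exactly such an argument: run the algorithm in the all-red world, observe that the set $S$ of distinct inspected labels is independent of the hidden green label $g$, argue by induction that the two executions diverge only upon an inspection of label $g$, and conclude that the distinguishing advantage is at most $\Pr[g\in S]=\E[|S|]/n$, so $\E[|S|]\geq n/2$. Two choices you made deserve emphasis, since the lemma statement is informal: reading ``distinguish with probability at least $\tfrac12$'' as a distinguishing advantage (total variation bound) of $\tfrac12$ is the only interpretation under which the claim is non-vacuous; and proving the lower bound on inspections in world $\mathcal{I}_1$ rather than $\mathcal{I}_2$ is precisely what the application needs --- in the proof of Theorem~\ref{thm:LB-body}, the quantity $p_{i'}$ is the expected number of pulls of arm $a_{i'}$ under the all-zero instance $\mathcal{F}_0$, which plays the role of $\mathcal{I}_1$. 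Your identity $E=\{g\in S\}$ also directly bounds the probability of ever seeing the green ball in world $\mathcal{I}_2$ by $\E[|S|]/n$ with $S$ measured in world $\mathcal{I}_1$, which is the form in which the lemma is actually invoked there.
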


In the remainder of this section we prove Theorem~\ref{thm:LB-body}.

Let us define a family of problem instances as follows. Let the set of arms be $\{a_1,a_2,\ldots,a_K\}$. There are $T/B$ different contexts labelled $\{x_1,...,x_{T/B}\}$ and there is a uniform distribution over contexts. The policy set $\Pi$ consists of $T~(K-1)/B$ policies $\pi_{i,j}$, where $2\leq i\leq K$ and $1\leq j\leq T/B$. Define them as follows:
$\pi_{i,j}(x_l) =a_i$ for $l=j$,
and $\pi_{i,j}(x_l) =a_1$ for $l\neq j$.

\OMIT{ %%%%
\begin{align*}
\pi_{i,j}(x_l)&=a_i &\textrm{For }l=j\\
&=a_1 &\textrm{For }l\neq j
\end{align*}
} %%%%

There is just one resource constraint $B$ (apart from time). Pulling arm $a_1$ always costs 0 and arm $a_i,i\neq 1$ always costs 1. Now consider the following problem instances:

%\begin{itemize}
\fakeItem Let $\mathcal{F}_0$ be the instance in which every arm always gives a reward $0$. Note that  $\OPT(\mathcal{F}_0)=0$.

\fakeItem Let $\mathcal{F}_{i,j}$ be the instance in which arm $a_i$ on context $x_j$ gives reward $1$, otherwise every arm on every context gives reward 0. Note that in this case the optimal distribution over policies is just to follow $\pi_{i,j}$ and gets reward $\approx B$.
%\end{itemize}

Now consider any algorithm $\mathcal{A}$ and let the expected number of times it pulls arm $a_i$ be $p_i$ on input $\mathcal{F}_0$. Let $i',i'\neq 1$ be the arm for which this is minimum. Then by simple linearity of expectation we get that $B\geq (K-1)p_{i'}$. It is also simple to see that for the algorithm to get a regret better than $\Omega(\OPT)$ it should be able to distinguish between $\mathcal{F}_0$ and $\mathcal{F}_{i',.}$ at least with probability $\tfrac{1}{2}$. From lemma~\ref{lm:lowerbound:supportinglemma} this can be done iff $p_{i'}\geq T/(2B)$. Combining the two equations we get $B\geq (K-1)T/(2B)$. Solving for $B$ we get $B\geq \sqrt{KT}/2$.

\section{Discretization for contextual dynamic pricing (proof of Theorem~\ref{thm:discretization})}
\label{app:discretization}

We consider contextual dynamic pricing with $B$ copies of a single product. The action space consists of all prices $p\in [0,1]$. We obtain regret bounds relative to an arbitrary policy set $\Pi$.

\xhdr{Preliminaries.}
Let $S(p|x)$ be the contextual \emph{sales rate}: the probability of a sale for price $p$ and context $x$. Note that $S(p|x)$ is non-increasing in $p$, for any given $x$.

The assumption of Lipschitz demands is stated as follows:
\begin{align}\label{eq:Lipschitz}
|S(p|x)-S(p'|x)| \leq L\cdot |p-p'| \quad \text{for all contexts $x$},
\end{align}
for some constant $L$ called the \emph{Lipschitz constant}. For simplicity, assume $L\geq 1$.

For a (possibly randomized) policy $\pi$, define the contextual sales rate
    $S(\pi|x) = \E_{p\sim \pi(x)}[\, S(p|x) \,]$
and the absolute sales rate 
    $S(\pi) = \E_{x}[\, S(\pi|x) \,]$.
The latter is exactly the expected per-round resource consumption for $\pi$. Let $r(\pi)$ be the expected per-round reward for $\pi$.

As discussed in the Introduction, we define the discretization with step $\eps$ as follows. For each price $p$, let $f_\eps(p)$ be $p$ rounded down to the nearest multiple of $\eps$, i.e. the largest price $p'\leq p$ such that $p'\in \eps\N$. For each policy $\pi$ we define a discretized policy $\pi_\eps = f_\eps(\pi)$. The discretized policy set is then
    $\Pi_\eps = \{\pi_\eps: \pi\in \Pi\}$.
Note that for all policies $\pi$ and all contexts $x$ we have
\begin{align*}
\pi(x) \geq \pi_\eps(x) \geq \pi(x)-\eps.
\end{align*}

\noindent By monotonicity of the sales rate and the Lipschitz assumption, resp., it follows that
\begin{align*}
S(\pi|x) \leq S(\pi_\eps|x) \leq S(\pi|x) + \eps L.
\end{align*}
Consequently, 
    $S(\pi) \leq S(\pi_\eps) \leq S(\pi) + \eps L.$

\xhdr{Discretization error.}
The key technical step is to bound the discretization error of the discretized policy set $\Pi_\eps$ compared to the original policy set $\Pi$, as quantified by the difference in $\LPOPT(\cdot)$.

Our proof will use an intermediate policy class 
    $\Phi_{\delta} = \{ S(\pi) \geq \delta\}$,
where $\delta>0$. First we bound the discretization error relative to $\Phi_{\delta}$.

\newcommand{\myE}[1]{\E_{x,\pi}\left[\, #1 \,\right]}

\begin{lemma}\label{lm:discretization-1}
$\LPOPT(\Phi_\delta) - \LPOPT(\Pi_\eps) \leq 2\cdot\eps(1+L \delta^{-2})\,\cdot B$, 
for each $\eps,\delta>0$.
\end{lemma}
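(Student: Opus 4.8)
The plan is to compare LP-values distribution-by-distribution and then pass to a supremum. In this setting there are two resources: the product, with per-round consumption $S(\cdot)$ and budget $B$, and time, with per-round consumption $1$ and budget $T$; hence for any distribution $P$ the LP-value is $\LP(P)=r(P)\,g(S(P))$, where $g(s)=\min(B/s,\,T)$ and $r(P)\in[0,1]$. Given any distribution $P$ supported on $\Phi_\delta$, let $P_\eps$ be its pushforward under the discretization map $\pi\mapsto\pi_\eps$; since $\{\pi_\eps:\pi\in\Phi_\delta\}\subseteq\Pi_\eps$, the distribution $P_\eps$ is a feasible competitor for $\LPOPT(\Pi_\eps)$, so $\LPOPT(\Pi_\eps)\geq\LP(P_\eps)$. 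It therefore suffices to show $\LP(P)-\LP(P_\eps)\leq 2\eps(1+L\delta^{-2})B$ for every such $P$, as taking the supremum over $P$ then yields the lemma.

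I would carry out the estimate using three facts, all obtained by averaging over $\pi\sim P$ the per-policy inequalities stated just before the lemma: (i) $r(P_\eps)\geq r(P)-\eps$; (ii) $S(P)\leq S(P_\eps)\leq S(P)+\eps L$; and (iii) $S(P)\geq\delta$, which holds because $P$ is supported on $\Phi_\delta$. Write $M=g(S(P))$ and $M_\eps=g(S(P_\eps))$. Multiplying (i) by the nonnegative quantity $M_\eps$ (note that no assumption on the sign of $r(P)-\eps$ is needed) yields the additive decomposition
\begin{align*}
\LP(P)-\LP(P_\eps)=r(P)\,M-r(P_\eps)\,M_\eps\leq r(P)\,(M-M_\eps)+\eps\,M_\eps,
\end{align*}
in which the first term isolates the loss from the increased sales rate and the second the loss from the decreased reward.

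To bound $M-M_\eps$ I would use that $h(s)=g(s)-B/s=\min(0,\,T-B/s)$ is non-decreasing in $s$; since $S(P)\leq S(P_\eps)$ by (ii), this gives $M-M_\eps\leq B/S(P)-B/S(P_\eps)$. Using (ii) and (iii) once more,
\begin{align*}
M-M_\eps\leq B\,\frac{S(P_\eps)-S(P)}{S(P)\,S(P_\eps)}\leq \frac{\eps L\,B}{S(P)^2}\leq \frac{\eps L\,B}{\delta^2}.
\end{align*}
Together with $r(P)\leq 1$ and $M_\eps\leq B/S(P_\eps)\leq B/\delta$, the decomposition becomes $\LP(P)-\LP(P_\eps)\leq \eps L\,B/\delta^2+\eps B/\delta$. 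Finally, since $L\geq 1$ and $\delta\leq 1$ we have $\eps B/\delta\leq \eps L\,B/\delta^2$, so the whole expression is at most $2\eps L\,B/\delta^2\leq 2\eps(1+L\delta^{-2})B$, as claimed.

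The main obstacle is that discretization perturbs both factors of $\LP(P)=r(P)\,g(S(P))$ simultaneously — it lowers the reward while raising the sales rate — and the sales-rate perturbation is amplified through the nonlinear factor $g$, i.e. through the interplay between the budget constraint $B/S$ and the time constraint $T$ inside the minimum. The additive split above decouples these two effects, and the monotonicity of $h$ reduces the change in $g$ to the simple difference $B/S(P)-B/S(P_\eps)$; this is precisely where the $\delta^{-2}$ (coming from $1/S(P)^2$) appears, and where the lower bound $S(P)\geq\delta$ afforded by restricting to $\Phi_\delta$ is essential to keep the discretization error finite.
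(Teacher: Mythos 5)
Your proof is correct, and it takes a genuinely different route from the paper's. The paper reduces to the non-contextual case and invokes a generic discretization result from \citet{BwK-focs13} as a black box: all it verifies are two per-policy conditions --- (P1) the discretized policy's sales rate does not decrease, and (P2) the reward-to-consumption ratio $r(\pi_\eps)/S(\pi_\eps)$ drops by at most $\eps(1+L\delta^{-2})$ --- and the conversion of these ratio conditions into a bound on $\LPOPT$ is delegated to the cited result. You instead argue directly at the level of distributions: push each $P$ on $\Phi_\delta$ forward to $P_\eps$ on $\Pi_\eps$, write the LP-value explicitly as $\LP(P)=r(P)\min(B/S(P),T)$, and control the two perturbations (reward down by $\eps$, sales rate up by $\eps L$) via the split $r(P)(M-M_\eps)+\eps M_\eps$, with the monotonicity of $s\mapsto\min(0,\,T-B/s)$ reducing the change in the $\min$ to the difference $B/S(P)-B/S(P_\eps)$. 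The core estimates are the same in both arguments (in particular your fact (i), $r(P_\eps)\geq r(P)-\eps$, does not follow by bare averaging of the displayed per-policy inequalities but needs the same short computation the paper performs inside its proof of (P2), multiplying the price bound by $S(\pi_\eps|x)\leq 1$); what your version buys is self-containedness --- no reliance on the BwK machinery --- and an explicit accounting of the constant $2$ and of where each error term comes from ($\eps B/\delta$ from the reward loss, $\eps L B/\delta^2$ from the amplified sales-rate shift), while the paper's version is shorter and modular, reusing a lemma that covers more general knapsack settings. One cosmetic caveat: your final simplification uses $\delta\leq 1$, which is automatic whenever $\Phi_\delta$ is nonempty (sales rates are at most $1$), and the lemma is vacuous otherwise, so nothing is lost.
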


\begin{proof}
Using a trivial reduction to the non-contextual case (when a policy corresponds to an action in the bandits-with-knapsacks problem), one can use a generic discretization result from \citet{BwK-focs13}. According to this result (specialized to contextual dynamic pricing), it suffices to prove that for each policy $\pi\in\Phi_\delta$ the following two properties hold:
\begin{OneLiners}
\item[(P1)] $S(\pi_\eps)\geq S(\pi)$,
\item[(P2)] $r(\pi_\eps)/S(\pi_\eps) \geq r(\pi)/S(\pi)-\eps(1+ L \delta^{-2})$,
    as long as $S(\pi_\eps)>0$.
\end{OneLiners}
In words: the sales rate of the discretized policy $\pi_\eps$ is at least the same, and the reward-to-consumption ratio is not much worse. 

Property (P1) holds trivially because $\pi_\eps\leq \pi$ (deterministically and for every context), and the contextual sales rate $S(p|x)$ is decreasing in $p$ for any fixed context $x$.
\begin{align*}
r(\pi_\eps)   &=      \myE{ f_\eps(\pi(x))  \cdot S(\pi_\eps|x) } \\
            &\geq   \myE{ (\pi(x) - \eps) \cdot S(\pi_\eps|x) } \\
            &\geq   \myE{ \pi(x)\cdot S(\pi|x) } -
                    \eps\, \myE{ S(\pi_\eps|x) } \\
            &= r(\pi) - \eps\, S(\pi_\eps).  \\
r(\pi_\eps)/S(\pi_\eps)
    &\geq r(\pi)/S(\pi_\eps) - \eps.
\end{align*}
Now, by the Lipschitz assumption,
    $S(\pi_\eps)\leq S(\pi)+\eps L $, so to complete the proof
\begin{align*}
\frac{r(\pi_\eps)}{S(\pi_\eps)}
    \geq \frac{r(\pi)}{S(\pi)+\eps L} -\eps  
    \geq \frac{r(\pi)}{S(\pi)} - \frac{\eps L}{ (S(\pi))^2} -\eps
    \geq \frac{r(\pi)}{S(\pi)} - \frac{\eps L}{ \delta^2} -\eps. \qedhere
\end{align*}
\end{proof}

Now we bound the loss in $\LPOPT$ between $\Pi$ and $\Phi_\delta$.
\begin{lemma}\label{lm:discretization-2}
$\LPOPT(\Pi) -\LPOPT(\Phi_\delta) \leq \delta T$, for each $\delta>0$.
\end{lemma}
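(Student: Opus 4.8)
The plan is to exhibit, for the LP-optimal distribution $P^\ast$ over $\Pi$, a distribution $P'$ supported on $\Phi_\delta$ whose LP-value falls short by at most $\delta T$. The whole argument rests on one structural feature of dynamic pricing: since every sale earns a price in $[0,1]$, the per-round reward never exceeds the per-round consumption of the supply resource, i.e.\ $r(\pi)\le S(\pi)$ for every policy $\pi$ (and hence $r(P)\le S(P)$ for every distribution $P$). In particular, every policy in the discarded class $L=\{\pi\in\Pi:\ S(\pi)<\delta\}$ has reward $r(\pi)<\delta$, so the total reward carried by $L$ is small no matter how $P^\ast$ spreads its mass.

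Concretely, first I would invoke Lemma~\ref{lm:perfect} to obtain an LP-optimal distribution $P^\ast$ for $\Pi$, and write $t^\ast=\min(T,\,B/S(P^\ast))\le T$ for its optimal time horizon, so that $\LPOPT(\Pi)=\LP(P^\ast,\mu)=t^\ast\, r(P^\ast)$. The bound $t^\ast\le T$ is exactly the time-budget constraint $t^\ast\cdot 1\le T$ of the LP~\eqref{eq:LP-t}, where time is the resource with deterministic per-round consumption $1$. Next I would split the support of $P^\ast$ into the high-sales part $H=\Phi_\delta$ and the low-sales part $L$, set $q=\Pr_{\pi\sim P^\ast}[\pi\in H]$, and (assuming $q>0$) let $P'$ be $P^\ast$ conditioned on $H$.

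The key step is that \emph{dropping} the plays of the low-sales policies yields a feasible LP solution for $\Phi_\delta$. Reading the LP optimum as ``$t^\ast P^\ast(\pi)$ expected plays of each policy $\pi$,'' the pair $(t'=t^\ast q,\ P')$ retains exactly the plays of the policies in $H$: its reward equals $t'\,r(P')=t^\ast\sum_{\pi\in H}P^\ast(\pi)\,r(\pi)$, and its consumption of each resource $i$ is $t'\,c_i(P')=t^\ast\sum_{\pi\in H}P^\ast(\pi)\,c_i(\pi)\le t^\ast\, c_i(P^\ast)\le B_i$, so $(t',P')$ is LP-feasible for $\Phi_\delta$ (both the time and the supply constraints survive, since $t'\le t^\ast$ and the per-resource sums only drop). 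Hence $\LPOPT(\Phi_\delta)\ge t'\,r(P')$, and it remains to bound the lost reward: using $r(\pi)\le S(\pi)<\delta$ on $L$ together with $t^\ast\le T$,
\[
\LPOPT(\Pi)-t'\,r(P') \;=\; t^\ast\!\!\sum_{\pi\in L}P^\ast(\pi)\,r(\pi)\;<\; t^\ast\,\delta\,\Pr[\pi\in L]\;\le\; \delta T,
\]
which is the claim.

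Finally I would dispose of the degenerate case $q=0$: then all mass of $P^\ast$ lies in $L$, so $r(P^\ast)<\delta$ and $\LPOPT(\Pi)=t^\ast r(P^\ast)<\delta T$, while $\LPOPT(\Phi_\delta)\ge 0$, so the bound holds trivially. I expect the only real obstacle to be the feasibility bookkeeping in the conditioning step — verifying that simultaneously conditioning $P^\ast$ on $H$ and shrinking the horizon from $t^\ast$ to $t^\ast q$ preserves all resource constraints (time included) while discarding precisely the reward of the low-sales policies. Everything else is an immediate consequence of the inequality $r(\pi)\le S(\pi)$ and the time-budget bound $t^\ast\le T$.
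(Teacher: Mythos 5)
Your proof is correct, and it takes a genuinely different route from the paper's. The paper first disposes of the case $\delta \ge B/T$ (trivial since $\LPOPT(\Pi)\le B\le \delta T$), then invokes Lemma~\ref{lm:perfect} to get an LP-\emph{perfect} distribution $P$, written as a mixture of at most $d=2$ policies $\pi,\pi'$ with $S(\pi)\le S(\pi')$; it argues $\pi'\in\Phi_\delta$, and concludes by subadditivity of the LP value, $\LP(P)\le \LP(\pi)+\LP(\pi')\le \delta T + \LPOPT(\Phi_\delta)$, where $\LP(\pi)\le T\,r(\pi)\le T\,S(\pi)\le \delta T$ uses the same crux inequality as yours ($r(\pi)\le S(\pi)$ because prices lie in $[0,1]$). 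You instead take an arbitrary LP-optimal $P^\ast$, condition it on $\Phi_\delta$ while shrinking the horizon from $t^\ast$ to $t^\ast q$, verify that the restricted pair stays LP-feasible (all consumption sums only drop), and charge the discarded reward to the low-sales policies via $r(\pi)\le S(\pi)<\delta$ and $t^\ast\le T$. What your version buys: it needs only attainment of the supremum --- neither the support bound $|\support(P)|\le d$ nor $c(P)\le B/T$ from LP-perfectness --- it avoids both case splits, and, notably, it sidesteps the paper's step ``$S(\pi')\ge B/T>\delta$,'' which does not follow from the stated definition of LP-perfectness and can fail when the LP optimum is time-constrained (e.g.\ a point mass on a single policy with $S<\delta$); that is exactly your $q=0$ branch, which your argument handles cleanly while the paper's does not. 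Two small caveats: (i) shared with the paper, if every policy has $S(\pi)<\delta$ then $\Phi_\delta=\emptyset$, and your step $\LPOPT(\Phi_\delta)\ge 0$ (indeed the lemma statement itself) needs the convention that the LP value of an empty policy set is $0$; (ii) your strict inequality $t^\ast\sum_{\pi\in L}P^\ast(\pi)\,r(\pi) < t^\ast\delta\Pr[\pi\in L]$ degenerates to an equality of zeros when $\Pr[\pi\in L]=0$, so it should be written as ``$\le$''.
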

\begin{proof}
If $\delta\geq B/T$, the statement is trivial because $\LPOPT(\Pi)\leq B$. So w.l.o.g. assume $\delta<B/T$.

By Lemma~\ref{lm:perfect}, there exists an LP-perfect distribution $P$ over policies in $\Pi$. Recall that $P$ is a mixture of (at most) two policies, say $\pi$ and $\pi'$, and $c(P)\leq B/T$. W.l.o.g. assume $S(\pi)\leq S(\pi')$.

If $S(\pi)\geq \delta$ then $\pi,\pi' \in \Phi_\delta$, so 
    $\LPOPT(\Pi) =\LPOPT(\Phi_\delta)$.

The remaining case is $S(\pi)< \delta$. Then 
    $S(\pi')\geq B/T>\delta$, so $\pi'\in \Phi_\delta$.
Therefore:
\begin{align*}
\LPOPT(\Pi) 
    = \LP(P) 
    \leq \LP(\pi)+\LP(\pi')
    \leq \LP(\pi) + \LPOPT(\Phi_\delta).
\end{align*}
It remains to prove that $\LP(\pi)\leq \delta T$. Indeed,
\begin{align*}
r(\pi)
    &= \myE{ \pi(x) \cdot S(\pi|x) }
    \leq \myE{ S(\pi|x) }
    = S(\pi) \leq \delta. \\
\LP(\pi) 
    &= r(\pi)\, \min(T,\,B/S(\pi))
    \leq r(\pi)\, T\leq \delta T.
    \qedhere
\end{align*}
\end{proof}

Putting Lemma~\ref{lm:discretization-1} and Lemma~\ref{lm:discretization-1} together and optimizing $\delta$, we obtain:
\begin{lemma}\label{lm:discretization-3}
For each $\eps>0$, letting $\delta = (2\eps B L/T)^{1/3}$, we have
$$\LPOPT(\Pi) - \LPOPT(\Pi_\eps) \leq 2\delta T  +2\eps B.$$
\end{lemma}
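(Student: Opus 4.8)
The plan is to chain the two preceding lemmas through the intermediate policy class $\Phi_\delta$ and then optimize the free parameter $\delta$. Summing Lemma~\ref{lm:discretization-2} (which bounds $\LPOPT(\Pi) - \LPOPT(\Phi_\delta)$) and Lemma~\ref{lm:discretization-1} (which bounds $\LPOPT(\Phi_\delta) - \LPOPT(\Pi_\eps)$), the intermediate term $\LPOPT(\Phi_\delta)$ telescopes away and I obtain, for every $\delta>0$,
\begin{align*}
\LPOPT(\Pi) - \LPOPT(\Pi_\eps)
    &\leq \delta T + 2\eps\,(1 + L\delta^{-2})\,B \\
    &= \delta T + 2\eps B + 2\eps B L\,\delta^{-2}.
\end{align*}
The remaining task is purely to choose $\delta$ so as to make this right-hand side small.

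The right-hand side splits into two $\delta$-dependent terms that move in opposite directions, plus a $\delta$-independent term. The ``coarse class'' penalty $\delta T$ grows with $\delta$ (it is the cost, from Lemma~\ref{lm:discretization-2}, of restricting to policies with sales rate at least $\delta$), while the ``discretization'' penalty $2\eps B L\,\delta^{-2}$ blows up as $\delta\to 0$ (it is the reward-to-consumption distortion in Lemma~\ref{lm:discretization-1}, which degrades for policies with tiny sales rate). The additive term $2\eps B$ does not depend on $\delta$. The standard balancing heuristic is to equate the two competing terms: setting $\delta T = 2\eps B L\,\delta^{-2}$ gives $\delta^3 = 2\eps B L/T$, i.e. exactly the stated choice $\delta = (2\eps B L/T)^{1/3}$. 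With this $\delta$ the two terms are equal, each equal to $\delta T$ (the one-line check is $2\eps B L\,\delta^{-2} = \delta^3 T\cdot\delta^{-2} = \delta T$), so the bound collapses to $\delta T + 2\eps B + \delta T = 2\delta T + 2\eps B$, which is the claim.

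There is essentially no obstacle in this step: all of the real work already lives in Lemma~\ref{lm:discretization-1} and Lemma~\ref{lm:discretization-2}, and this lemma is just the clean packaging of their sum after optimizing a single scalar. The only thing worth flagging is that the exponent $1/3$ in the choice of $\delta$ is \emph{forced}: because the bad term scales as $\delta^{-2}$ against a linear $\delta T$, no other power of $\delta$ balances them, and this is precisely what produces the $T^{3/5}B^{1/5}$ shape once $\eps$ is itself optimized downstream in the proof of Theorem~\ref{thm:discretization}.
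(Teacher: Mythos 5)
Your proposal is correct and is exactly the paper's argument: the paper's proof of Lemma~\ref{lm:discretization-3} is the one-line statement that one combines Lemma~\ref{lm:discretization-1} and Lemma~\ref{lm:discretization-2} and optimizes $\delta$, which is precisely the chaining and balancing computation you carry out (and your check that $2\eps B L\,\delta^{-2} = \delta T$ under the stated choice of $\delta$ is the right verification). No gaps; your write-up simply makes explicit what the paper leaves implicit.
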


\xhdr{Plugging in the general result.}
Let $\Rew(\Pi')$ be the expected total reward when \CBalance is run with policy set $\Pi'$ which uses only $K$ distinct actions. Recall that we actually prove a somewhat stronger version of Theorem~\ref{thm:intro-algo}: the same regret bound \refeq{eq:thm:intro-algo}, but with respect to $\LPOPT(\Pi')$ rather than $\OPT(\Pi')$. In our setting we have $d=2$ resource constraints (incl. time) and $\LPOPT(\Pi')\leq B$. Therefore:
\begin{align*}
\Rew(\Pi') \geq \LPOPT(\Pi') - O\left( \sqrt{KT \; \log \left( KT\,|\Pi'| \right)}\right).
\end{align*}

Plugging in $\Pi'=\Pi_\eps$ and $K = \tfrac{1}{\eps}$, and using Lemma~\ref{lm:discretization-3}, we obtain
\begin{align}\label{eq:discretization-eps}
\Rew(\Pi_\eps) \geq \LPOPT(\Pi) 
- O\left( \eps B + \delta T + 
   \sqrt{\tfrac{T}{\eps} \; \log \left( \tfrac{T}{\eps}\,|\Pi_\eps| \right)}\right),
\end{align}
for each $\eps>0$ and $\delta = (2\eps B L/T)^{1/3}$.

We obtain Theorem~\ref{thm:discretization} choosing 
    $\eps = (BL)^{-2/5}\, T^{-1/5}\, (\log( T\,|\Pi_\eps|))^{3/5} $ 
and noting
    $|\Pi_\eps| \leq |\Pi|$.

\section{Conclusions and open questions}
\label{sec:conclusions}

We define a very general setting for contextual bandits with resource constraints (denoted \cBwK). We design an algorithm for this problem, and derive a regret bound which achieves the optimal root-$T$ scaling in terms of the time horizon $T$, and the optimal $\sqrt{\log |\Pi|}$ scaling in terms of the policy set $\Pi$. Further, we consider discretization issues, and derive a specific corollary for contextual dynamic pricing with a single product; we obtain a regret bound that applies to an arbitrary policy set $\Pi$. Finally, we derive a partial lower bound which establishes a stark difference from the non-contextual version. These results set the stage for further study of \cBwK, as discussed below.

The main question left open by this work is to combine provable regret bounds and a computationally efficient (CE) implementation. While we focused on the statistical properties, we believe our techniques are unlikely to lead to CE implementations. Achieving near-optimal regret bounds in a CE way has been a major open question for contextual bandits with policy sets (without resource constraints). This question has been resolved in the positive in a simultaneous and independent work \citep{monster-icml14}. Very recently, a follow-up paper \citep{AgrawalDevanurLi-15} has achieved the corresponding advance on \cBwK, by combing the techniques from \citet{monster-icml14} and \citet{AgrawalDevanur-ec14} (which, in turn, builds on \citet{BwK-focs13}).

\OMIT{ %%%%%%
As a stepping stone, one can target CE algorithms with weaker regret bounds, such as $T^{2/3}$ rather than $\sqrt{T}$ dependence on time, perhaps building on the corresponding results for the non-constrained version \citep{Langford-nips07}.
} %%%%

\vspace{2mm}

Computational issues aside, several open questions concern our regret bounds.

First, it is desirable to achieve the same regret bounds without assuming a known time horizon $T$ (as it is in most bandit problems in the literature). This may be difficult because time is one of the resource constraints in our problem, and our techniques rely on knowing all resource constraints in advance. More generally, one can consider a version of \cBwK in which some of the resource constraints are not fully revealed to an algorithm; instead, the algorithm receives updated estimates of these constrains over time.

Second, while our main regret bound in Theorem~\ref{thm:intro-algo} is optimal in the important regime when $\OPT(\Pi)$ and $B$ are at least a constant fraction of $T$, it is not tight for some other regimes. For a concrete comparison, consider problem instances with a constant number of resources ($d$), a constant number of actions ($K$), and $\OPT(\Pi)\geq \Omega(B)$. Then, ignoring logarithmic factors, we obtain regret $\OPT(\Pi)\, \sqrt{T}/B$, whereas the lower bound in \citet{BwK-focs13} is $\OPT(\Pi)/\sqrt{B}$. So there is a gap when $B\ll T$. 
Likewise, for contextual dynamic pricing with a single product, there is a gap between our algorithmic result (Theorem~\ref{thm:discretization}) and the $B^{2/3}$ lower bound for the non-contextual case from \citet{DynPricing-ec12}. In both cases, both upper and lower bounds can potentially be improved.

Third, for special cases when actions correspond to prices one would like to extend the discretization approach beyond contextual dynamic pricing with a single product. However, this is problematic even without contexts: essentially, nothing is known whenever one has multiple resource constraints, and even with a single resource constraint (besides time) the solutions are very non-trivial; see \citet{BwK-focs13} for more discussion.

\OMIT{ %%%
in some applications of \cBwK, such as dynamic pricing and dynamic procurement, the action space can be a continuous interval of prices. Theorem~\ref{thm:intro-algo} usefully applies whenever the policy set $\Pi$ is chosen so that the number of distinct actions used by policies in $\Pi$ is finite and small compared to $T$. (Because one can w.l.o.g. remove all other actions.) However, one also needs to handle problem instances in which the policies in $\Pi$ use prohibitively large or infinite number of actions. 
} %%%

Fourth, if there are no contexts or resource constraints then one can achieve $O(\log T)$ regret with an instance dependent constant; it is not clear whether one can meaningfully extend this result to contextual bandits with resource constraints.

\vspace{2mm}

The model of \cBwK can be extended in several directions, two of which we outline below. The most immediate extension is to an unknown distribution of context arrivals. This extension has been addressed, among other results, in the follow-up paper \citep{AgrawalDevanurLi-15}. The most important extension, in our opinion, would be from a stationary environment to one controlled by an adversary (perhaps restricted in some natural way). We are not aware of any prior work in this direction, even for the non-contextual version.

\bibliography{bib-abbrv,bib-slivkins,bib-bandits,bib-AGT,bib-random}
% A.S.: for short conference names: "bib-abbrv" -> "bib-abbrv-short".

\appendix

\section{\cBwK: applications and special cases}
\label{app:apps}

In this section we discuss the application domains of resource-constrained (contextual) bandits in more detail. We focus on the three main application domains: dynamic pricing, dynamic procurement, and dynamic ad allocation. A more extensive discussion of these and other application domains (in the non-contextual version) can be found in \cite{BwK-focs13,BwK-full}.

\xhdr{Dynamic pricing with limited supply.} The algorithm is a monopolistic seller with a limited inventory. In the basic version, there is a limited supply of identical items. In each round, a new customer arrives, the algorithm picks a price, and offers one item for sale at this price. The customer then either buys the item at this price, or rejects the offer and leaves. The ``context" represents the available information about the current customer, such as demographics, location, etc. The probability of selling at a given price for a given context (a.k.a. the \emph{demand distribution}) is fixed over time, but not known to the algorithm. The algorithm optimizes the revenue; it does not derive any utility from  the left-over items.

We represent this problem as an instance of \cBwK as follows. ``Actions" are the possible prices, and the ``resource constraint" is the number of items.  In each round, the outcome vector is a pair (reward, items sold); if the offered price is $p$, the outcome vector is $(p,1)$ if there is a sale, and $(0,0)$ otherwise.

Many generalizations of dynamic pricing have been studied in the literature. In particular, \cBwK subsumes a number of extensions. First, an algorithm can sell multiple items to the same customer, possibly with volume discounts or surcharges. Second, an algorithm can have multiple products for sale, with limited inventory of each. Third, it may be advantageous to offer bundles consisting of different products, possibly with non-additive pricing (mirroring the non-additive valuations of the customers).

\xhdr{Dynamic procurement on a budget.} The algorithm is a monopolistic buyer with a limited budget. The basic version is as follows. In each round, a new customer arrives, the algorithm picks a price, and offers to buy one item at this price. Then the customer either accepts the offer and sells the item at this price, or rejects the offer and leaves. The ``context" is the available information on the current customer. The probability of buying at a given price for a given context (a.k.a. the ``supply distribution") is fixed over time, but not known to the algorithm. The algorithm maximizes the number of items bought; it has no utility for the left-over money.

An alternative interpretation is that the algorithm is a contractor which hires workers to perform tasks, e.g. in a crowdsourcing market. In each round, a new worker arrives, the algorithm picks a price, and offers the worker to perform one task for this price; the worker then either accepts and performs the task at this price, or rejects and leaves. The relevant ``context" for a worker in a crowdsourcing market may include, for example, age, location, language, and task preferences.

Here, ``actions" correspond to the possible prices, and the ``resource constraint" is the buyer's budget.  In each round, the outcome vector is a pair (items bought, money spent); if the offered price is $p$, then the outcome vector is $(1,p)$ if the offer is accepted, and $(0,0)$ otherwise.
 
Dynamic procurement is a rich problem space, both for buying items and for hiring workers (see \citep{Crowdsourcing-PositionPaper13} for a discussion of the application to crowdsourcing markets). In particular, \cBwK subsumes a number of extensions of this basic setting. First, the algorithm may offer several tasks to the same worker, possibly at a discount. Second, there may be multiple types of tasks, each having a different value for the contractor; moreover, there may be additional budget constraints on each task type, or on various \emph{subsets} of task types. Third, a given worker can be offered a bundle of tasks, consisting of tasks of multiple types, possibly with non-additive pricing. Fourth, there is a way to model the presence of competition (other contractors).
 
\xhdr{Dynamic ad allocation with budgets.} The algorithm is an advertising platform. In the basic version, there is a fixed collection of ads to choose from. In each round, a user arrives, and the algorithm chooses one ad to display to this user. The user either clicks on this ad, or leaves without clicking. The algorithm receives a payment if and only if the ad is clicked; the payment for a given ad is fixed over time and known to the algorithm. The ``context" is the available information about the user and the page on which the ad is displayed. The click probability for a given ad and a given context is constant over time, but not known to the algorithm. 

Each ad belongs to some advertiser (who is the one paying the algorithm when this ad is clicked). Each advertiser may own multiple ads, and has a budget constraint: a maximal amount of money that can be spent on all his ads. Moreover, an advertiser may specify additional budget constraints on various subsets of the ads. The algorithm maximizes its revenue; it derives no utility from the left-over budgets.

Here, ``actions" correspond to ads, and each budget corresponds to a separate resource. In a round when the chosen ad $a$  is clicked, the reward is the corresponding payment $v$, and the resource consumption is $v$ for each budget that involves $a$, and $0$ for all other budgets. If the ad is not clicked, the reward and the consumption of each resource is $0$.

\cBwK also subsumes more advanced versions in which multiple non-zero outcomes are possible in each round. For example, the ad platform may record what happens \emph{after} the click, e.g. the time spent on the page linked from the ad and whether this interaction has resulted in a sale.

\section{Compactness of $\mF_t$}
\label{app:compactness}

Recall that $\PotPerf_t$ is the set of distributions over $\Pi$ that is computed by our algorithm in each round $t$, and
$\mF_t = \Conv(\PotPerf_t)$ is the convex hull of $\PotPerf_t$. In this appendix we prove that $\mF_t$ is compact for each $t$. Here each distribution over $\Pi$ is interpreted as a $|\Pi|$-dimensional vector, and compactness is with respect to the Borel topology on $\R^{|\Pi|}$.

\begin{lemma}\label{lm:app-compactness}
$\mF_t$ is compact for each $t$, relative to the Borel topology on $\R^{|\Pi|}$.
\end{lemma}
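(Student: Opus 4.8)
The plan is to prove the stronger statement that $\PotPerf_t$ itself is compact, and then invoke the standard fact that the convex hull of a compact subset of a finite-dimensional space is compact (a consequence of Carath\'eodory's theorem); this immediately yields compactness of $\mF_t=\Conv(\PotPerf_t)$. Since $\PotPerf_t$ is a subset of the set $\FPi$ of all distributions over $\Pi$, which is compact in $\R^{|\Pi|}$, it suffices to show that $\PotPerf_t$ is closed.

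First I would record that the confidence region $\mI_t$ is compact: it is obtained from the compact set of all feasible \eoss (each coordinate lying in a bounded interval) by imposing the finitely many closed constraints of the form Equations~(\ref{eq:algo-estimate-r}-\ref{eq:algo-estimate-c}) accumulated over rounds $1,\dots,t-1$, where the quantities $\alpha_{\pi,s}$ for $s<t$ are already-fixed constants (so there is no circularity). Then I would take an arbitrary sequence $P_n\to P$ with $P_n\in\PotPerf_t$, so each $P_n$ is LP-perfect for some $\mu_n\in\mI_t$. By compactness of $\mI_t$ and finiteness of $\Pi$, I would pass to a subsequence along which $\mu_n\to\mu\in\mI_t$ and the support $\support(P_n)$ equals a fixed set $S$ with $|S|\le d$. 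The remaining task is to verify the three defining conditions of LP-perfectness for the pair $(P,\mu)$.

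The support and consumption conditions pass to the limit easily. If $P(\pi)>0$ then $P_n(\pi)\to P(\pi)>0$ forces $\pi\in\support(P_n)=S$ for all large $n$, so $\support(P)\subseteq S$ and $|\support(P)|\le d$. Likewise $c_i(P,\mu)=\sum_{\pi}P(\pi)\,c_i(\pi,\mu)$ is jointly continuous (indeed bilinear) in $(P,\mu)$, so passing to the limit in $c_i(P_n,\mu_n)\le B/T$ gives $c_i(P,\mu)\le B/T$ for every resource $i$.

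The main obstacle is the LP-optimality condition $\LP(P,\mu)=\LPOPT(\mu)$, because of the apparent singularity in $\LP(P,\mu)=r(P,\mu)\,\min_i B/c_i(P,\mu)$ when some $c_i(P,\mu)\to 0$. The observation that resolves this is that $\min_i B/c_i(P,\mu)=B/\max_i c_i(P,\mu)$, together with the fact that the deterministic time resource keeps $\max_i c_i(P,\mu)\ge c_{\mathrm{time}}(P,\mu)=B/T>0$ bounded away from zero. Hence $\LP(P,\mu)=r(P,\mu)\,B/\max_i c_i(P,\mu)$ is jointly continuous in $(P,\mu)$ on the entire relevant domain. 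It follows that $\LPOPT(\mu)=\max_{P'\in\FPi}\LP(P',\mu)$ is attained and continuous in $\mu$, being the maximum of a jointly continuous function over the compact set $\FPi$ (the Berge maximum theorem, or a direct uniform-continuity argument). Passing to the limit in $\LP(P_n,\mu_n)=\LPOPT(\mu_n)$ then gives $\LP(P,\mu)=\LPOPT(\mu)$. Combining the three conditions shows that $P$ is LP-perfect for $\mu\in\mI_t$, so $P\in\PotPerf_t$; thus $\PotPerf_t$ is closed, hence compact, and the compactness of $\mF_t$ follows as explained above.
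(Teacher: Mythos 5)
Your proof is correct, and its skeleton matches the paper's: reduce to closedness of $\PotPerf_t$ (boundedness is trivial, and the convex hull of a compact subset of $\R^{|\Pi|}$ is compact), take $P_n\to P$ with witness \eoss $\mu_n$ in the compact confidence region $\mI_t$, pass to a convergent subsequence $\mu_n\to\mu$, and verify that the three defining conditions of LP-perfectness survive the limit. The differences are in the packaging and in the treatment of the one delicate step. The paper routes the argument through a general topological lemma (Lemma~\ref{lm:app-compactness-general}): for compact sets and continuous $f,g_i$, the union over $y$ of the constrained argmax sets $H(y)$ is closed; there, the fact that optimality passes to the limit is proved by an $\eps/4$-neighborhood contradiction. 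You instead observe that the value function $\mu\mapsto\LPOPT(\mu)=\max_{P'\in\FPi}\LP(P',\mu)$ is continuous (Berge's maximum theorem over the fixed compact set $\FPi$) and take limits on both sides of $\LP(P_n,\mu_n)=\LPOPT(\mu_n)$; this is equivalent in content --- the paper's contradiction argument is essentially an inline proof of what Berge supplies --- but arguably cleaner. Your proof also makes explicit something the paper glosses over: its general lemma requires $f$ to be continuous (and bounded), and the LP-value $\LP(P,\mu)=r(P,\mu)\,\min_i B/c_i(P,\mu)$ is \emph{not} continuous in general near vanishing consumptions (a $0/0$ singularity); your observation that the deterministic time resource forces $\max_i c_i(P,\mu)\geq B/T>0$, so that $\LP(P,\mu) = r(P,\mu)\cdot B/\max_i c_i(P,\mu)$ is jointly continuous and bounded by $T$ on the whole domain, is precisely what legitimizes the paper's application of its own lemma as well. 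The one caveat is that this requires the time coordinate of every feasible \eos in $\mI_t$ to be pinned at $B/T$, which is the intended reading since time consumption is known and deterministic --- the paper needs this too. Finally, you handle $|\support(P)|\leq d$ by extracting a subsequence with constant support, whereas the paper builds the support constraint into its compact set of distributions; both devices work.
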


Since a convex hull of compact set is compact, it suffices to prove that $\PotPerf_t$ is compact, i.e. that it is closed and bounded. Each distribution is contained in a unit cube, hence bounded. Thus, it suffices to prove that $\PotPerf_t$ is a closed subset of $\R^{|\Pi|}$.

We use the following general lemma, which can be proved via standard real analysis arguments.

\newcommand{\tX}{{\mathtt X}}
\newcommand{\tY}{{\mathtt Y}}
\newcommand{\mX}{\mathcal{X}}
\newcommand{\mY}{\mathcal{Y}}

\begin{lemma}\label{lm:app-compactness-general}
Consider the following setup:
\begin{OneLiners}
\item $\mX,\mY$ are compact subsets of finite-dimensional real spaces $\R^{d_\tX}$ and $\R^{d_\tY}$, respectively.
\item Functions $f,g_1 \LDOTS g_d: \mX\times \mY\to [0,1]$ are continuous w.r.t. product topology on $\mX\times \mY$.
\item $H(y) = \{ x \in \mX: f(x,y) = \sup_{x'\in \mX} f(x',y) \text{ and } g(x,y)\leq 0 \}$, for each $y\in \mY$.
\end{OneLiners}
Then $H(\mY) = \cup_{y\in \mY} H(y)$ is a closed subset of $\R^{d_\tX}$.
\end{lemma}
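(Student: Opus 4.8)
The plan is to prove closedness of $H(\mY)$ by a sequential compactness argument, exploiting that $\R^{d_\tX}$ is a metric space, so that closed is equivalent to sequentially closed. First I would take an arbitrary sequence $(x_n)$ in $H(\mY)$ converging to some point $x_* \in \R^{d_\tX}$, and aim to show $x_* \in H(\mY)$. For each $n$, by definition of the union $H(\mY) = \cup_{y} H(y)$ there is a witness $y_n \in \mY$ with $x_n \in H(y_n)$; since $\mY$ is compact I can pass to a subsequence along which $y_n \to y_*$ for some $y_* \in \mY$, and after relabeling I may assume $x_n \to x_*$ and $y_n \to y_*$ simultaneously. Because $\mX$ is compact, hence closed, and each $x_n \in \mX$, the limit $x_*$ lies in $\mX$. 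It then remains to verify the two defining conditions of $H(y_*)$ at the pair $(x_*, y_*)$.

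The constraint condition is the easy half. Each component $g_i$ is continuous on the product $\mX \times \mY$, so $g_i(x_n, y_n) \to g_i(x_*, y_*)$; since $g_i(x_n, y_n) \le 0$ for every $n$, the inequality $g_i(x_*, y_*) \le 0$ is preserved in the limit, for every $i$.

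The optimality condition is the crux, and the main obstacle. Writing $M(y) = \sup_{x' \in \mX} f(x', y)$ (a genuine maximum, attained because $f(\cdot, y)$ is continuous on the compact set $\mX$), the membership $x_n \in H(y_n)$ gives $f(x_n, y_n) = M(y_n)$. The left-hand side converges to $f(x_*, y_*)$ by joint continuity of $f$, so it suffices to show $M(y_n) \to M(y_*)$, i.e. that the value function $M$ is continuous. I would establish this directly from uniform continuity: $f$ is continuous on the compact product $\mX \times \mY$, hence uniformly continuous, so for every $\eps > 0$ there is $\delta > 0$ with $|f(x, y) - f(x, y')| < \eps$ whenever $|y - y'| < \delta$, uniformly in $x \in \mX$. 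Taking the supremum over $x$ in $f(x, y) \le f(x, y') + \eps$ yields $M(y) \le M(y') + \eps$, and symmetrically $M(y') \le M(y) + \eps$, so $|M(y) - M(y')| \le \eps$; thus $M$ is uniformly continuous and $M(y_n) \to M(y_*)$. Combining, $f(x_*, y_*) = M(y_*)$, so $x_*$ maximizes $f(\cdot, y_*)$ over $\mX$.

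Putting the two verified conditions together gives $x_* \in H(y_*) \subseteq H(\mY)$, which shows $H(\mY)$ is sequentially closed and hence closed. The only delicate point is the continuity of the value function $M$; everything else is routine limit-passing. A cleaner but heavier alternative would be to invoke Berge's maximum theorem for the argmax correspondence, but since the maximization here is over all of $\mX$ — the constraint $g \le 0$ is imposed separately rather than restricting the feasible region — the self-contained uniform-continuity argument above is simpler and avoids appealing to the constrained version of that theorem.
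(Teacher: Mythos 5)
Your proof is correct, and its skeleton---sequential closedness in a metric space, extracting a convergent subsequence of witnesses $y_n$ by compactness of $\mathcal{Y}$, placing $x_*$ in $\mathcal{X}$ by closedness, and passing the constraints $g_i\le 0$ to the limit by continuity---coincides with the paper's proof. Where you genuinely diverge is in the optimality condition at the limit. The paper argues by contradiction: assuming $f(x^*,y^*)$ falls short of $\sup_{x\in\mathcal{X}} f(x,y^*)$ by some $\epsilon>0$, it picks a maximizer $x^{**}$, surrounds $(x^*,y^*)$ and $(x^{**},y^*)$ by product neighborhoods on which $f$ varies by less than $\epsilon/4$, and uses the optimality of $x_j$ for $y_j$ to obtain the chain $f(x^*,y^*) > f(x_j,y_j)-\epsilon/4 \ge f(x^{**},y_j)-\epsilon/4 > f(x^{**},y^*)-\epsilon/2$, a contradiction. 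You instead isolate continuity of the value function $M(y)=\sup_{x\in\mathcal{X}}f(x,y)$ as a standalone fact, proved from uniform continuity of $f$ on the compact product (taking the supremum over $x$ in $f(x,y)\le f(x,y')+\epsilon$), and then simply pass both sides of $f(x_n,y_n)=M(y_n)$ to the limit. The two arguments rest on the same ingredients (compactness plus joint continuity), and the paper's neighborhood argument is in effect an inlined, local proof of the semicontinuity of $M$ that your lemma makes explicit; your version is more modular, yields a reusable and slightly stronger statement (uniform continuity of $M$, a baby case of Berge's maximum theorem), and shortens the limit-passing to one line, at the cost of invoking the theorem that continuity on a compact set implies uniform continuity. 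Your closing remark is also accurate: since the maximization defining $H(y)$ is over all of $\mathcal{X}$, with the constraints $g_i\le 0$ imposed as a separate side condition rather than shaping the feasible set, the constrained form of Berge's theorem is not needed, and the self-contained argument suffices.
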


We apply this lemma to prove that $\PotPerf_t$ is a closed subset of $\R^{|\Pi|}$. Specifically, we take $\mX$ to be the set of all distributions over policies with support at most $d$, and $\mY$ be the confidence region in round $t$ of the algorithm. It is easy to see that both sets are closed and bounded by definition, therefore compact. Further, for each distribution $P\in \mX$, each \eos $\mu\in \mY$, and each resource $i$ we define $f(P,\mu)$ to be the corresponding LP-value, and $g_i(P,\mu) = c_i(P,\mu)-B/T$. Then $P\in H(\mu)$ if and only if $P$ is an LP-perfect distribution with respect to $\mu$, and $H(\mY)=\PotPerf_t$.

This completes the proof of Lemma~\ref{lm:app-compactness}.

\subsection{Proof of Lemma~\ref{lm:app-compactness-general}}
\label{susbec:compactness}

Suppose $x^*\in \R^{d_\tX}$ is an accumulation point of $H(\mY)$, i.e. there is a sequence $x_1, x_2,\ldots  \in H(\mY)$ such that $x_j \to x^*$. Note that $x^*\in \mX$ since $\mX$ is closed. We need to prove that $x^* \in H(\mY)$.

For each $j$, there is $y_j\in \mY$ such that $x_j \in H(y_j)$. Recall that $\mY$ is closed and bounded. Since $\mY$ is bounded, sequence $\{y_j\}_{j\in\N}$ contains a convergent subsequence. Since $\mY$ is closed, $\mY$ contains the limit of this subsequence. From here on, let us focus on this convergent subsequence.

Thus, we have proved that there exists a sequence of pairs $\{(x_j,y_j)\}_{j\in\N}$ such that
\begin{OneLiners}
\item $x_j \in H(y_j)$ and $y_j\in \mY$ for all $j\in\N$,
\item $x_j \to x^* \in \mX$ and $y_j \to y^* \in \mY$.
\end{OneLiners}
We will prove that $x^* \in H(y^*)$. For that, we need to prove two things:
(i) $g_i(x^*,y^*)\leq 0$ for each $i$, and
(ii) $f(x^*,y^*) = \sup_{x\in \mX} f(x,y^*)$.

First, $g_i(x^*,y^*)\leq 0$ for each $i$ because $g_i(x^*,y^*) = \lim_j g_i(x_j, y_j) \leq 0$ by continuity of $g_i$.

Second, we claim that $f(x^*,y^*) = \sup_{x\in \mX} f(x,y^*)$. For the sake of contradiction, suppose
    $\eps \triangleq \sup_{x\in \mX} f(x,y^*) - f(x^*,y^*) >0$.
By continuity of $f$, the following holds:
\begin{OneLiners}
\item $f(x^{**},y^*) = \sup_{x\in \mX} f(x,y^*)$ for some $x^{**}\in \mX$.
\item there exists an open neighborhood $S$ of $(x^*,y^*)$ on which $|f(x,y) - f(x^*,y^*)| < \eps/4$.
\item there is an open neighborhood $S'$ of $(x^{**},y^*)$ on which $|f(x,y) - f(x^{**},y^*)| < \eps/4$.
\end{OneLiners}
In particular, there are open balls $B_\tX,B'_\tX\subset \mX$ and $B_\tY \subset \mY$ such that
    $(x^*,y^*) \in B_\tX\times B_\tY \subset S$
and
    $(x^{**},y^*) \in B'_\tX \times B_\tY \subset S'$.
For a sufficiently large $j$ it holds that $x_j\in B_\tX$ and $y_j\in B_\tY$. It follows that
    $(x_j, y_j) \in S$ and $(x^{**},y_j) \in S'$.
Therefore:
\begin{align*}
f(x^*,y^*)
    &> f(x_j,y_j) - \eps/4      &\text{(since $(x_j,y_j)\in S$)}  \\
    &\geq f(x^{**},y_j) - \eps/4   &\text{(using the optimality of $x_j$)} \\
    &> f(x^{**},y^*)- \eps/2   &\text{(since $(x^{**},y_j)\in S'$)}.
\end{align*}
We obtain a contradiction which completes the proof.

\end{document}

% --- supplement: cbwk-supplement.tex ---

\twocolumn[
\icmltitle{Resourceful Contextual Bandits \\ (supplementary material)}

% It is OKAY to include author information, even for blind
% submissions: the style file will automatically remove it for you
% unless you've provided the [accepted] option to the icml2014
% package.
\icmlauthor{Ashwinkumar Badanidiyuru}{ashwinkumarbv@gmail.com}
\icmladdress{Cornell University}
\icmlauthor{John Lnagford}{jcl@microsoft.com}
\icmladdress{Microsoft Research}
\icmlauthor{Alex Slivkins}{slivkins@microsoft.com}
\icmladdress{Microsoft Research}

% You may provide any keywords that you
% find helpful for describing your paper; these are used to populate
% the "keywords" metadata in the PDF but will not be shown in the document
\icmlkeywords{Bandits, Constraints}

\vskip 0.3in
]

% general math
\newcommand{\Otilde}{\tilde{O}}
\renewcommand{\eqref}[1]{Equation~(\ref{#1})}
\renewcommand{\Re}{\mathbb{R}}
\newcommand{\C}{\mC}
\newcommand{\D}{\mD}
\newcommand{\eos}{expected-outcomes tuple\xspace}
\newcommand{\eoss}{expected-outcomes tuples\xspace}
\newcommand{\support}{\mathtt{support}}

\newcommand{\Distr}[1]{\mathbf{\Delta}_{#1}} % distribution over #1

% problems and algorithms
\newcommand{\ALG}{\texttt{ALG}\xspace}
\newcommand{\BwK}{\texttt{BwK}\xspace} % problem: BwK
\newcommand{\cBwK}{\ensuremath{\mathtt{RCB}}\xspace} % problem: Contextual BwK
\newcommand{\kMAB}{\CBalance} % our main algorithm
\newcommand{\CBalance}{\ensuremath{\mathtt{MixtureElimination}}\xspace} % our main algorithm

% notation in problem definition
\newcommand{\arms}{Y}
\newcommand{\contexts}{X}
\newcommand{\DX}{\mD_{\mathtt{X}}} % context distribution
\newcommand{\empir}[1]{\widetilde{#1}}  % importance-based estimates
\newcommand{\Ave}[1]{<#1>}  % averages

% notation in the algorithm spec
\newcommand{\PotPerf}{\mathbf{\Delta}} % potentially perfect distributions
\newcommand{\F}{\PotPerf} % for compatibility with BwK
\newcommand{\noiseProb}{q_0} % probability that our algo picks arm u.a.r.

% notation
%\newcommand{\domain}{\mathcal{M}_{\mathtt{feas}}} % feasible latent structures
\newcommand{\OPT}{\mathtt{OPT}}
\newcommand{\LPOPT}[1][\mathtt{LP}]{\OPT_{#1}}
\newcommand{\FPi}{\mF_{\Pi}} % {all distributions over \Pi}
%\newcommand{\allD}{\F_{\mathtt{all}}} % all distributions over arms

\newcommand{\Rew}{\mathtt{REW}} % realized reward
\newcommand{\LP}{\mathtt{LP}}

\newcommand{\stime}{\tau} % stopping time

\newcommand{\rad}{\mathtt{rad}} % confidence radius
\newcommand{\chernoffC}{C_{\mathtt{rad}}} % chernoff constant
\newcommand{\Conv}{\ensuremath{\mathtt{Conv}}}

\newcommand{\fbt}{\frac{B}{T}} %ratio of budget to rounds
\newcommand{\tfbt}{\tfrac{B}{T}}

%\newcommand{\maxLP}{M_{\mathtt{LP}}}

\newenvironment{lparray}%
{\begin{array}{l@{\hspace{8mm}}l@{\hspace{8mm}}l}}%
{\end{array}}
\newlength{\lplb}
\setlength{\lplb}{3mm}

%%%%%%%% Ashwin's notation %%%%%%%%%%%%%%
%\newcommand{\x}[1]{{\D}_{#1}}
\newcommand{\x}{{z}}
\newcommand{\adv}[1]{\ensuremath{\mathcal{I}_{#1}}\xspace}
\newcommand{\alladv}{\mathcal{A}}
%\newcommand{\alg}{\texttt{ALG}}
\newcommand{\reg}{\ensuremath{\mathtt{REG}}}
\newcommand{\Null}{\ensuremath{\mathtt{null}}}

%\newcommand{\empirval}[1]{\hat{#1}} %denotes the empirical revenue till round t
\newcommand{\empirval}[1]{\uppercase{#1}} %denotes the empirical revenue / consumption till round t
\newcommand{\expval}[1]{\textbf #1} % denotes the expected revenue / consumption in round t conditioned on P_t
\newcommand{\avgval}[1]{\overline{#1}} %denotes the average value of \expval till round t

%%%%%%%%%%%%%%%%%%%%%%

\begin{abstract}
We provide several discussions and proofs that are referred to from the full paper. 
\end{abstract}

\input{app-LP}

\section{\cBwK: applications and special cases}
\label{app:apps}

In this section we discuss the application domains of resource-constrained (contextual) bandits in more detail. We focus on the three main application domains: dynamic pricing, dynamic procurement, and dynamic ad allocation. A more extensive discussion of these and other application domains (in the non-contextual version) can be found in \cite{BwK-focs13,BwK-full}.

\xhdr{Dynamic pricing with limited supply.} The algorithm is a monopolistic seller with a limited inventory. In the basic version, there is a limited supply of identical items. In each round, a new customer arrives, the algorithm picks a price, and offers one item for sale at this price. The customer then either buys the item at this price, or rejects the offer and leaves. The ``context" represents the available information about the current customer, such as demographics, location, etc. The probability of selling at a given price for a given context (a.k.a. the \emph{demand distribution}) is fixed over time, but not known to the algorithm. The algorithm optimizes the revenue; it does not derive any utility from  the left-over items.

We represent this problem as an instance of \cBwK as follows. ``Actions" are the possible prices, and the ``resource constraint" is the number of items.  In each round, the outcome vector is a pair (reward, items sold); if the offered price is $p$, the outcome vector is $(p,1)$ if there is a sale, and $(0,0)$ otherwise.

Many generalizations of dynamic pricing have been studied in the literature. In particular, \cBwK subsumes a number of extensions. First, an algorithm can sell multiple items to the same customer, possibly with volume discounts or surcharges. Second, an algorithm can have multiple products for sale, with limited inventory of each. Third, it may be advantageous to offer bundles consisting of different products, possibly with non-additive pricing (mirroring the non-additive valuations of the customers).

\xhdr{Dynamic procurement on a budget.} The algorithm is a monopolistic buyer with a limited budget. The basic version is as follows. In each round, a new customer arrives, the algorithm picks a price, and offers to buy one item at this price. Then the customer either accepts the offer and sells the item at this price, or rejects the offer and leaves. The ``context" is the available information on the current customer. The probability of buying at a given price for a given context (a.k.a. the ``supply distribution") is fixed over time, but not known to the algorithm. The algorithm maximizes the number of items bought; it has no utility for the left-over money.

An alternative interpretation is that the algorithm is a contractor which hires workers to perform tasks, e.g. in a crowdsourcing market. In each round, a new worker arrives, the algorithm picks a price, and offers the worker to perform one task for this price; the worker then either accepts and performs the task at this price, or rejects and leaves. The relevant ``context" for a worker in a crowdsourcing market may include, for example, age, location, language, and task preferences.

Here, ``actions" correspond to the possible prices, and the ``resource constraint" is the buyer's budget.  In each round, the outcome vector is a pair (items bought, money spent); if the offered price is $p$, then the outcome vector is $(1,p)$ if the offer is accepted, and $(0,0)$ otherwise.
 
Dynamic procurement is a rich problem space, both for buying items and for hiring workers (see \citep{Crowdsourcing-PositionPaper13} for a discussion of the application to crowdsourcing markets). In particular, \cBwK subsumes a number of extensions of this basic setting. First, the algorithm may offer several tasks to the same worker, possibly at a discount. Second, there may be multiple types of tasks, each having a different value for the contractor; moreover, there may be additional budget constraints on each task type, or on various \emph{subsets} of task types. Third, a given worker can be offered a bundle of tasks, consisting of tasks of multiple types, possibly with non-additive pricing. Fourth, there is a way to model the presence of competition (other contractors).
 
\xhdr{Dynamic ad allocation with budgets.} The algorithm is an advertising platform. In the basic version, there is a fixed collection of ads to choose from. In each round, a user arrives, and the algorithm chooses one ad to display to this user. The user either clicks on this ad, or leaves without clicking. The algorithm receives a payment if and only if the ad is clicked; the payment for a given ad is fixed over time and known to the algorithm. The ``context" is the available information about the user and the page on which the ad is displayed. The click probability for a given ad and a given context is constant over time, but not known to the algorithm. 

Each ad belongs to some advertiser (who is the one paying the algorithm when this ad is clicked). Each advertiser may own multiple ads, and has a budget constraint: a maximal amount of money that can be spent on all his ads. Moreover, an advertiser may specify additional budget constraints on various subsets of the ads. The algorithm maximizes its revenue; it derives no utility from the left-over budgets.

Here, ``actions" correspond to ads, and each budget corresponds to a separate resource. In a round when the chosen ad $a$  is clicked, the reward is the corresponding payment $v$, and the resource consumption is $v$ for each budget that involves $a$, and $0$ for all other budgets. If the ad is not clicked, the reward and the consumption of each resource is $0$.

\cBwK also subsumes more advanced versions in which multiple non-zero outcomes are possible in each round. For example, the ad platform may record what happens \emph{after} the click, e.g. the time spent on the page linked from the ad and whether this interaction has resulted in a sale.

\bibliographystyle{icml2014}
\bibliography{bib-abbrv-short,bib-slivkins,bib-bandits,bib-AGT,bib-random}
% A.S.: for short conference names: "bib-abbrv" -> "bib-abbrv-short".